\documentclass{article}

\usepackage[preprint]{neurips_2026}

\usepackage[utf8]{inputenc} % allow utf-8 input
\usepackage[T1]{fontenc}    % use 8-bit T1 fonts
\usepackage{hyperref}       % hyperlinks
\usepackage{url}            % simple URL typesetting
\usepackage{booktabs}       % professional-quality tables
\usepackage{amsfonts}       % blackboard math symbols
\usepackage{nicefrac}       % compact symbols for 1/2, etc.
\usepackage{microtype}      % microtypography
\usepackage{xcolor}         % colors

\usepackage{natbib}
\usepackage{amsmath}
\usepackage{amssymb}
\usepackage{mathtools}
\usepackage{amsthm}

\usepackage{enumitem}

\usepackage{algorithm}
\usepackage{algpseudocode}
\algnewcommand{\INPUT}{\State \textbf{Input: }}
\algnewcommand{\OUTPUT}{\State \textbf{Output: }}
\algnewcommand{\STATE}{\State}
\algnewcommand{\WHILE}[1]{\While{#1}}
\algnewcommand{\ENDWHILE}{\EndWhile}
\allowdisplaybreaks
\usepackage{caption}
\usepackage{wrapfig}

\usepackage{microtype}
\usepackage{graphicx}
\usepackage{subcaption}
\usepackage{booktabs}

\usepackage[capitalize,noabbrev]{cleveref}

\theoremstyle{plain}
\newtheorem{theorem}{Theorem}[section]
\newtheorem{proposition}[theorem]{Proposition}
\newtheorem{lemma}[theorem]{Lemma}
\newtheorem{corollary}[theorem]{Corollary}
\theoremstyle{definition}

\theoremstyle{remark}

\title{Path-independent Flow Matching for Multi-parameter Generative Dynamics}

\author{
Francisco~Téllez\thanks{Equal contribution} \\
Université de Montréal, Mila
\And
AmirHossein~Zamani\footnotemark[1] \\
Concordia University, Mila
\And
Philippe~Martin\footnotemark[1] \\
Université de Montréal, Mila
\And 
Shuang~Ni \\
Université de Montréal, Mila
\And
Guy~Wolf \\
Université de Montréal, Mila
\And 
Eugene~Belilovsky \\
Concordia University, Mila
\And
Sina~Sanjari\thanks{Co-senior authors} \\
Royal Military College of Canada
\And
Yanlei~Zhang\footnotemark[2] \\
Queen's University, Mila
}

\begin{document}

\maketitle

%\renewcommand{\thefootnote}{}
%\footnotetext[0]{%
%\textsuperscript{*} Equal contribution. \textsuperscript{$\dag$} Co-senior authors%
%}

\begin{abstract}
  Flow Matching is a powerful framework for learning transport maps between probability distributions. Yet its standard single-parameter formulation is not designed to capture multi-parameter variations where the resulting transport should be path-independent. Path independence is crucial because it ensures that transformations depend only on the initial and target distributions, not on the specific path. In this work, we introduce \textit{Path-independent Flow Matching (PiFM)}, a method for learning vector fields whose induced flows yield path-independent transport between distributions. We show that PiFM generalizes Flow Matching to higher-dimensional parameter domains while enforcing structural conditions that ensure consistency of composed transformations. In addition, we show that, under suitable assumptions, PiFM approximates the Wasserstein barycenter, linking the framework to a notion of distributional interpolation. To enable practical training, we propose a tractable, simulation-free objective that regresses onto multi-parameter conditional probability paths. We showcase empirically that PiFM outperforms other approaches on both synthetic and real world data in interpolating path-independent trajectories and generating desired out of distribution samples. %Our code is available at \url{https://anonymous.4open.science/r/Pi_flow_matching-C83F/}.
\end{abstract}

\vspace{-0.8em}

\section{Introduction}

\vspace{-0.8em}

Learning transformations between probability distributions is a central problem in modern machine learning, with applications spanning generative modeling, domain adaptation, trajectory inference, and scientific discovery. In these settings, one seeks not only to match target distributions, but also to understand how distributions relate through structured, continuous transformations such as the flows of vector fields. Recent neural approaches achieve this by reducing population-level objectives to sample-wise computations, enabling scalable training via stochastic optimization. This paradigm underlies VAEs \citep{kingma2022autoencoding, Diederik_2019}, GANs \citep{goodfellow2014generative, Karthika2021}, diffusion models \citep{ho2020denoising, song2021score}, and continuous-time transport methods \citep{chen2019neural, grathwohl2018ffjord, lipman2023, tong2023improving, tong2024simulation}, and has led to growing interest in modeling distributions through dynamical systems.

A broad class of methods, including continuous normalizing flows \citep{chen2019neural, grathwohl2018ffjord}, diffusion and score-based models \citep{ho2020denoising, song2021score, koehler2022statistical, berner2022an} , and Schrödinger bridges \citep{leonard2013survey, debortoli2023diffusionschrodinger, liu2023i2sbimage}, can be interpreted as learning time-varying transport dynamics that push a source distribution toward a target. Among these, Flow Matching \citep{lipman2023, liu2022} provides a particularly simple and simulation-free training principle by directly regressing neural vector fields toward reference conditional velocities induced by a prescribed probability path. Conditional Flow Matching \citep{tong2023improving, tong2024simulation} and related extensions further generalize this idea, unifying several continuous-time generative frameworks by allowing transport between a broader family of source and target distributions.

Despite their success, existing flow matching methods are inherently limited to a single time parameter, implicitly assuming that all variations can be ordered along a single temporal axis. In many real-world settings, however, data are influenced by multiple independent processes that evolve concurrently and should compose in a path-independent manner. This arises naturally in settings such as biological perturbations acting on time-evolving cellular populations in single-cell genomics \citep{lotfollahi2023predicting, jiang2025systematic}, compositional attributes like shape and color in images \citep{wang2024scene}, and modular transformations in physical systems \citep{neary2023compositional}. Naively composing independently trained flows generally leads to non-commutative dynamics, where the final distribution depends on the order of application, limiting interpretability and downstream geometric reasoning as tasks such as analogy completion or attribute extrapolation become ill-defined when transformations fail to commute. 

\begin{figure*}[t]
\centering
\begin{subfigure}{0.30\textwidth}
\centering
\includegraphics[scale=0.28]{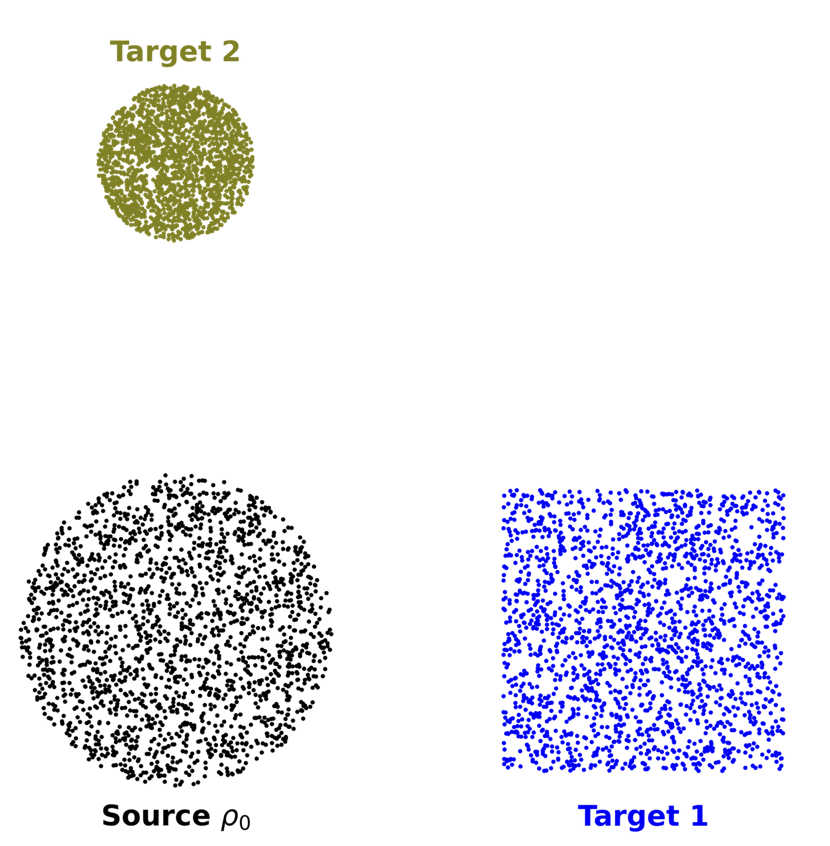}
\caption{Data}
\label{fig:schematic1}
\end{subfigure}
\begin{subfigure}{0.30\textwidth}
\centering
\includegraphics[scale=0.28]{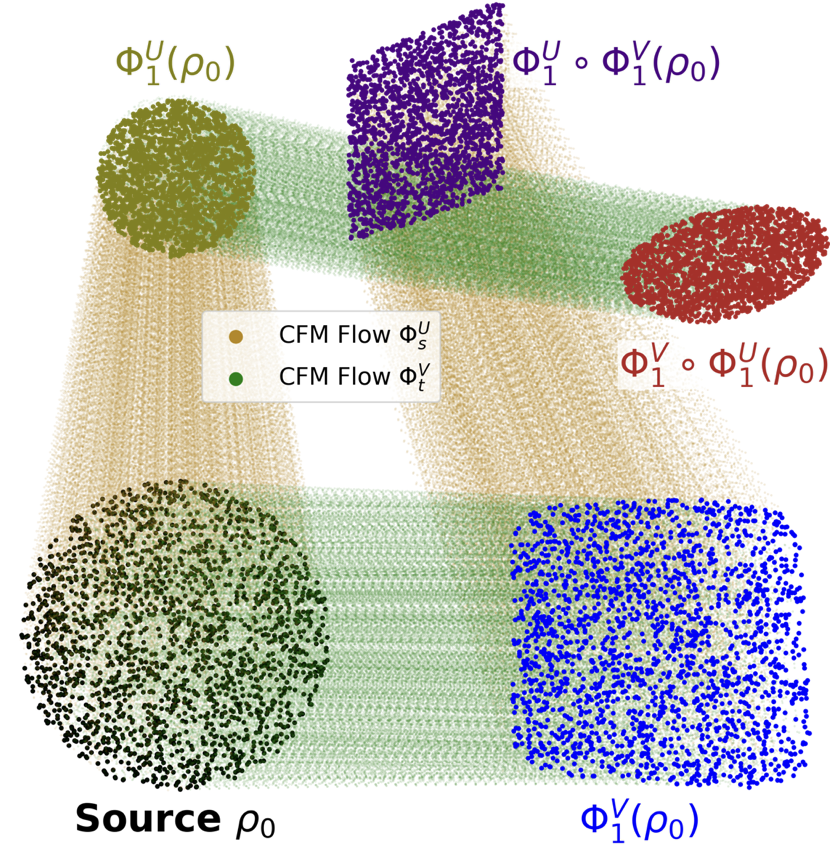}
\caption{CFM}
\label{fig:schematic2}
\end{subfigure}
\begin{subfigure}{0.30\textwidth}
\centering
\includegraphics[scale=0.28]{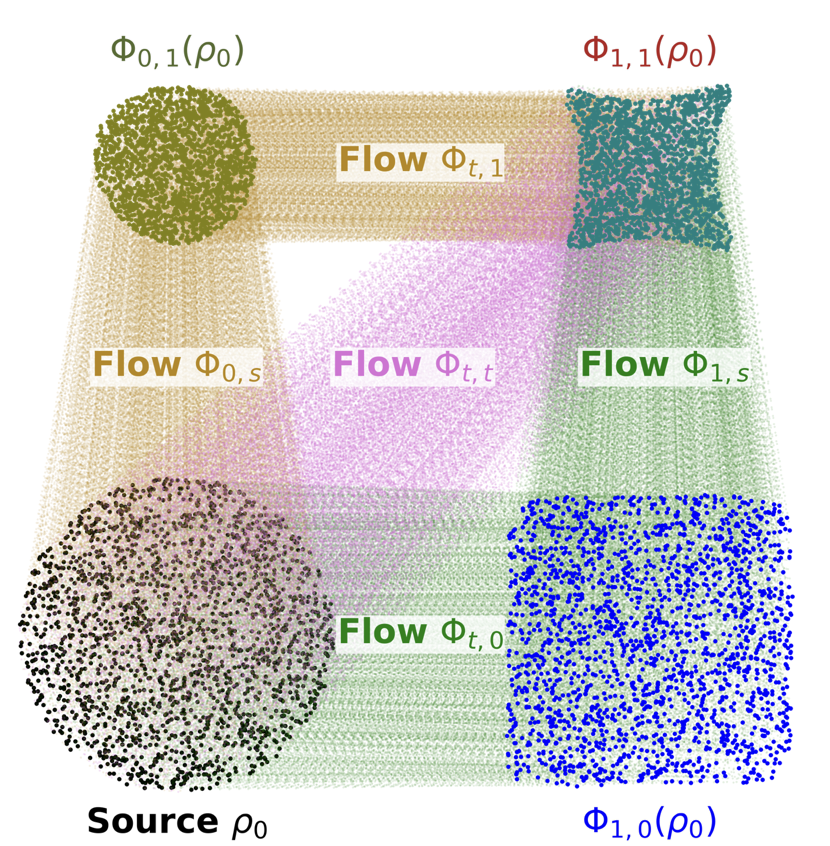}
\caption{PiFM (ours)}
\label{fig:PiFM}
\end{subfigure}
\caption{Learned trajectories for multi-marginal transport between a source (unit disc) and two targets (square, small disc). Traditional CFMs do not commute by composing flows of two separate processes, whereas PiFM (ours) is able to model commutative multi-process trajectories.}
\end{figure*}

To address this limitation, we introduce \emph{Path-independent Flow Matching (PiFM)}, a framework for learning multi-parameter generative dynamics driven by vector fields with path-independent transport of distributions. PiFM models probability paths over multiple temporal domains and enforces path independence through structural conditions on the vector fields, echoing classical commutativity and integrability criteria for flows \citep{lee2003introduction, warner1983foundations, abraham1984}. This yields a well-defined notion of composing transformations, enabling consistent interpolation, extrapolation, and analogy-like operations in distribution space. We further establish a connection between PiFM and Wasserstein barycenters \citep{agueh2011barycenters, bigot2017characterization}, interpreting PiFM as a natural generalization of multi-marginal generative modeling \citep{cao2019multimarginal,albergo2023multimarginal, Beier2022, pass2015multimarginal}, extending beyond the simplex structured parameter space in the Wasserstein Barycenter setting. Lastly, we evaluate PiFM through several computational experiments ranging from image translation and adaptive target generation to synthetic datasets for demonstration purposes. 

Our contributions are summarized as follows:
\vspace{-0.3em}
\begin{enumerate}[leftmargin=14pt]
\item \textbf{Path-independent Flow Matching (PiFM):} We introduce PiFM, a multi-parameter Flow Matching framework driven by vector fields. We establish the existence of path-independent flows and derive the sufficient integrability conditions ensuring consistency.
\vspace{-0.3em}
\item \textbf{Conditional-to-Marginal Flow Consistency:} We show how conditional probability paths induce coherent marginal dynamics in the multi-parameter setting. This leads to a training objective that combines vector field regression with a path-independence regularizer, which we further show becomes unnecessary in the affine case.
\vspace{-0.3em}
\item \textbf{Connection to Wasserstein Barycenters:} We establish a connection between PiFM and Wasserstein barycenters, enabling both interpolation and extrapolation of distributions.
\vspace{-0.3em}
\item\textbf{Empirical Validation:} We demonstrate the effectiveness of PiFM on synthetic and image translation tasks. PiFM outperforms prior methods such as Meta Flow Matching~\citep{atanackovic2025meta} and Curly Flow Matching~\citep{petrovic2025curly} in low-dimensional domain shift settings, and achieves improved compositionality and consistency in high-dimensional multi-attribute image translation. We further validate PiFM on a dense single-cell reprogramming dataset~\citep{schiebinger2019optimal}, treating experimental day and pluripotency score as two interacting flows to test path-independent generation in a high-dimensional, multi-stage biological process.

\end{enumerate}
\vspace{-0.7em}
The paper is organized as follows. Section~\ref{sec:background} reviews the necessary background on flow matching. Section~\ref{sec:method} introduces the PiFM framework, its theoretical properties, and its connection to Wasserstein barycenters. Section~\ref{sec:experiments} presents experimental results in both low-dimensional synthetic domain-shift settings and high-dimensional image translation tasks and single-cell application.

\vspace{-0.8em}

\section{Background: Flow Matching and Minibatch Optimal Transport}\label{sec:background}

\vspace{-0.8em}

The primary objective in distribution matching is to recover a transport map $\psi: \mathbb{R}^d \to \mathbb{R}^d$ for some $d\geq 1$ that transports a source density $q(a)$ to a target density $q(b)$. This push-forward requirement, denoted $q(b) = [\psi]_{\#}(q(a))$, underpins both optimal transport and modern generative modeling. In the empirical setting, where we lack analytical access to densities and observe only finite datasets $\{x_0^i\}_{i=1}^N \sim q(a)$ and $\{x_1^j\}_{j=1}^M \sim q(b)$, the challenge lies in approximating $\psi$ from discrete samples. A powerful approach to model $\psi$ is through dynamical systems, specifically ordinary differential equations (ODEs), which motivate the Flow Matching approach. 

Flow Matching \citep{liu2022, lipman2023, tong2023improving, albergo2023} provides a framework for learning a dynamical system by establishing a correspondence between a probability density path $p: [0,1] \times \mathbb{R}^d \to \mathbb{R}_{\geq 0}$ and a vector field $v:[0,1] \times \mathbb{R}^d \to \mathbb{R}^d$. We say that $p_t$ is generated by $v_t$ if the pair $(v,p)$ satisfies the following \emph{continuity equation}
\begin{equation}\label{eq:continuity}
\frac{\partial p_t(x)}{\partial t} + \nabla \cdot (p_t(x) v_t(x)) = 0,
\end{equation}
subject to the boundary condition $p_0=q(a)$ and $p_1=q(b)$.

While solving for the marginal vector field directly is intractable, \emph{Conditional Flow Matching} (CFM) circumvents this by exploiting the linearity of \eqref{eq:continuity} and considering the marginal densities $p_t(x)$ as a mixture of a simpler conditional densities $p_t(x|z)$ conditioned on a latent variable $z$ (which might be a specific source-target data pair). In the CFM setting, the conditional densities $p_t(x|z)$ are induced by a conditional vector field $u_t(x|z)$. Consider a latent variable $z$ with a density $q$. Then, $p_t(x) = \int p_t(x | z) q(z)\, dz,$
and by defining the unconditional vector field  $u_t(x)$ by $u_t(x) := \mathbb{E}_{q(z)} \left[  \frac{u_t(x | z)p_t(x | z)}{p_t(x)} \right]$, \citep{lipman2023} showed that when the conditional vector field $u_t(x|z)$ generates the conditional probability path $p_{t}(x|z)$, the unconditional vector field $u_{t}(x)$ generates the marginal probability path $p_t(x)$.

 For CFM, a vector field $v^{\theta}(t,x)$ is trained to approximate the intractable unconditional vector field by regressing onto the conditional fields. The CFM learning objective is to find $\theta$ that minimizes the loss function $\mathcal{L}_{\mathrm{CFM}}(\theta)$ given by 
\begin{equation*}
\mathcal{L}_{\mathrm{CFM}}(\theta)
=
\int \mathbb{E}_{z\sim q,\, x\sim p_t(\cdot\mid z)}
\left[\left\| v_\theta(t,x) - u_t(x| z) \right\|^2 \right] dt.
\end{equation*}
This formulation enables simulation-free training of continuous normalizing flows by sampling $t\sim\mathcal{U}(0,1)$, $z\sim q(z)$, and $x\sim p_t(\cdot| z)$, relying on efficient sampling from  $q(z)$.

The density $q(z)$ can be specified in different ways. In independent CFM (I-CFM), $q(z)=q(a)q(b)$, while in optimal transport CFM (OT-CFM), $q(z)=\pi(a,b)$, where $\pi$ is an optimal transport coupling between $q(a)$ and $q(b)$. Training under I-CFM uses independently sampled pairs $(a,b)$, whereas OT-CFM requires sampling from $\pi$, typically approximated via mini-batch optimal transport due to the cubic time and quadratic memory cost of full OT \citep{Cuturi2013SinkhornDL}. Despite being approximate, mini-batch OT preserves the transport geometry in expectation and performs well empirically \citep{pmlr-v139-fatras21a}; the exact OT-CFM objective is recovered when the batch size equals the full dataset.

\vspace{-0.8em}

\section{Multi-parameter Flow Matching and Path Independence}\label{sec:method}

\vspace{-0.8em}

We begin by introducing a multi-parameter flow matching formulation. To simplify our notations, we restrict our exposition to the case of two independent parameters, denoted by $t$ and $s$ (see Section \ref{sec:generalization} for details on the more general case). Then, we introduce the PiFM framework, which extends CFM to a multi-parameter setting. We present the modeling choices and assumptions that form the foundation of PiFM, and show how these elements collectively ensure coherent marginal generation and path-independent transport of distributions. Finally, we describe how to train PiFM using a deep learning approach and establish a connection to the Wasserstein Barycenter. Proofs are included in the Appendix~\ref{sec:Proofs_of_theorems} and Appendix~\ref{sec:appendix_wb_relation}.

\vspace{-0.4em}

\subsection{Multi-parameter Flows and Path Independence}\label{sec:3.1}

\vspace{-0.4em}

Suppose we have two continuously differentiable vector fields $u,v:[0,1]^{2}\times \mathbb{R}^{d}\to \mathbb{R}^{d}$ depending on parameters $t$ and $s$ with associated ODEs:
\begin{equation}
    \frac{\partial \phi}{\partial t}(t, s)=u_{t, s}(\phi(t, s)), \hspace{0.5em} \forall s\in [0,1] \hspace{0.5em};\hspace{0.5em} \frac{\partial \psi}{\partial s}(t, s)=v_{t, s}(\psi(t, s)), \hspace{0.5em} \forall t\in [0,1]. \label{eq:phi_ts}
\end{equation}
With initial conditions $\phi_{0,0}(x)=x$ and $\psi_{0,0}(x)=x$, both equations are well-defined at $(t, s)=(0,0)$, yielding the  functions $\phi_{t,0}$ and $\psi_{0,s}$. These solutions can then be used to define consistent initial conditions along the axes, namely $\phi_{0,s}(x)=\psi_{0,s}(x)$ and $\psi_{t,0}(x)=\phi_{t,0}(x)$, which in turn determine the full solutions $\phi_{t, s}$ and $\psi_{t, s}$. Note that this construction proceeds by solving ODEs sequentially, existence and uniqueness at each stage are guaranteed by standard ODE theory (see, e.g., \citep{wiggins2003introduction}). Associated with these solutions, we define the flow maps $\Phi,\Psi:[0,1]^{2}\times \mathbb{R}^{d}\to \mathbb{R}^{d}$ by $\Phi_{t, s}(x)=\phi_{t, s}(x)$ and $\Psi_{t, s}(x)=\psi_{t, s}(x)$.

Analogous to the one-parameter setting, the vector fields $u_{t, s}$ and $v_{t, s}$ generate densities $p_{t, s}$ and $q_{t, s}$, respectively, provided that the following continuity equations hold for all $t,s\in [0,1]$:
%\begin{align}
 %   \frac{\partial p_{t, s}(x)}{\partial t} + \nabla \cdot \big(p_{t, s}(x)\,u_{t, s}(x)\big) &= 0, \hspace{1em} \forall s \in [0,1], \label{eq:distinct-generation1} \\
  %  \frac{\partial q_{t, s}(x)}{\partial s} + \nabla \cdot \big(q_{t, s}(x)\,v_{t, s}(x)\big) &= 0, \hspace{1em} \forall t \in [0,1].\label{eq:distinct-generation2}
%\end{align}
\begin{align}
    \frac{\partial p_{t, s}(x)}{\partial t} + \nabla \cdot \big(p_{t, s}(x)\,u_{t, s}(x)\big) &= 0 \qquad
    \frac{\partial q_{t, s}(x)}{\partial s} + \nabla \cdot \big(q_{t, s}(x)\,v_{t, s}(x)\big) = 0.\label{eq:distinct-generation}
\end{align}
Similarly to the definition of initial conditions for ODEs, we prescribe boundary conditions along the coordinate axes. Given an initial density $\rho_1$ and target densities $\rho_2$ and $\rho_3$, we define the boundary conditions by setting $p_{0,s}=(\Psi_{0,s})_{\#}\rho_1$, $p_{1,s}=(\Psi_{1,s})_{\#}\rho_2$, $q_{t,0}=(\Phi_{t,0})_{\#}\rho_1$ and $q_{t,1}=(\Phi_{t,1})_{\#}\rho_3$. With these constructions, the continuity equations yield consistent transport of $p_{t, s}$ along the flow $\Phi_{t, s}$ and of $q_{t, s}$ along the flow $\Psi_{t, s}$.

We say the transport flow is distributionally path-independent (or commutative) if 
\begin{equation}\label{eq:path_independence}
    (\Phi_{t, s})_{\#} \circ (\Psi_{0,s})_{\#} p_{0,0}
    =
    (\Psi_{t, s})_{\#} \circ (\Phi_{t,0})_{\#} p_{0,0}, \hspace{1em} \forall t,s\in [0,1].
\end{equation}
Intuitively, \eqref{eq:path_independence} states that moving first along the \(t\)-direction while fixing $s=0$ using the flow $\Phi_{t, s}$, and then along the $s$-direction using $\Psi_{t, s}$, yields the same result as performing the operations in the reverse order. The implication this has for path independence  is that if we consider a parametrized path $\gamma(l):[0,1]\to [0,1]^2$, with $\gamma(l)=(\gamma^{(1)}(l),\gamma^{(2)}(l))=(t(l),s(l))$, then \eqref{eq:path_independence} implies that the order in which we apply the transport maps $\Phi_{\#}$ and $\Psi_{\#}$ does not matter: we always arrive at the same distribution at the point $(t, s)$ along the trajectory prescribed by $\gamma$. This is formalized via
\begin{equation*}
    (\Phi_{\gamma(l)})_{\#} \circ (\Psi_{0,\gamma^{(2)}(l)})_{\#} p_{0,0}
    =
    (\Psi_{\gamma(l)})_{\#} \circ (\Phi_{\gamma^{(1)}(l),0})_{\#} p_{0,0}.
\end{equation*}
To preserve distributional path-independence, it is sufficient if there exists a single probability density path that satisfies continuity equations \eqref{eq:distinct-generation}, that is, when $p_{t, s}$ and $q_{t, s}$ coincide for all $t$ and $s$ in $[0,1]$. We formally state this as a lemma.

\begin{lemma}[Path-independent transport of distributions]\label{lem:path}
    Consider $u_{t, s}$ and $v_{t, s}$ vector fields. If there exists a unique generated probability density path $p_{t, s}$ satisfying \eqref{eq:distinct-generation}, then the associated flows $\Phi_{t, s}$ and $\Psi_{t, s}$ satisfy \eqref{eq:path_independence}. 
\end{lemma}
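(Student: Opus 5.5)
The plan is to show that each side of \eqref{eq:path_independence} equals $p_{t,s}$, the unique density path generated jointly by $u_{t,s}$ and $v_{t,s}$.

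The main tool is the standard one-parameter correspondence already used in Section~\ref{sec:background}. If a $C^1$ vector field generates a density path through the continuity equation \eqref{eq:continuity}, then that path is the push-forward of its initial density under the flow of the field. Uniqueness here holds for Lipschitz/$C^1$ fields via the method of characteristics.

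\textbf{Step 1: the axes.} At $s=0$, the first equation in \eqref{eq:distinct-generation} is a one-parameter continuity equation in $t$ for $u_{\cdot,0}$, with initial density $p_{0,0}$. Its solution is therefore $p_{t,0}=(\Phi_{t,0})_{\#}p_{0,0}$. In the same way, setting $t=0$ in the second equation gives $p_{0,s}=(\Psi_{0,s})_{\#}p_{0,0}$. Here we use that $p$ and $q$ coincide, which is exactly what the hypothesis that a single path satisfies both equations provides.

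\textbf{Step 2: the interior.} Fix $s$. The map $t\mapsto p_{t,s}$ solves the $t$-continuity equation driven by $u_{\cdot,s}$, with initial value $p_{0,s}=(\Psi_{0,s})_{\#}p_{0,0}$. We need the flow started at $\phi_{0,s}=\psi_{0,s}$ from the construction around \eqref{eq:phi_ts}, so that $\Phi_{t,s}$ solves $\partial_t\Phi_{t,s}=u_{t,s}(\Phi_{t,s})$ with $\Phi_{0,s}=\Psi_{0,s}$. Read this way, the left side of \eqref{eq:path_independence} is transport of $p_{0,0}$ along $\Psi_{0,s}$ and then along the $u$-flow in $t$. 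The characteristic argument gives $p_{t,s}=(\text{$u$-flow from $0$ to $t$})_{\#}p_{0,s}$, so the left side equals $p_{t,s}$.

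Symmetrically, fix $t$. The map $s\mapsto p_{t,s}$ solves the $s$-equation driven by $v_{t,\cdot}$, with initial value $p_{t,0}=(\Phi_{t,0})_{\#}p_{0,0}$ from Step 1. So the right side also equals $p_{t,s}$, and the two sides agree.

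\textbf{Main obstacle: notation.} The paper writes the composition $(\Phi_{t,s})_{\#}\circ(\Psi_{0,s})_{\#}$, but $\Phi_{t,s}$ already includes the initial condition $\Psi_{0,s}$. Read literally, the composition would apply $\Psi_{0,s}$ twice. I would make this precise by writing $\Phi_{t,s}=\Phi^{s}_{0\to t}\circ\Psi_{0,s}$, where $\Phi^{s}_{0\to t}$ is the time-$t$ flow of $u_{\cdot,s}$ starting from the identity. The left side of \eqref{eq:path_independence} is then interpreted as $(\Phi^{s}_{0\to t}\circ\Psi_{0,s})_{\#}p_{0,0}$, and the right side analogously.

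With that reading fixed, the only analytic input is uniqueness for the linear continuity equation with $C^1$ fields on $[0,1]\times\mathbb{R}^d$. This holds via the Lagrangian representation (push-forward under the characteristic flow) together with uniqueness of ODE solutions. I would cite it rather than reprove it.
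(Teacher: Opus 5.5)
Your proposal is correct and follows essentially the same route as the paper: show that each sequential composition (along one axis, then the other) reproduces the jointly generated density $p_{t,s}$, so both sides of \eqref{eq:path_independence} coincide. You additionally make explicit the step the paper leaves implicit, namely that a solution of a one-parameter continuity equation with a $C^1$ field is the push-forward of its initial datum under the flow. Your reading of $(\Phi_{t,s})_{\#}\circ(\Psi_{0,s})_{\#}$ as the relative $u$-flow applied after $\Psi_{0,s}$ also matches how the appendix proof actually uses these maps.
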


Lemma \ref{lem:path} provides a sufficient condition for path-independent transport of distributions. However, other sufficient conditions for this property exist. One way this can occur is when the system defined by \eqref{eq:phi_ts} admits a unique joint trajectory $\phi_{t, s}$. This implies pointwise path independence, while for our formulation we only need distributional path independence, which is a weaker property. A necessary and sufficient condition for a unique solution $\phi_{t, s}$ to exist is provided in  Proposition~19.29 of \citep{lee2003introduction} which uses the partial differential equation described in \eqref{eq:integrability}. 

\begin{proposition}[(Pointwise) path-independence equation]\label{prop:alternative_path}
    Suppose that the vector fields $u_{t, s}$ and $v_{t, s}$ are satisfy
    \begin{equation}\label{eq:integrability}
       \partial_s u_{t, s} -\partial_t v_{t, s} = \left[u_{t, s},v_{t, s}\right], \quad \forall x\in \mathbb{R}^d
    \end{equation}
  where $\partial_s u_{t, s}:={\partial u_{t, s}(x)}/{\partial s}$,  $\partial_t v_{t, s}:={\partial v_{t, s}(x)}/{\partial t}$, and $[\cdot,\cdot]$ is the Lie Bracket, $\left[u_{t, s},v_{t, s}\right]:=\left(\nabla_x u_{t,s}(x)\right)v_{t,s}(x) -\left(\nabla_x v_{t,s}(x)\right)u_{t,s}(x)$. Then there exists a unique flow $\Phi_{t,s}$ which satisfies \eqref{eq:path_independence}.
\end{proposition}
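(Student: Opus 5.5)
The plan is to prove the stronger, pointwise statement. We build a single map $\Phi_{t,s}$ on $[0,1]^2\times\mathbb{R}^d$ that solves both ODEs in \eqref{eq:phi_ts} simultaneously:
\[
\partial_t\Phi_{t,s}=u_{t,s}\circ\Phi_{t,s},\qquad \partial_s\Phi_{t,s}=v_{t,s}\circ\Phi_{t,s},\qquad \Phi_{0,0}=\mathrm{id}.
\]
Once such a $\Phi$ exists, the $t$-flow and the $s$-flow coincide with it, so $\Psi_{t,s}=\Phi_{t,s}$. Both sides of \eqref{eq:path_independence} are then the push-forward of $p_{0,0}$ by the same map, which gives distributional path independence. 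This is the Frobenius-type argument behind Proposition~19.29 of \citep{lee2003introduction}, and I would run it concretely rather than only cite it.

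\textbf{Step 1 (construction).} Follow the sequential construction of Section~\ref{sec:3.1}. First solve the $s$-ODE along $t=0$ to obtain $\Phi_{0,s}=\psi_{0,s}$. Then, for each fixed $s$, solve the $t$-ODE $\partial_t\Phi_{t,s}=u_{t,s}(\Phi_{t,s})$ with initial value $\Phi_{0,s}$. Standard ODE theory gives existence, uniqueness and $C^1$ dependence on parameters, since $u,v$ are $C^1$. Here I assume global existence on $[0,1]$, for example via a linear growth condition, and I would state this as a standing assumption. By construction $\Phi$ satisfies the $t$-equation everywhere and the $s$-equation on the axis $t=0$.

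\textbf{Step 2 (key step: the $s$-equation propagates).} Define the defect $D(t,s,x):=\partial_s\Phi_{t,s}(x)-v_{t,s}(\Phi_{t,s}(x))$, which vanishes at $t=0$. Differentiate $D$ in $t$, commuting $\partial_t\partial_s\Phi=\partial_s\partial_t\Phi$; this needs $C^2$ regularity of $\Phi$ in $(t,s)$, or a mollification argument. We get
\[
\partial_t D=\partial_s u(\Phi)+\nabla u(\Phi)\,\partial_s\Phi-\partial_t v(\Phi)-\nabla v(\Phi)\,u(\Phi).
\]
Substitute $\partial_s\Phi=D+v(\Phi)$. Then
\[
\partial_t D=\nabla u(\Phi)\,D+\big(\partial_s u-\partial_t v-[u,v]\big)(\Phi),
\]
using the paper's sign convention $[u,v]=(\nabla u)v-(\nabla v)u$. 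By \eqref{eq:integrability} the bracketed term vanishes. So $D$ solves the linear ODE $\partial_t D=\nabla u(\Phi)D$ with $D|_{t=0}=0$, and uniqueness (or Grönwall) gives $D\equiv0$.

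\textbf{Step 3 (uniqueness and conclusion).} Any $C^1$ map satisfying both equations with $\Phi_{0,0}=\mathrm{id}$ must agree with $\psi_{0,s}$ on $t=0$, and then with the $t$-flow for every $s$. So it is the map from Step 1, which is therefore unique. In particular $\Psi_{t,s}$, which is built symmetrically, equals $\Phi_{t,s}$. Along any path $\gamma$ the composed transports reduce to $(\Phi_{\gamma(l)})_\#p_{0,0}$, which proves \eqref{eq:path_independence}.

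The main obstacle is Step 2. It requires justifying the interchange of mixed partials of $\Phi$ and tracking the sign convention of the Lie bracket so that the integrability condition cancels exactly. I would first verify the computation formally and then handle regularity by differentiating the integral form of the ODE with respect to $s$, which needs only $C^1$ data.
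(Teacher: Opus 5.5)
\textbf{Step 2 has a sign error, and as a result the statement as written cannot be proved.}

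Redo the substitution. From $\partial_t D=\partial_s u(\Phi)+\nabla_x u(\Phi)\,\partial_s\Phi-\partial_t v(\Phi)-\nabla_x v(\Phi)\,u(\Phi)$ and $\partial_s\Phi=D+v(\Phi)$ you get
\[
\partial_t D=\nabla_x u(\Phi)\,D+\bigl(\partial_s u-\partial_t v+(\nabla_x u)v-(\nabla_x v)u\bigr)(\Phi)
=\nabla_x u(\Phi)\,D+\bigl(\partial_s u-\partial_t v+[u,v]\bigr)(\Phi),
\]
using the paper's convention $[u,v]=(\nabla_x u)v-(\nabla_x v)u$. Under the stated hypothesis the forcing term is therefore $2[u,v](\Phi)$, not $0$, so $D$ need not vanish.

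This is not a repairable slip. Equality of mixed partials forces the compatibility condition for $\partial_t\Phi=u(\Phi)$, $\partial_s\Phi=v(\Phi)$ to be $\partial_s u-\partial_t v=(\nabla_x v)u-(\nabla_x u)v$. That is the negative of what the statement assumes, and with the statement's sign the claim is false. Here is a counterexample.
\begin{itemize}
\item Take $d=1$ (where every reading of $\nabla_x$ agrees), $u_{t,s}(x)=x$ and $v_{t,s}(x)=e^{-t}$.
\item Then $\partial_s u-\partial_t v=e^{-t}=(\nabla_x u)v-(\nabla_x v)u$, so the hypothesis holds.
\item The sequential construction of Section~\ref{sec:3.1} gives $\phi_{t,s}(x)=e^{t}(x+s)$ and $\psi_{t,s}(x)=e^{t}x+e^{-t}s$.
\item These maps differ by the translation $s(e^{t}-e^{-t})\neq 0$ for $t,s>0$. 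No probability measure is invariant under a nonzero translation, so $(\phi_{t,s})_\#p_{0,0}\neq(\psi_{t,s})_\#p_{0,0}$ for every $p_{0,0}$, and \eqref{eq:path_independence} fails.
\item Your own defect equation confirms this: it yields $D=e^{t}-e^{-t}$.
\item With $v_{t,s}(x)=e^{t}$ instead, which satisfies the correctly signed condition, one gets $\psi_{t,s}=\phi_{t,s}$.
\end{itemize}

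What you can prove is the version with the standard bracket $[X,Y]=XY-YX$, whose coordinate form is $(\nabla_x Y)X-(\nabla_x X)Y$. The Frobenius-type result the paper cites can only deliver this condition; the paper's proof just invokes Lee's Proposition~19.29 without computation. Replace the hypothesis by $\partial_s u-\partial_t v=(\nabla_x v)u-(\nabla_x u)v$ and your Steps~1--3 go through unchanged:
\begin{itemize}
\item the forcing term vanishes identically;
\item $D$ solves a homogeneous linear ODE with zero data at $t=0$, so $D\equiv 0$;
\item uniqueness of each one-parameter ODE identifies $\Psi$ with $\Phi$.
\end{itemize}
Handling the mixed partials through the variational equation $\partial_t(\partial_s\Phi)=\partial_s u(\Phi)+\nabla_x u(\Phi)\,\partial_s\Phi$, as you suggest, needs only $C^1$ data. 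So with the corrected sign, your argument is a self-contained proof of what the paper only cites. You correctly flagged the bracket sign as the main risk; it is exactly where both the argument and the statement break.
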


\vspace{-0.4em}

\subsection{Consistent and Path-independent Transport of Distributions in PiFM}\label{sec:cond_to_marg}

\vspace{-0.4em}

We consider data distributions $q(a)$, $q(b)$ and $q(c)$, and the joint density $q(z)$ with $z=(a,b,c)$. In contrast to the traditional flow matching framework, we consider the probability density path $p:[0,1]^{2}\times \mathbb{R}^{d}\to \mathbb{R}_{\geq 0}$ of two-parameter with boundry conditions $p_{0,0}(x)=q(a)$, $p_{1,0}(x)=q(b)$ and $p_{0,1}(x)=q(c)$. Analogous to CFM, we condition on $z$ yielding,
\begin{equation}
    p_{t, s}(x)=\int p_{t, s}(x|z)q(z)dz,  \label{eq:cond_prob}
\end{equation}
where the conditional density path $p_{t, s}(x|z)$ is induced by some conditional vector fields $u_{t, s}(x|z)$ and $v_{t, s}(x|z)$. We seek two vector fields $u_{t, s},v_{t, s}:[0,1]^{2}\times \mathbb{R}^{d}\to \mathbb{R}^{d}$ that generate $p_{t,s}$ by satisfying the continuity equations for all $t,s\in [0,1]$
%\begin{align}
 %   \frac{\partial p_{t, s}(x)}{\partial t} + \nabla \cdot \big(p_{t, s}(x)\,u_{t, s}(x)\big) &= 0, \hspace{0.5em} \forall s \in [0,1], \label{eq:equal-generation1} \\
  %  \frac{\partial p_{t, s}(x)}{\partial s} + \nabla \cdot \big(p_{t, s}(x)\,v_{t, s}(x)\big) &= 0, \hspace{0.5em} \forall t \in [0,1].\label{eq:equal-generation2}
%\end{align}
\begin{align}
    \frac{\partial p_{t, s}(x)}{\partial t} + \nabla \cdot \big(p_{t, s}(x)\,u_{t, s}(x)\big) &= 0, \qquad
    \frac{\partial p_{t, s}(x)}{\partial s} + \nabla \cdot \big(p_{t, s}(x)\,v_{t, s}(x)\big) = 0.\label{eq:equal-generation}
\end{align}
If the two equations above hold, then the corresponding flows satisfy the distributional path-independence property. Given conditional vector fields $u_{t, s}(x|z)$ and $v_{t, s}(x|z)$, we model the unconditional vector fields by
\begin{equation}
    u_{t, s}(x)=\mathbb{E}_{q(z)}\left[\frac{u_{t, s}(x\mid z)p_{t, s}(x\mid z)}{p_{t, s}(x)} \right] \hspace{0.5em},\hspace{0.5em} v_{t, s}(x)=\mathbb{E}_{q(z)}\left[\frac{v_{t, s}(x\mid z)p_{t, s}(x\mid z)}{p_{t, s}(x)} \right]. \label{eq:u_ts_v_ts}
\end{equation}
We note that the precise forms of the conditional vector fields $u_{t, s}(x|z)$ and $v_{t, s}(x|z)$ are not prescribed a priori and left as a user-defined choice. 

\begin{theorem}\label{thm:gen}
    Let the conditional vector fields $u_{t, s}(x|z)$ and $v_{t, s}(x|z)$ generate the unique conditional probability density path  $p_{t, s}(x|z)$ satisfying the analogous equations \eqref{eq:equal-generation}. Then, the vector fields $u_{t, s}(x)$ and $v_{t, s}(x)$ in \eqref{eq:u_ts_v_ts} generate the unique probability density path $p_{t, s}(x)$ satisfying \eqref{eq:equal-generation}.
\end{theorem}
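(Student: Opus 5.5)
The argument follows the single-parameter Conditional Flow Matching argument of \citep{lipman2023}. Linearity of the continuity equation lets us verify each of the two equations in \eqref{eq:equal-generation} separately, treating the other parameter as fixed. The proof therefore has two parts: \emph{existence}, showing that the mixture $p_{t,s}$ of \eqref{eq:cond_prob} solves both equations with the fields \eqref{eq:u_ts_v_ts}; and \emph{uniqueness} of that solution.

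For existence in the $t$-direction, fix $s\in[0,1]$ and differentiate \eqref{eq:cond_prob} under the integral sign. Assuming enough regularity of $p_{t,s}(x\mid z)u_{t,s}(x\mid z)$ for dominated convergence, this gives
\begin{equation*}
\partial_t p_{t,s}(x)=\int \partial_t p_{t,s}(x\mid z)\,q(z)\,dz=-\int \nabla\cdot\big(p_{t,s}(x\mid z)u_{t,s}(x\mid z)\big)q(z)\,dz ,
\end{equation*}
where the second equality uses the conditional continuity equation in $t$. Exchanging divergence and integral, this equals $-\nabla\cdot\big(\int u_{t,s}(x\mid z)p_{t,s}(x\mid z)q(z)\,dz\big)$. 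By the definition \eqref{eq:u_ts_v_ts}, the inner integral is exactly $p_{t,s}(x)u_{t,s}(x)$ wherever $p_{t,s}(x)>0$. Where $p_{t,s}(x)=0$, nonnegativity forces $p_{t,s}(x\mid z)=0$ for $q$-a.e.\ $z$, so both sides vanish. The $s$-equation follows by the identical computation with $v$ and $\partial_s$.

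Boundary conditions come from integrating the conditional ones. If $p_{0,0}(x\mid z)$, $p_{1,0}(x\mid z)$, $p_{0,1}(x\mid z)$ concentrate on $a$, $b$, $c$ respectively (or are the chosen smoothed versions), then $p_{0,0}=q(a)$, $p_{1,0}=q(b)$ and $p_{0,1}=q(c)$. Consequently the single path $p_{t,s}$ is generated simultaneously by $u_{t,s}$ and $v_{t,s}$. Lemma~\ref{lem:path} then yields distributional path independence, although this is not strictly required for the statement.

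Uniqueness is where I expect the main difficulty. I would argue as follows. Suppose $\tilde p_{t,s}$ also satisfies \eqref{eq:equal-generation} with the same fields $u,v$ and the same boundary data.
\begin{enumerate}
\item Restrict both the $s$-equation and the $t$-equation to the axis $t=0$. There $\tilde p_{0,s}$ and $p_{0,s}$ solve the same linear continuity equation in $s$ with velocity $v_{0,s}$ and the same initial datum $q(a)$.
\item The fields $u,v$ are $C^1$, and I would additionally assume local Lipschitz bounds with at most linear growth. Under these assumptions the linear continuity equation has a unique solution, namely the pushforward along the flow: DiPerna--Lions / Ambrosio theory, or simply the method of characteristics for smooth fields. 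Hence $\tilde p_{0,s}=p_{0,s}=(\Psi_{0,s})_\#q(a)$.
\item For each fixed $s$, apply the same uniqueness argument to the $t$-equation with initial datum $p_{0,s}$. This gives $\tilde p_{t,s}=(\Phi_{t,s})_\#p_{0,s}=p_{t,s}$ on the whole square.
\end{enumerate}

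The subtle points are twofold. First, the marginal fields \eqref{eq:u_ts_v_ts} must inherit sufficient regularity from the conditional ones; I would state this as a standing assumption, as is done implicitly in CFM. Second, the hypothesis of uniqueness for the conditional path is used only to make the conditional generation well defined. Uniqueness for the marginal must be proved afresh via step 2 rather than inherited from the conditional problem.
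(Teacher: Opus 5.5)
Your existence computation is the paper's proof, run in the opposite direction: the paper starts from $-\nabla\cdot(p_{t,s}u_{t,s})$, inserts the definition of the marginal field, swaps divergence and integral, substitutes the conditional continuity equation, and pulls the parameter derivative outside the integral, one parameter at a time, just as you do. The paper never treats the zero set of $p_{t,s}$, the boundary data, or the word ``unique'' in the conclusion, so those parts of your proposal go beyond it; your uniqueness argument (well-posedness of the linear continuity equation along the axis $t=0$ in $s$, then along each line of fixed $s$ in $t$) is sound under the Lipschitz and linear-growth assumptions you add, and you are right that this uniqueness cannot be inherited from uniqueness of the conditional paths.
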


Theorem \ref{thm:gen} can be viewed as a generalization of the conditional-to-marginal generation result in CFM, extending the framework to accommodate multiple parameter settings and additional target distributions. As a corollary to Theorem \ref{thm:gen} and Lemma \ref{lem:path}, we have the following result: the flows of the unconditional vector fields have the distributional path-independence property.
\begin{corollary}[Path-independent transport of distributions in PiFM]\label{thm:path_independence}
    Suppose that the vector fields $u_{t, s}(x|z)$ and $v_{t, s}(x|z)$ generate the unique conditional probability density path  $p_{t, s}(x|z)$ satisfying the analogous equations \eqref{eq:equal-generation}. Then, the associated flows $\Phi_{t, s}$ and $\Psi_{t, s}$ of the vector fields $u_{t, s}(x)$ and $v_{t, s}(x)$ given in \eqref{eq:u_ts_v_ts} satisfy \eqref{eq:path_independence}.
\end{corollary}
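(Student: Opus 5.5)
The corollary follows by chaining Theorem~\ref{thm:gen} and Lemma~\ref{lem:path}, so the plan is mainly to check that the output of the first matches the hypotheses of the second.

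\textbf{Step 1 (marginal generation).} By hypothesis, the conditional fields $u_{t,s}(x|z)$ and $v_{t,s}(x|z)$ generate the unique conditional density path $p_{t,s}(x|z)$ satisfying the analogues of \eqref{eq:equal-generation}. Theorem~\ref{thm:gen} then says that the marginal fields $u_{t,s}(x)$ and $v_{t,s}(x)$ defined in \eqref{eq:u_ts_v_ts} generate the unique path $p_{t,s}(x)=\int p_{t,s}(x|z)q(z)\,dz$. Concretely, both continuity equations in \eqref{eq:equal-generation} hold with the same density $p_{t,s}$. For intuition, the mechanism is to differentiate \eqref{eq:cond_prob} under the integral in $t$ (resp.\ $s$), use the conditional continuity equations, and recognise $p_{t,s}u_{t,s}=\int u_{t,s}(x|z)p_{t,s}(x|z)q(z)\,dz$. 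I will simply cite the theorem for this.

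\textbf{Step 2 (matching Lemma~\ref{lem:path}).} Take $p_{t,s}=q_{t,s}$ in \eqref{eq:distinct-generation}. Then the pair of equations in \eqref{eq:distinct-generation} is exactly \eqref{eq:equal-generation} for the marginal fields. So there is a single generated density path, unique by Step 1. I also need the boundary conditions along the axes to be consistent with the flows. The prescriptions $p_{0,s}=(\Psi_{0,s})_\#\rho_1$ and $p_{t,0}=(\Phi_{t,0})_\#\rho_1$ hold with $\rho_1=p_{0,0}=q(a)$, because $p_{\cdot,s}$ and $p_{t,\cdot}$ are transported by the flows of $u$ and $v$ respectively.

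\textbf{Step 3 (conclusion).} Lemma~\ref{lem:path} now applies to $u_{t,s}(x)$ and $v_{t,s}(x)$, and gives \eqref{eq:path_independence} for their flows $\Phi_{t,s}$ and $\Psi_{t,s}$. For the reader, I can unpack why: the left-hand side first transports $p_{0,0}$ along the $s$-axis to $p_{0,s}$. It then transports this along $t$ with $s$ fixed by $\Phi$. Since $p_{\cdot,s}$ solves the $t$-continuity equation, uniqueness identifies the result as $p_{t,s}$. Symmetrically, the right-hand side equals $p_{t,s}$.

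\textbf{Main obstacle.} The delicate point is Step 2: the uniqueness and regularity assumptions must transfer correctly. Pushforward by the flow solves the continuity equation, and uniqueness is what identifies this solution with $p_{t,s}$. I expect this needs the marginal fields to be regular enough, for example locally Lipschitz with $p_{t,s}>0$ so that \eqref{eq:u_ts_v_ts} is well defined. I would state this as part of the standing hypotheses inherited from Theorem~\ref{thm:gen}, rather than prove it anew.
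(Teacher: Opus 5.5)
Your proposal is correct and matches the paper's proof, which simply chains Theorem~\ref{thm:gen} (the marginal fields in \eqref{eq:u_ts_v_ts} generate $p_{t,s}$) with Lemma~\ref{lem:path} (a single jointly generated density path yields \eqref{eq:path_independence}). Your extra checks — the axis boundary conditions, identifying each pushforward with $p_{t,s}$ via uniqueness, and the regularity caveats — only make explicit what the paper leaves implicit.
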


\vspace{-0.4em}

\subsection{Training via Path-independent Flow Matching}\label{sec:training}

\vspace{-0.4em}

Now we discuss a neural approach for training a PiFM model. Given two parameterized vector fields $u_{t, s}^{\theta}$ and $v_{t, s}^{\theta}$, PiFM learns the dynamics by regressing these fields onto the corresponding conditional vector fields. We define the flow-matching regression loss as
\begin{equation*}
\mathcal{L}_{\text{FM}}(\theta)
    =
    \mathbb{E}\Big[
    \|u_{t, s}^{\theta}(x)-u_{t, s}(x\mid z)\|^2 
    +
    \|v_{t, s}^{\theta}(x)-v_{t, s}(x\mid z)\|^2
    \Big]
\end{equation*}
For conditional vector fields that do not automatically satisfy the pointwise path-independence condition in \eqref{eq:integrability}, we consider additionally penalizing violations of this condition, this is particularly useful in settings such as integrating PiFM with Curly Flow Matching~\citep{petrovic2025curly}; see Section~\ref{sec:Inference}. The path-independence regularizer is
\begin{equation*}
\begin{aligned}
    \mathcal{L}_{\text{Pi}}(\theta)
    =
    \mathbb{E}\Big[
    \Big\|
    &\partial_s u_{t,s}^{\theta}(x\mid z)
    - \partial_t v_{t,s}^{\theta}(x\mid z) - [u_{t,s}^{\theta}(x\mid z),v_{t,s}^{\theta}(x\mid z)] %\\
    %&+ \big(\nabla_x u_{t,s}^{\theta}(x)\big)v_{t,s}^{\theta}(x) \\
    %&- \big(\nabla_x v_{t,s}^{\theta}(x)\big)u_{t,s}^{\theta}(x)
    \Big\|^2
    \Big].
\end{aligned}
\end{equation*}
Given a regularization weight $\lambda \geq 0$, the PiFM objective is $\mathcal{L}_{\text{PiFM}}(\theta;\lambda)=\mathcal{L}_{\text{FM}}(\theta)+\lambda \mathcal{L}_{\text{Pi}}(\theta)$. Training proceeds analogously to CFM. We sample $(t,s)\sim\mathcal{U}([0,1]^2)$ and $z=(a,b,c)\sim q(z)$, where $q(z)$ may be specified using either independent coupling or an optimal-transport coupling. We then sample $x\sim p_{t,s}(\cdot\mid z)$ and update $\theta$ by minimizing $\mathcal{L}_{\text{PiFM}}(\theta;\lambda)$. The full training procedure is summarized in Algorithm~\ref{alg:pfm}.

One specific choice of conditional vector fields and conditional probability path is given by
\begin{equation}
    u_{t, s}(x \mid z) = b - a, \hspace{1em} v_{t, s}(x \mid z) = c - a,\hspace{1em} p_{t, s}(x \mid z) = \mathcal{N}\big(x \mid \mu_{t,s}(z), \sigma^{2}\big), \label{eq:conditionals}
\end{equation}
\begin{wrapfigure}{r}{0.5\textwidth}
\vspace{-0.5em} % adjust if needed
\begin{minipage}{0.48\textwidth}

\begin{algorithm}[H]
\caption{Path-Independent Flow Matching (PiFM)}
\label{alg:pfm}
\begin{algorithmic}[1]
\INPUT Efficiently samplable $q(a,b,c)$, $p_{t, s}(x | (a,b,c))$, computable conditional vector fields $u_{t, s}(x | (a,b,c))$ and $v_{t, s}(x | (a,b,c))$, initial two-headed network $(u^{\theta}, v^{\theta})$ with a single backbone, learning rate $\eta > 0$ and regularization $\lambda>0$.
\WHILE{training}
    \STATE Sample $(a,b,c) \sim q(a,b,c)$, $(t, s) \sim \mathcal{U}([0,1]^2)$, $x \sim p_{t, s}(x | (a,b,c))$
    \STATE Compute loss $\mathcal{L}_{\text{PiFM}}(\theta;\lambda)$
    \STATE Update $\theta \leftarrow \theta - \eta \nabla_\theta \mathcal{L}_{\mathrm{PiFM}}(\theta)$
\ENDWHILE
\STATE \textbf{return} $u_\theta, v_\theta$
\end{algorithmic}
\end{algorithm}
\end{minipage}
\vspace{-1.0em}
\end{wrapfigure}
for fixed $\sigma > 0$. This generalizes the affine interpolation commonly used in CFM to a multi-parameter setting by introducing a two-dimensional interpolation of the form $\mu_{t,s}(z)=a + (b-a)t + (c-a)s$. This choice of conditional vector fields and conditional probability density paths is guaranteed to induce unconditional vector fields that transport the marginal probability density path $p_{t,s}$ in a consistent and path-independent manner by satisfying both Theorem~\ref{thm:gen} and Corollary~\ref{thm:path_independence}. Consequently, in this setting, enforcing commutativity via $\mathcal{L}_{\text{Pi}}$ is not required. More general conditional vector fields and probability paths may also be used as long as consistency and path-independent transport of distributions is satisfied. 

\vspace{-0.4em}

\subsection{Relation to Wasserstein Barycenter}\label{sec:W_barycenter}
%\kaly{This section is too long, we can shorten it a bit!}
\vspace{-0.4em}

In the previous subsections, we discussed PiFM, its properties, and its training. Now, we show that PiFM empirically approximates the Wasserstein Barycenter between three distributions. Additionally, under suitable conditions, we provide theoretical guarantees that PiFM matches the Wasserstein barycenter, demonstrating that PiFM recovers a geometrically meaningful notion that is globally consistent with interpolation between distributions.

\vspace{-0.1em}
\begin{wrapfigure}{r}{0.4\textwidth}
    \centering
    \begin{minipage}{\linewidth}
    \includegraphics[width=\linewidth]{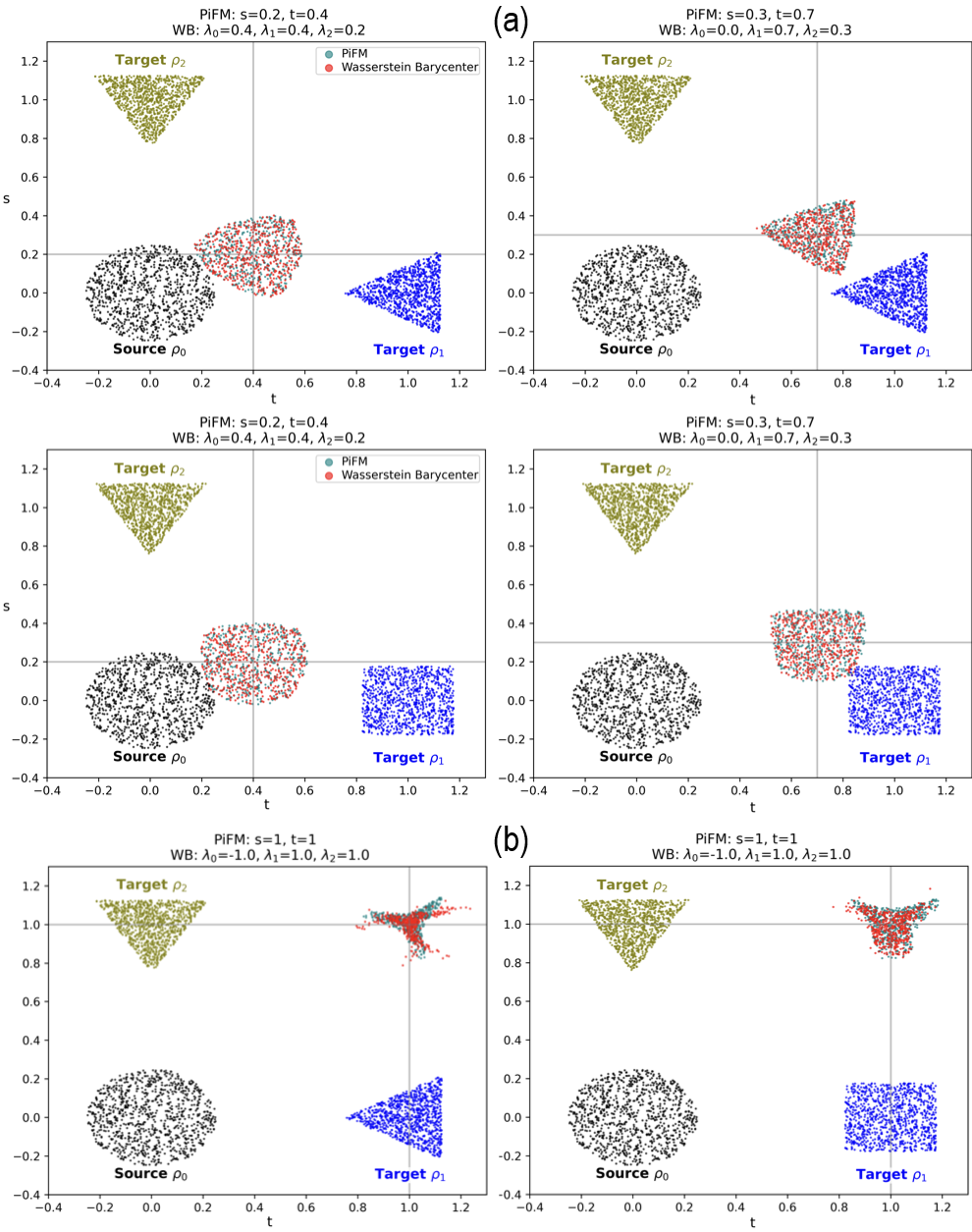}
    \caption{\small Comparison between output distributions from PiFM model (ours) and Wasserstein Barycenters for a) within and  b) outside the probability simplex . The predicted distributions from the PiFM model correspond to the Wasserstein barycenters.}
    \label{fig:PFM_vs_WB_extended}
    \end{minipage}
    \vspace{-4em}
\end{wrapfigure}
Let $\lambda:=(\lambda_1,\ldots,\lambda_K)$ with $\lambda_j\geq 0$ and $\sum_{j=1}^{K}\lambda_j=1$ for some $K\in \mathbb{N}$. The Wasserstein barycenter $\rho_\lambda$ of probability measures  $\rho_1,\ldots,\rho_K \in \mathcal{P}_2(\mathbb{R}^d)$ with finite second moment is a distribution that minimizes the multi-marginal optimal transport cost $J(\gamma,\lambda)$ given below 
\begin{equation*}
\label{eq:marg_OT}
    J(\gamma,\lambda)
    =
    \int
    \sum_{j=1}^{K}
    \lambda_j
    \left\|
    T_\lambda(x_1,\ldots,x_K)-x_j
    \right\|^2
    d\gamma,
\end{equation*}
over all couplings $\gamma$ of $\rho_1,\ldots,\rho_K$ with $T_\lambda(x_1,\ldots,x_K)=\sum_{j=1}^{K}\lambda_j x_j$.
%\begin{wrapfigure}{r}{0.6\textwidth}
%    \centering
%    \includegraphics[width=0.6\textwidth]{figures/PiFM_vs_WB_extended.png}
%    \caption{\small Comparison between output distributions from PiFM model (ours) and Wasserstein Barycenters for a) within and  b) outside the probability simplex . The predicted distributions from the PiFM model correspond to the Wasserstein barycenters.}
%    \label{fig:PFM_vs_WB_extended}
%\end{wrapfigure}
%\vspace{-2em}
    
%\vspace{-0.5em}
%\begin{wrapfigure}{r}{0.6\textwidth}
%    \centering
%    \includegraphics[width=0.6\textwidth]{figures/PiFM_vs_WB_extended.png}
%    \caption{\small Comparison between output distributions from PiFM model (ours) and Wasserstein Barycenters for a) within and  b) outside the probability simplex . The predicted distributions from the PiFM model correspond to the Wasserstein barycenters.}
%    \label{fig:PFM_vs_WB_extended}
%\end{wrapfigure}

One popular algorithm for computing the Wasserstein barycenter is the free support computation algorithm \citep{cuturi2014a}, available in the POT package \citep{POT}. Empirically, we observe in Figure \ref{fig:PFM_vs_WB_extended}a. that distributions $\rho^{\text{PiFM}}_{t, s}$ computed from our method coincide with the Wasserstein barycenter  $WB_{1-t-s, t, s}(\rho_0, \rho_1, \rho_2)$ obtained with the free support algorithm. In addition, we visually observe that our framework extrapolates beyond the simplex constraint on  $\lambda$ in Figure \ref{fig:PFM_vs_WB_extended}b.%, suggesting that it equips the space of generated distributions with a specific continuation property in the Wasserstein space. 
%We claim that our model can be used to generalize the notion of a Wasserstein barycenter, yielding less noisy outputs than the alternative method in regions outside the simplex.

To support our claim that PiFM can approximate the Wasserstein barycenter, we provide additional experimental results in Supplementary Figure~\ref{fig:PFM_vs_WB_csc} on more complex distributions. In addition, using the multi-marginal optimal transport formulation and the
notion of admissible families of deformations from \citep{boissard2013distributionstemplateestimatewasserstein}, Theorem~\ref{theorem:WB_exact_matching} gives sufficient conditions under which the PiFM-generated distribution $\hat{\rho}_{t, s}$  is the minimizer of the Wasserstein barycenter objective. From the discussion in section \ref{sec:cond_to_marg} note that $\hat{\rho}_{t,s}=(\Phi_{t,s})_{\#}\circ (\Psi_{0,s})_{\#}\rho_0=(\Psi_{t,s})_{\#}\circ (\Phi_{t,0})_{\#}\rho_0=:\Gamma_{t,s}(\rho_0)$. 

\begin{theorem}\label{theorem:WB_exact_matching}
Consider $\rho_0,\rho_1,\rho_2 \in \mathcal{P}(\mathbb{R}^d)$ with $\lambda = (1-t-s, t, s) $. Suppose that the optimal transport maps $T_1: \rho_0 \rightarrow \rho_1, \ T_2: \rho_0 \rightarrow \rho_2 $ belong to an admissible family of deformations, and the conditional paths satisfy $z_{t, s}(a) = (1-t-s)a + tT_1(a) + sT_2(a)$. Then, under $\sigma \rightarrow 0 $, PiFM approximates the Wasserstein barycenter between $(\rho_0, \rho_1, \rho_2)$, i.e , $\hat{\rho}_{t, s} = \Gamma_{t,s}(\rho_0)$ satisfies $\hat{\rho}_{t, s} = WB_{\lambda}(\rho_0, \rho_1, \rho_2)$. Additionally, %we have %$ J^{\mathrm{PiFM}}(\rho_0, \lambda) $ minimizes the multi marginal Optimal Transport (1) problem, i.e: 
the optimal transport cost is
\begin{equation*}
    \int \Big[
(1-t-s)\, \|z_{t, s}(a) - a\|^2  
+\, t\, \|z_{t, s}(a) - T_1(a)\|^2 
+\, s\, \|z_{t, s}(a) - T_2(a)\|^2
\Big] \, d\rho_0(a).
\end{equation*}
\end{theorem}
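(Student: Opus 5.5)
The plan is to reduce the statement to the known characterization of Wasserstein barycenters for admissible families of deformations (\citealp{boissard2013distributionstemplateestimatewasserstein}), and to identify the PiFM output with the pushforward of $\rho_0$ under the averaged map $z_{t,s}$.

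\textbf{Step 1: identify the PiFM marginal in the deterministic limit.} Take the coupling $q(a,b,c)$ to be the deterministic multi-marginal coupling $(\mathrm{id},T_1,T_2)_\#\rho_0$, so that $b=T_1(a)$ and $c=T_2(a)$. The conditional paths of \eqref{eq:conditionals} then have mean $\mu_{t,s}(z)=z_{t,s}(a)$. As $\sigma\to 0$, the conditional density $p_{t,s}(\cdot\mid z)$ converges to $\delta_{z_{t,s}(a)}$. By \eqref{eq:cond_prob}, the marginal therefore converges weakly to $(z_{t,s})_\#\rho_0$. By Theorem~\ref{thm:gen} and Corollary~\ref{thm:path_independence}, this marginal is the density generated by both $u_{t,s}$ and $v_{t,s}$. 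Path independence then gives $\hat\rho_{t,s}=\Gamma_{t,s}(\rho_0)=(z_{t,s})_\#\rho_0$ for every order of composition.

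\textbf{Step 2: invoke the admissible-deformation barycenter result.} Admissibility of $T_1,T_2$ (with $T_0=\mathrm{id}$) means the family is closed under convex combinations and compositions in the sense of Boissard et al. In particular, $z_{t,s}=(1-t-s)\mathrm{id}+tT_1+sT_2$ is itself the gradient of a convex function. Moreover, the maps $T_j\circ z_{t,s}^{-1}$ are optimal from $(z_{t,s})_\#\rho_0$ to $\rho_j$, and they average to the identity with weights $\lambda$.

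The barycenter characterization of Agueh and Carlier then applies: a measure $\nu$ is the barycenter iff the optimal maps $S_j:\nu\to\rho_j$ satisfy $\sum_j\lambda_j S_j=\mathrm{id}$ $\nu$-a.e. The case $\nu=(z_{t,s})_\#\rho_0$ is exactly the content of Boissard et al.'s Proposition on admissible families. Hence $\hat\rho_{t,s}=WB_\lambda(\rho_0,\rho_1,\rho_2)$.

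\textbf{Step 3: compute the optimal cost.} The optimal multi-marginal coupling is $\gamma=(\mathrm{id},T_1,T_2)_\#\rho_0$, and $T_\lambda(a,T_1a,T_2a)=z_{t,s}(a)$. Substituting into $J(\gamma,\lambda)$ with a change of variables gives exactly the displayed integral. The same value equals $\sum_j\lambda_j W_2^2(\hat\rho_{t,s},\rho_j)$, because each $T_j\circ z_{t,s}^{-1}$ is optimal.

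\textbf{Main obstacle.} The hardest step is Step 2. It requires showing that optimality is preserved, namely that $T_j\circ z_{t,s}^{-1}$ is the gradient of a convex function. This needs invertibility of $z_{t,s}$ and the commutation and compatibility properties built into ``admissible family.'' For $\lambda$ in the simplex, I would first try to establish this by writing $T_j=\nabla\varphi_j$. Then $z_{t,s}=\nabla\big(\sum_j\lambda_j\varphi_j\big)$ is strictly monotone when $\lambda_0>0$. Admissibility then supplies convexity of the composed potentials.

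A secondary technical point is justifying the $\sigma\to0$ limit. The flows converge only weakly, so I would state the identification at the level of measures and rely on weak continuity of the pushforward by the continuous map $z_{t,s}$.
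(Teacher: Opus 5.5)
Your proposal is correct and follows essentially the same route as the paper: under the coupling $b=T_1(a)$, $c=T_2(a)$ and $\sigma\to 0$ you identify $\hat\rho_{t,s}$ with $\bigl((1-t-s)\mathrm{Id}+tT_1+sT_2\bigr)_\#\rho_0$, invoke Theorem~4.1 of Boissard et al.\ for admissible families, and read off the cost from the deterministic coupling. The step you flag as the main obstacle needs no extra commutation property: by the third admissibility condition, $z_{t,s}\circ T_j^{-1}=\sum_i\lambda_i\,T_i\circ T_j^{-1}$ (with $T_0=\mathrm{Id}$) is a convex combination of gradients of convex functions, hence optimal from $\rho_j$ to $\hat\rho_{t,s}$, and so its inverse $T_j\circ z_{t,s}^{-1}$ is optimal in the other direction.
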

Additional results relating PiFM to the Wasserstein barycenter are presented in Appendix~\ref{sec:appendix_wb_relation} and Appendix~\ref{sec:additional_results}. Since computing the Wasserstein barycenter is computationally expensive, the results in this section highlight the potential of PiFM as a neural network-based alternative for approximating Wasserstein barycenters. Section~\ref{sec:comp_WPOT} provides an initial exploration of the computational time differences, while a more in-depth analysis is left for future work.

\vspace{-0.8em}

\section{Experiments}\label{sec:experiments}

\vspace{-0.8em}

We present empirical results for low- and high-dimensional data settings. In the low-dimensional setting, we demonstrate that PiFM outperforms Meta Flow Matching (MFM) in \citep{atanackovic2025meta} on a toy dataset under domain shift and show how PiFM can be incorporated with Curly Flow Matching \cite{petrovic2025curly}. In the high-dimensional case, we evaluate PiFM's performance on unsupervised image translation and demonstrate its superiority over traditional CFM in data extrapolation. We also evaluate on a real-world single-cell RNA-seq reprogramming dataset~\citep{schiebinger2019optimal}, where PiFM models cellular states while jointly predicting experimental day and pluripotency score, testing whether the learned dynamics capture both chronological progression and fate acquisition toward successful reprogramming.

\vspace{-0.4em}
\subsection{Low Dimensional Setting}\label{sec:4.1}
\vspace{-0.4em}
To assess PiFM's ability to infer samples under out-of-distribution conditions, we use synthetic data in which the distribution is shifted spatially in one direction and changes shape and shifts simultaneously in another direction.

\vspace{-0.4em}

\subsubsection{PiFM for Inferring Unseen Population on Shifted Domain}
\vspace{-0.4em}
{\bf Toy dataset.} We construct unit discs with centers at (0, 0) and (0, 5) as training source distributions and small discs $(r = 0.5)$ with centers at (0, 5) and (5, 5) as training target distributions. Given the unit disc with center at (0, 2.5) as the unseen distribution in the source, the goal is to infer the small disc with center at (5, 2.5) as expected target.

\begin{figure}[t]
\centering

\begin{subfigure}[t]{0.24\textwidth}
\centering
\includegraphics[width=\linewidth]{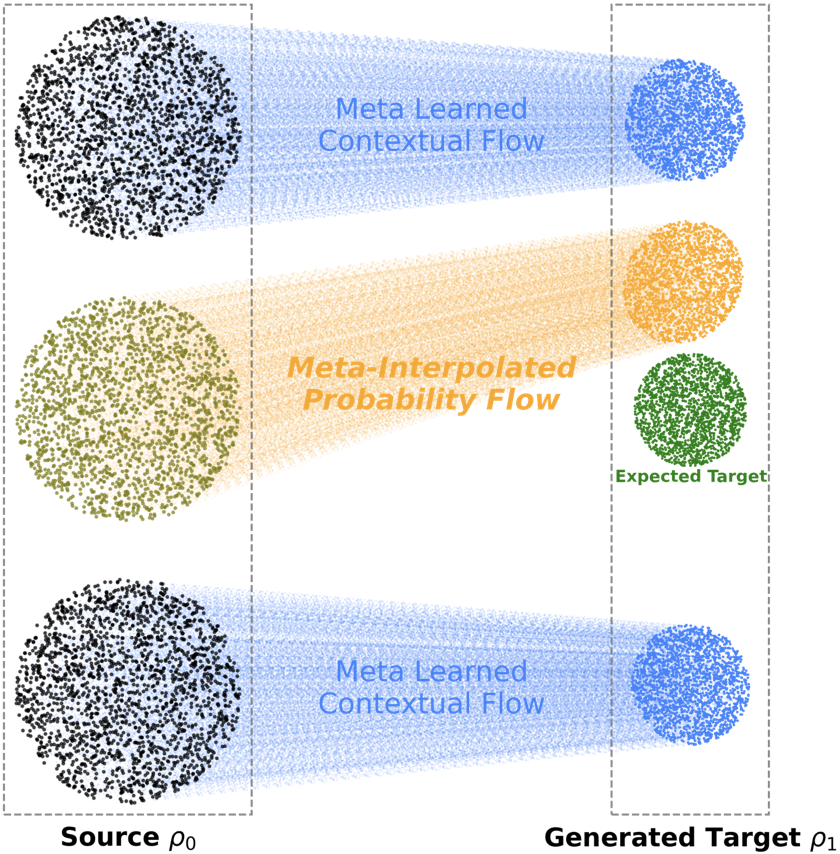}
\caption{MFM}
\label{fig:meta_compare1}
\end{subfigure}
\hfill
\begin{subfigure}[t]{0.24\textwidth}
\centering
\includegraphics[width=\linewidth]{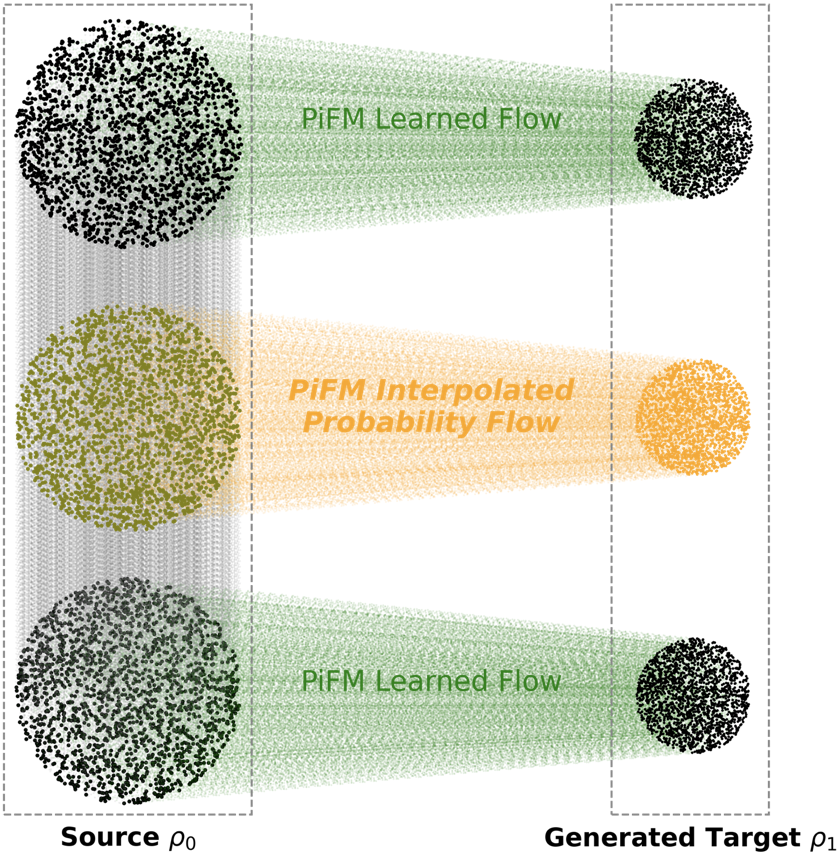}
\caption{PiFM (ours)}
\label{fig:meta_compare2}
\end{subfigure}
\hfill
\begin{subfigure}[t]{0.24\textwidth}
\centering
\includegraphics[width=\linewidth]{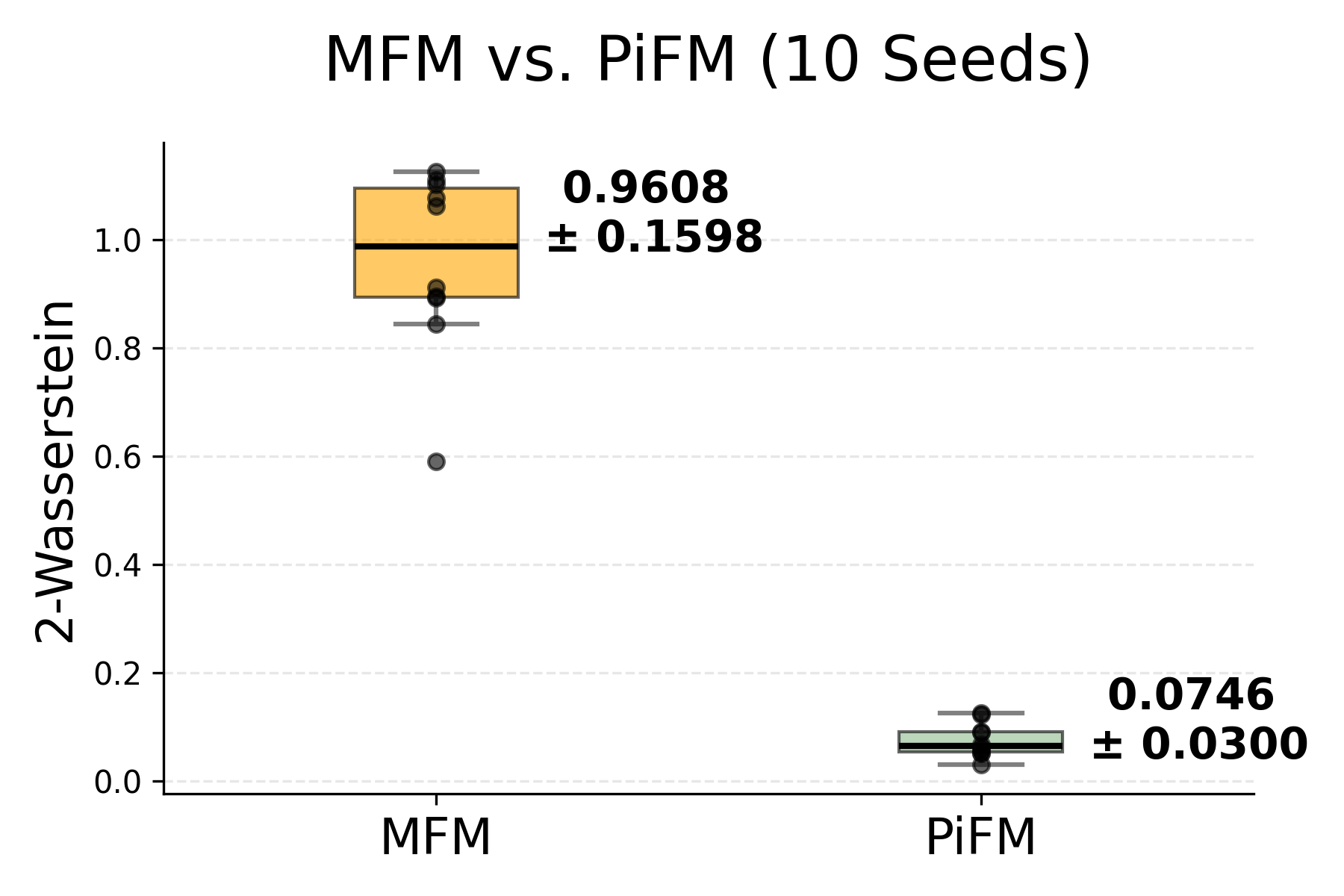}
\caption{Single Batch-size}
\label{fig:seeds_compare}
\end{subfigure}
\hfill
\begin{subfigure}[t]{0.24\textwidth}
\centering
\includegraphics[width=\linewidth]{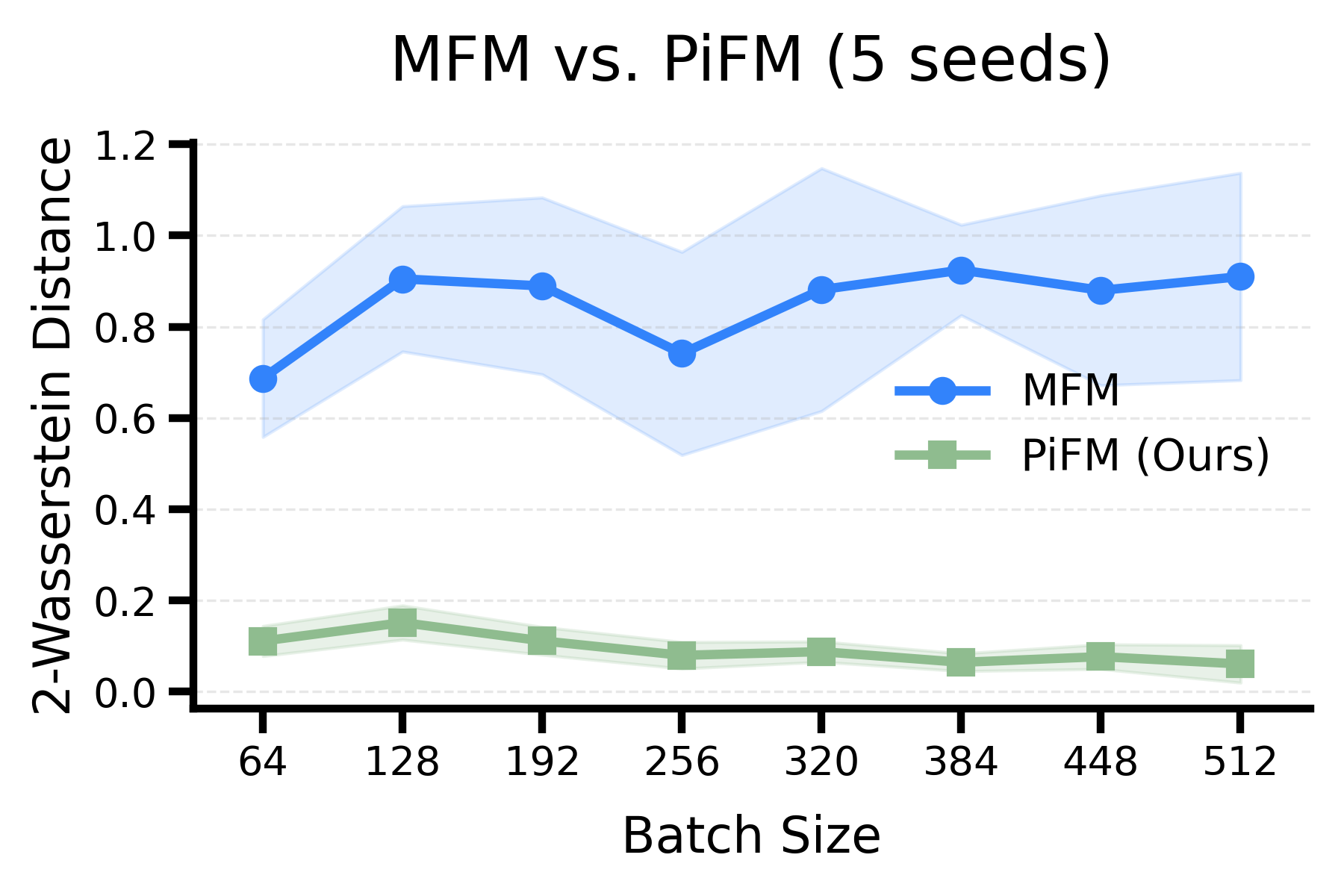}
\caption{Across Batch-size}
\label{fig:batchsize_compare}
\end{subfigure}

\caption{\small Comparison of Meta Flow Matching (MFM) and PiFM on a toy dataset, including qualitative trajectories and quantitative evaluation. Panels (a) and (b) show that MFM generalizes poorly to unseen source embeddings, while PiFM maintains consistent trajectories. Panels (c) and (d) report the 2-Wasserstein distance between ground truth and generated samples, showing that PiFM achieves more accurate and consistent transport.
}
\label{fig:compare_meta_pfm}
\end{figure}
\vspace{-0.6em}

%\kaly{I like the idea of combining these two figures! But figure (a) is too small, because the legends and titles in the figure are unreadable! We need some smart engineering of the figures } \francisco{Agreed. Maybe if I put the bars horizontally we can have extra space for the figure on the left.} \kaly{Actually I will rerun and replot this! }

\begin{figure*}[!h]        
    \centering    
    \centerline{\includegraphics[width=1\columnwidth]{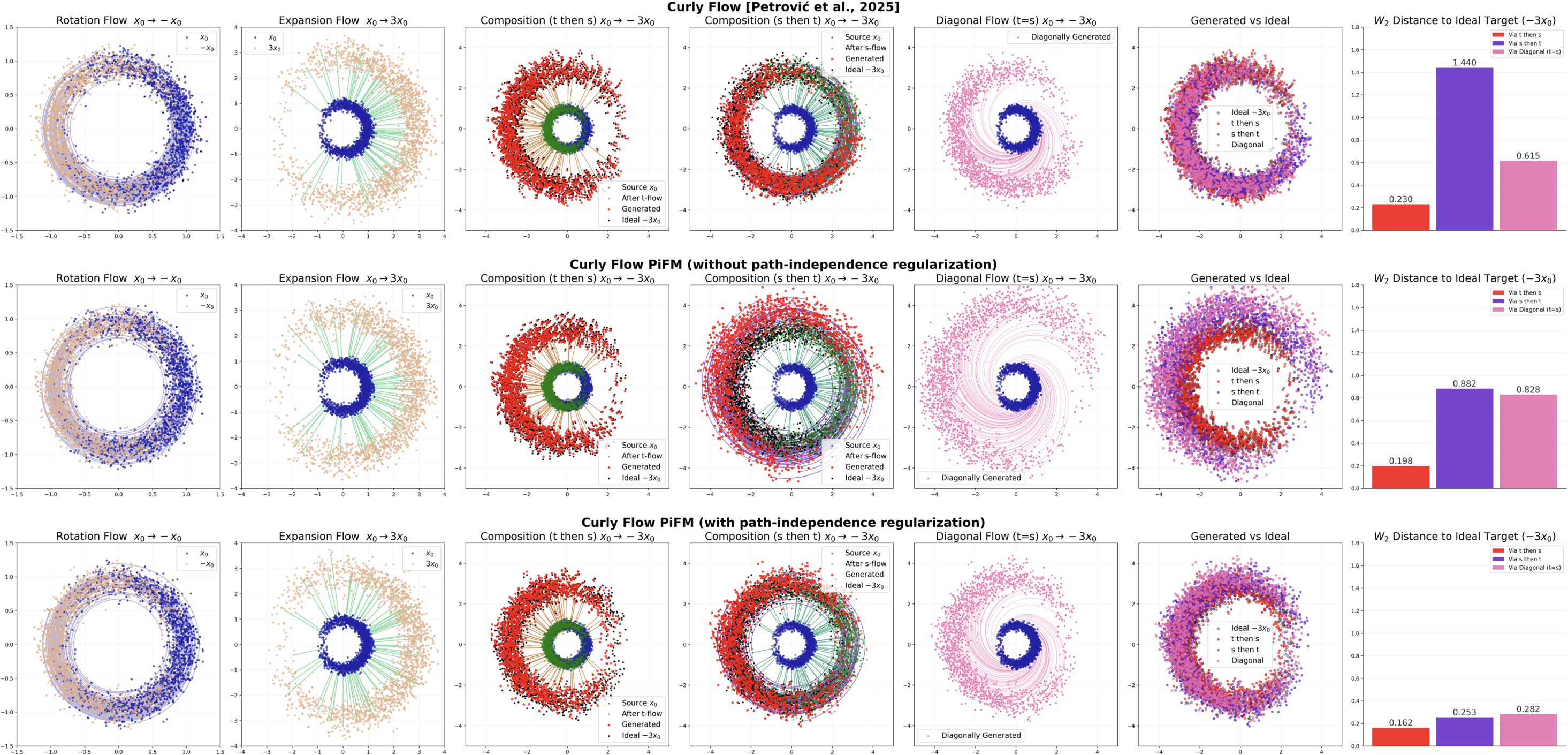}}
    \caption{\small Illustration of PiFM in the Curly Flow Matching setting~\citep{petrovic2025curly} together with a quantitative evaluation of path dependence. Curly-FM and unregularized PiFM produce path-dependent dynamics and inconsistent final distributions across integration strategies. In contrast, PiFM with path-independence regularization yields consistent transport and the lowest value in Wasserstein-2 distance to the target distribution.
}
    \label{fig:PFM_CurlyFlow_Results}
\end{figure*}

{\bf Comparison.} Figure~\ref{fig:meta_compare1} shows that MFM has the correct flow (in blue) and target samples in the training set (top and bottom row), but misses the target on the test set (middle row). On the contrary, Figure~\ref{fig:meta_compare2} shows that PiFM predicts the correct flow and target from the unseen source distribution (middle row) by inferring the source distributions as samples on a ``base" flow over the vertical direction. Moreover, we systematically evaluate the 2-Wasserstein distance between the predicted and expected target distributions over multiple runs of varying seeds. For the same number of batches (= 512) and training epochs (= 3000) for both MFM and PiFM over 10 random seeds, Figure~\ref{fig:seeds_compare} shows that the mean and standard deviation of 2-Wasserstein distance are much smaller for PiFM than for MFM, indicating the PiFM is more robust and accurate in this task. For same number of training epochs (= 2000) with varying batch sizes, PiFM consistently shows better performance (Figure~\ref{fig:batchsize_compare}).
\vspace{-0.8em}
\subsubsection{PiFM for Inferring Desirable Path-independent Distributions}\label{sec:Inference}
\vspace{-0.4em}

{\bf Toy dataset.}
We evaluate PiFM in the Curly Flow Matching setting \citep{petrovic2025curly}, where the
goal is to compose a rotation flow ($x_0 \to -x_0$) and an expansion flow ($x_0 \to 3x_0$)
to produce a joint transformation $x_0 \to -3x_0$. A path-independent model should yield
the same target distribution regardless of the integration strategy used at inference time.

{\bf Comparison.} As shown in Figure~\ref{fig:PFM_CurlyFlow_Results}, the original Curly-FM model, which trains the two vector fields separately, fails to achieve commutativity: sequential and diagonal integration paths produce inconsistent results, revealing path-dependent dynamics. PiFM without regularization also fails to recover commutativity despite joint training. In contrast, PiFM with path-independence regularization produces consistent outputs across all paths, with generated distributions closely matching the target distribution. This is further confirmed quantitatively in Figure~\ref{fig:PFM_CurlyFlow_Results}, where Curly-FM and unregularized PiFM exhibit large variations in Wasserstein-2 distance across integration strategies. PiFM with path-independence regularization achieves the lowest values, demonstrating that enforcing commutativity during training is key to reliable path-independent generation.

\vspace{-0.4em}

\subsection{High-Dimensional Setting: PiFM for Unsupervised Image-to-Image Translation}
\label{sec:Training_PFM_ImageTranslation}

\vspace{-0.4em}

We train and evaluate on the CelebA~\citep{CelebA} dataset using the \texttt{Smiling} and \texttt{Black\_Hair} attributes. We employ a shared U-Net backbone with two per-pixel MLP heads implemented as $1\times1$ convolutions with $\mathrm{SiLU}$ activations, each predicting the vector field associated with one attribute flow. A full description of the preprocessing and architecture is provided in Appendix~B.

\begin{figure}[h]        
    \centering    \centerline{\includegraphics[width=1\columnwidth]{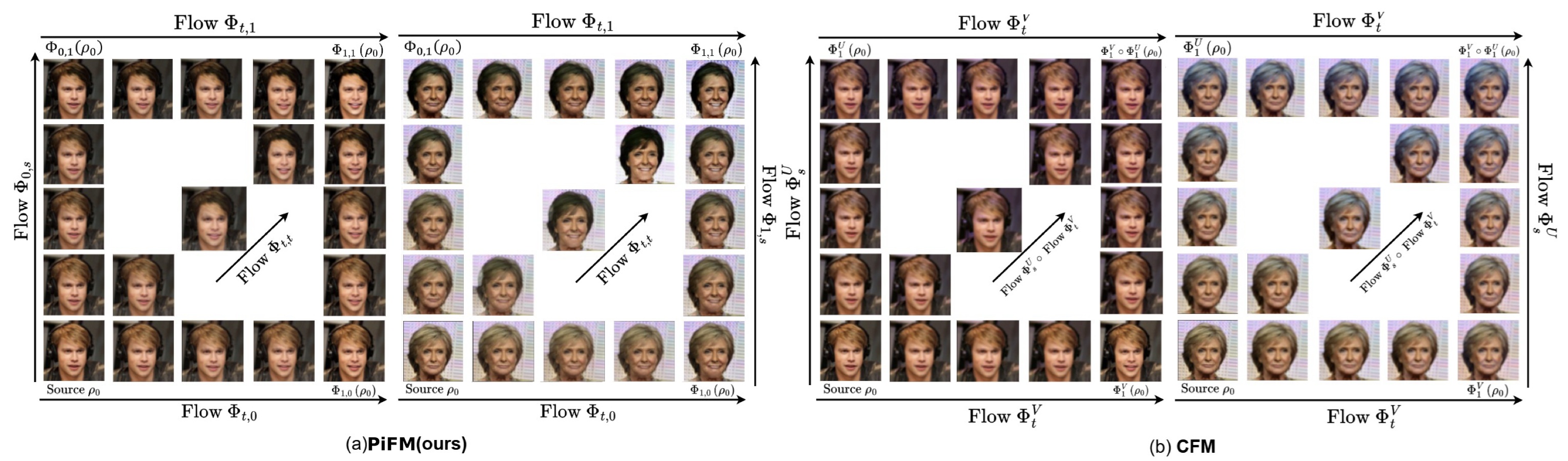}}
    \caption{\small We present an image-to-image translation comparison of PiFM on the CelebA~\citep{CelebA} dataset under three different flow integration strategies. PiFM successfully infers the missing attribute, whereas CFM fails to produce the desired transformation.}\label{fig:PFM_CelebA_JointAttr_Smile_BlackHair}
\end{figure}
Figure~\ref{fig:PFM_CelebA_JointAttr_Smile_BlackHair} shows qualitative results for female (panel (a) right and (b) right) and male (panel (a) left and (b) left) examples across the three different flow integration strategies: $t\to s$, $s\to t$ and diagonal. In the PiFM results, the A-field (\texttt{Smiling}) produces localized mouth/cheek edits consistent with smiling, while the B-field (\texttt{Black\_Hair}) darkens and re-textures the hair region. Note that the identity (global face shape, eyes) is largely preserved. The diagonal integration often produces smoother simultaneous edits to both attributes. In contrast, CFM often fails to modify the source image along either one or both flow axes (e.g., CFM cannot generate smiling faces in Figure~\ref{fig:PFM_CelebA_JointAttr_Smile_BlackHair}). Additional comparisons using FID are presented in Appendix~\ref{sec:appendix_fid}. We find FID poorly suited for evaluating commutativity as in Figure~\ref{fig:PFM_CelebA_JointAttr_Smile_BlackHair}, since baseline methods may generate images close in pixel space without achieving the desired attribute transformations.

\vspace{-0.4em}
\subsection{High-Dimensional Multi-Step Setting: PiFM for Single-Cell Reprogramming}
\vspace{-0.4em}
We further evaluate PiFM on a real-world single-cell RNA-seq reprogramming dataset~\citep{schiebinger2019optimal}. Data source and preprocessing details are provided in Appendix~\ref{sec:single-cell-preprocessing}. Cells are represented in 50-dimension PCA space, and the two biological axes are experimental day and pluripotency score. This setting provides a high-dimensional, multi-step biological test case: experimental day captures chronological reprogramming, while pluripotency score reflects acquisition of the target iPSC-like fate. Thus, the task evaluates whether PiFM can jointly model temporal dynamics and fate acquisition, rather than treating reprogramming as a single ordered trajectory.

As shown in Figure~\ref{fig:single-cell-main}, PiFM-inferred trajectories evolve coherently along both biological axes, capturing temporal reprogramming at fixed pluripotency and fate acquisition at fixed experimental day. Moreover, when transporting day 0 low-pluripotency cells to day 6 high-pluripotency targets, different integration orders yield overlapping trajectories and generated distributions that align with the real target population, supporting path-independent generation. Biologically, this suggests that PiFM can model reprogramming as a multi-step and heterogeneous process, where cells observed at the same experimental day may occupy different intermediate states and only a subset progresses toward the pluripotent iPSC-like fate. We evaluate target prediction quality and commutativity in Appendix~\ref{sec:single-cell-evaluation}.

\begin{figure}[h]        
    \centering    \centerline{\includegraphics[width=1\columnwidth]{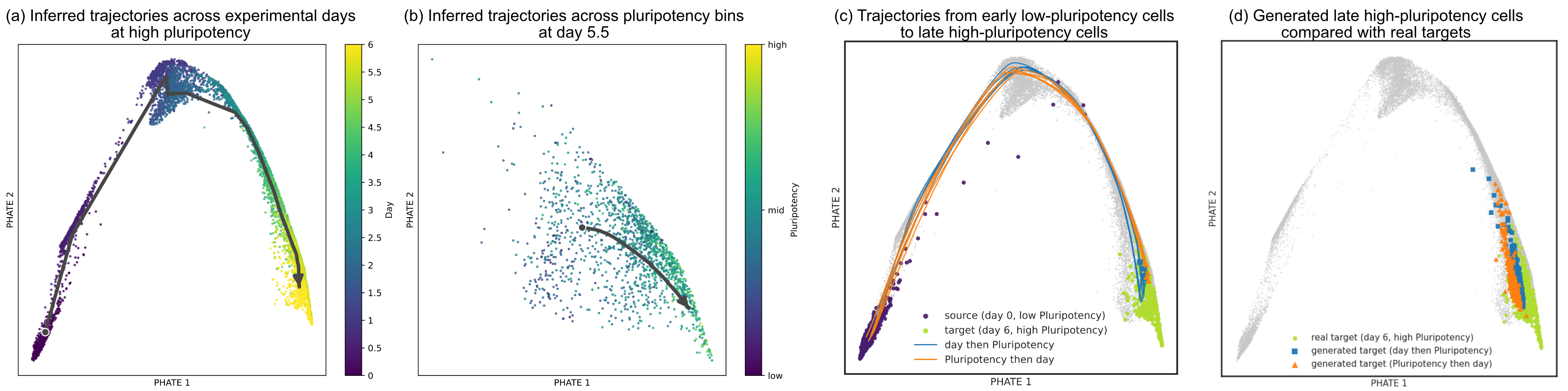}}
    \caption{\small PiFM-inferred single-cell reprogramming dynamics across experimental day and pluripotency progression, visualized by PHATE~\cite{moon2019visualizing}. (a) and (b) show trajectories along each biological axis, varying experimental day at fixed high pluripotency and varying pluripotency bins at fixed experimental day. (c) and (d) show path-ordered generation from day 0 low-pluripotency cells to day 6 high-pluripotency targets, where the two integration orders produce overlapping trajectories and generated distributions that match the real target population.}
    \label{fig:single-cell-main}
\end{figure}

\vspace{-0.8em}

\section{Discussion}\label{sec:Discussion}

\vspace{-0.8em}

%\kaly{I think we can use paragraphs like \textbackslash paragraph\{\}  rather than the subsections, that will save a lot of space} \sina{I agree}

\paragraph{Conclusion} We introduced Path-Independent Flow Matching (PiFM), a multi-parameter Flow Matching that enforces consistent path-independent transport of distributions. We also make a connection between PiFM and Wasserstein barycenters. We showed that empirically, PiFM outperforms existing approaches on synthetic and image translation tasks, particularly in order-invariant and out-of-distribution settings. We further demonstrated applications to Curly Flow Matching and single-cell RNA-seq trajectory inference.

\vspace{-1em}

\paragraph{Limitations}  While our theory provides conditions for path-independent transport at the distribution level, it does not guarantee path independence for individual samples. Moreover, the connection to the Wasserstein barycenter is only studied in the $n=2$ setting; further investigation is needed to establish the result in this work for the more general case.

\vspace{-1em}
\paragraph{Future Work} Promising future directions include extending PiFM to more general geometries, as in \cite{ChenRiemannianFM}, and further applying it to single-cell perturbation data and trajectory inference tasks. Additionally, we will focus on validating PiFM as a Wasserstein barycenter generative framework beyond the setting discussed here.

%\vspace{-0.8em}
%\section{Impact Statement}
%\label{sec:Impact_Statement}
%\vspace{-0.8em}
%The authors acknowledge that this work may have societal implications, although none are considered sufficiently significant to warrant specific discussion here.

\vspace{-0.8em}
\section{Aknowledgements}
\label{sec:Impact_Statement}
This research was partially funded by FRQNT–NSERC grant 2023-NOVA-329125, the Courtois Institute, NSERC Discovery grants, and CIHR Program grants. The content provided here is solely the responsibility of the authors and does not necessarily represent the official views of the funding agencies.

%\newpage

%\vspace{-0.8em}

%\section{Impact Statement}
%\kaly{please check author instructions on website if this is needed!}

%\vspace{-0.8em}

%The authors acknowledge that this work may have societal implications, although none are considered sufficiently significant to warrant specific discussion here.

% \begin{ack}
% Use unnumbered first level headings for the acknowledgments. All acknowledgments
% go at the end of the paper before the list of references. Moreover, you are required to declare
% funding (financial activities supporting the submitted work) and competing interests (related financial activities outside the submitted work).
% More information about this disclosure can be found at: \url{https://neurips.cc/Conferences/2026/PaperInformation/FundingDisclosure}.

% Do {\bf not} include this section in the anonymized submission, only in the final paper. You can use the \texttt{ack} environment provided in the style file to automatically hide this section in the anonymized submission.
% \end{ack}

%\section*{References}

% References follow the acknowledgments in the camera-ready paper. Use unnumbered first-level heading for
% the references. Any choice of citation style is acceptable as long as you are
% consistent. It is permissible to reduce the font size to \verb+small+ (9 point)
% when listing the references.
% Note that the Reference section does not count towards the page limit.
% remove bibliography commands completely

\bibliographystyle{neurips_2026}
\bibliography{main.bib}
\medskip

%%%%%%%%%%%%%%%%%%%%%%%%%%%%%%%%%%%%%%%%%%%%%%%%%%%%%%%%%%%%

\appendix

% Technical appendices with additional results, figures, graphs, and proofs may be submitted with the paper submission before the full submission deadline (see above). You can upload a ZIP file for videos or code, but do not upload a separate PDF file for the appendix. There is no page limit for the technical appendices. 

% Note: Think of the appendix as ``optional reading'' for reviewers. The paper must be able to stand alone without the appendix; for example, adding critical experiments that support the main claims to an appendix is inappropriate. 
\newpage
\section{Theoretical Details and Proofs}
\label{sec:theory}
\subsection{The general multi-parameter case}\label{sec:generalization}

PiFM generalizes to the case of multiple parameters $t_1, \ldots, t_n$. In this setting, we consider a source data distribution \(q(a)\) and target data distributions \(q(b_1), \ldots, q(b_n)\). We condition on the joint distribution \(q(z)\), where \(z = (a, b_1, \ldots, b_n)\), with a marginalization,
\begin{equation}\label{eq:gen_marginal}
    p_{t_1,\ldots,t_n}(x)=\int p_{t_1,\ldots,t_n}(x\mid z)q(z)\,dz.
\end{equation}
The continuously differentiable vector fields $u_{t_1,\ldots,t_n}^{(i)}$ are associated with the ODEs
\begin{equation}\label{eq:gen_ODE}
    \frac{\partial \phi^{(i)}(x)_{t_1,\ldots,t_n}}{\partial t_i}
    = u_{t_1,\ldots,t_n}^{(i)}\big(\phi_{t_1,\ldots,t_n}(x)\big)
\end{equation}
Each vector field \(u_{t_1,\ldots,t_n}^{(i)}\) governs the evolution from the source distribution \(q(a)\) to the target distribution \(q(b_i)\) and has an associated flow $\Phi_{t_1,...,t_n}:[0,1]^{n}\times \mathbb{R}^{d}\to \mathbb{R}^{d},\Phi_{t_1,...,t_n}(x)=\phi_{t_1,...,t_n}(x)$. We define
\begin{equation}\label{eq:gen_VF}
    u_{t_1,\ldots,t_n}^{(i)}(x)
    = \mathbb{E}_{q(z)}\left[
        \frac{u_{t_1,\ldots,t_n}^{(i)}(x | z)\, p_{t_1,\ldots,t_n}(x | z)}
        {p_{t_1,\ldots,t_n}(x)}
    \right]
\end{equation}
for all $i = 1, \ldots, n$. 

Here, $u_{t_1,...,t_n}^{(i)}(x|z)$ are the conditional vector fields. We consider the $n$ continuity equations
\begin{equation}\label{eq:gen_continuity}
    \frac{\partial p_{t_1,\ldots,t_n}(x)}{\partial t_i}
    + \nabla \cdot \big(p_{t_1,\ldots,t_n}(x)\, u_{t_1,\ldots,t_n}^{(i)}(x)\big)
    = 0,
\end{equation}
which hold for all \(t_j \in [0,1]\) with \(j \neq i\). Additionally, the path-independence property \eqref{eq:path_independence} is replaced by a family of equalities corresponding to integrating along each parameter direction sequentially, resulting in \(n!\) equivalent expressions. Finally, \eqref{eq:integrability} is replaced by \(\binom{n}{2}\) conditions encoding pairwise consistency between vector fields. For all \(i,j = 1,\ldots,n\), \(i \neq j\), we obtain
\begin{equation}\label{eq:gen_Lie}
    \partial_{t_i}u_{t_1,...,t_n}^{(i)}(x)-\partial_{t_j}u_{t_1,...,t_n}^{(j)}(x)=[u_{t_1,...,t_n}^{(i)}(x),u_{t_1,...,t_n}^{(j)}(x)]
\end{equation}

The generalized version of the affine probability path and conditional vector fields is as follows. We take
\begin{equation*}
    u_{t_1,...,t_n}^{(i)}(x|z)=b_i-a \hspace{1em},\forall i=1,...,n
\end{equation*}
and
\begin{equation*}
    p_{t_1,...,t_n}(x|z)=\mathcal{N}(x|\mu_{t_1,...,t_n}(z),\sigma^{2})
\end{equation*}
with constant $\sigma>0$ and $\mu_{t_1,...,t_n}(z)=a+\sum_{i=1}^{n}(b_i-a)t_i$. As discussed in the main part of the paper, the choice of conditional vector fields and probability paths is a user defined choice as long as there is consistent and path-independent distribution transport.

%For ease of notation we set $T=(t_1,...,t_n)$ so that $p_{t_1,...,t_n}$ is more succinctly expressed as $p_{T}$ and likewise $u_{t_1,...,t_n}^{(i)}$ as $u_{T}^{(i)}$.

%\begin{align*}
%&\frac{\partial (u^{(i)})^{p}}{\partial t_j}
%+ \sum_{k=1}^{d} (u^{(j)})^{k} \frac{\partial (u^{(i)})^{p}}{\partial x_k}
%= \frac{\partial (u^{(j)})^{p}}{\partial t_i} + \sum_{k=1}^{d} (u^{(i)})^{k} \frac{\partial (u^{(j)})^{p}}{\partial x_k}.
%\end{align*}

\subsection{Proofs of theorems}

\begin{lemma}[General form Lemma \ref{lem:path}]
    Consider $u_{t_1,...,t_n}^{(i)}(x)$ vector fields. If there exists a unique generated probability density path $p_{t_1,...,t_n}(x)$ satisfying \eqref{eq:gen_continuity}, then the associated flows $\Phi_{t_1,...,t_n}^{(i)}(x)$ satisfy the path-independence condition.
\end{lemma}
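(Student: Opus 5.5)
We show that every sequential integration order produces the same pushforward of $p_{0,\ldots,0}$, namely the common density path $p_{t_1,\ldots,t_n}$ evaluated at the endpoint. The tool is the standard one-parameter fact: if a vector field $w_\tau$ generates $\rho_\tau$ through a continuity equation, and $w$ is regular enough for its flow to be well defined, then $\rho_\tau = (\text{flow})_{\#}\rho_{\tau_0}$. Uniqueness of the continuity-equation solution for given initial data is exactly what the hypothesis provides.

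\textbf{Step 1 (one coordinate).} Fix all parameters except $t_i$. Equation \eqref{eq:gen_continuity} is then a one-parameter continuity equation in $t_i$ driven by $u^{(i)}$. Let $\Phi^{(i)}$ be the flow of \eqref{eq:gen_ODE}, started from the relevant face of the cube. Define $\tilde p := (\Phi^{(i)})_{\#}$ of the starting density. Differentiating against test functions shows that $\tilde p$ also solves the same continuity equation with the same initial data. By the assumed uniqueness, $\tilde p$ equals $p_{t_1,\ldots,t_n}$ along that coordinate segment.

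\textbf{Step 2 (staircase).} Choose an ordering $\sigma$ of $\{1,\ldots,n\}$. Move from $(0,\ldots,0)$ to $(t_1,\ldots,t_n)$ along the staircase that increases $t_{\sigma(1)}$ first, then $t_{\sigma(2)}$, and so on; this matches the sequential construction of the flows in Section~\ref{sec:3.1}. At each corner, the density reached equals $p$ evaluated at that corner. Applying Step 1 on each segment and inducting on $k$ gives
\[
(\Phi^{(\sigma(k))})_{\#}\circ\cdots\circ(\Phi^{(\sigma(1))})_{\#}\,p_{0,\ldots,0} \;=\; p \text{ at the } k\text{-th corner.}
\]
At $k=n$ the right-hand side is $p_{t_1,\ldots,t_n}$, whatever $\sigma$ was.

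\textbf{Step 3 (conclusion).} All $n!$ compositions equal the same density, which is precisely the family of equalities replacing \eqref{eq:path_independence}. This is the direct $n$-parameter analogue of Lemma~\ref{lem:path}. The same argument applies to any piecewise-monotone path $\gamma$, since each piece is again a segment along which Step 1 applies.

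\textbf{Main obstacle.} The delicate point is Step 1: identifying the pushforward along the flow with the given solution $p$. The hypothesis must be read as saying two things. First, a single function $p$ solves all $n$ continuity equations. Second, for each $i$ and each fixed value of the other parameters, the $t_i$-equation has a unique solution for given initial data. I would state this interpretation explicitly and rely on $C^1$ regularity of $u^{(i)}$ for well-posedness of the flow. I would also take care that the initial data on each segment is the previous corner's density, and not, say, a boundary value of a different solution.
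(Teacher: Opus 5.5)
Your proposal is correct and follows the same staircase argument as the paper: transport sequentially along the coordinates in a chosen order, show each composition of pushforwards lands on $p_{t_1,\ldots,t_n}$, and conclude that all $n!$ orderings coincide. Your Step~1 identifies the pushforward along each segment with $p$ via uniqueness of the one-parameter continuity equation. The paper compresses that step into ``since the vector fields jointly generate $p_T$, it follows that\ldots''. Your explicit reading of the uniqueness hypothesis is the one that argument actually needs.
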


\begin{proof}
    Suppose that the vector fields $u_{T}^{(i)}(x)$ generate $p_{T}(x)$,
\begin{equation*}
    \frac{\partial p_{T}(x)}{\partial t_i} + \nabla \cdot \big(p_{T}(x)\, u_{T}^{(i)}(x)\big) = 0, 
    \qquad \forall i = 1, \dots, n.
\end{equation*}

There are $n!$ possible orderings in which one can successively transport along each coordinate, starting from $(0,\dots,0)$ and reaching $T = (t_1,\dots,t_n)$. Without loss of generality, it suffices to consider the ordering $t_1 \to t_2 \to \cdots \to t_n$, as all other cases follow analogously.

We proceed iteratively. First, we transport using the flow $\Phi_{t_1,0,\dots,0}^{(1)}$ which is associated to the vector field $u_{T}^{(1)}$, obtaining $(\Phi_{t_1,0,\dots,0}^{(1)})_{\#} p_{0,\dots,0}$.
Next, we transport using $\Phi_{t_1,t_2,0,\dots,0}^{(2)}$, yielding $(\Phi_{t_1,t_2,0,\dots,0}^{(2)})_{\#} \circ (\Phi_{t_1,0,\dots,0}^{(1)})_{\#} \, p_{0,\dots,0}$. Continuing in this manner, we arrive at the expression
\begin{equation*}
    (\Phi_{t_1,\dots,t_n}^{(n)})_{\#} \circ \cdots \circ (\Phi_{t_1,t_2,0,\dots,0}^{(2)})_{\#} \circ (\Phi_{t_1,0,\dots,0}^{(1)})_{\#} \, p_{0,\dots,0}.
\end{equation*}
Since the vector fields $u_{T}^{(i)}$ jointly generate $p_{T}(x)$, it follows that at $T = (t_1,\dots,t_n)$ we have
\begin{equation*}
(\Phi_{t_1,\dots,t_n}^{(n)})_{\#} \circ \cdots \circ (\Phi_{t_1,t_2,0,\dots,0}^{(2)})_{\#} \circ (\Phi_{t_1,0,\dots,0}^{(1)})_{\#} \, p_{0,\dots,0}
= p_{t_1,\dots,t_n}.
\end{equation*}
    Repeating the same argument for the remaining $n! - 1$ orderings, and noting that each construction yields $p_{t_1,\dots,t_n}$, we conclude that all $n!$ resulting expressions coincide.
\end{proof}

\begin{proposition}[General form of Proposition~\ref{prop:alternative_path}]
    Suppose that the vector fields $u_{t_1,..,t_n}^{(i)}(x)$ satisfy the integrability condition \eqref{eq:gen_Lie}
  Then there exists a unique flow $\Phi_{t_1,...,t_n}$ which satisfies the path-independence property.
\end{proposition}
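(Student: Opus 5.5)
The plan is to reduce the statement to the classical Frobenius-type result for multi-parameter flows, Proposition~19.29 of \citep{lee2003introduction}, which the two-parameter version (Proposition~\ref{prop:alternative_path}) already invokes. We look for a single map $\phi:[0,1]^n\times\mathbb{R}^d\to\mathbb{R}^d$ with $\phi_{0,\dots,0}(x)=x$ and $\partial_{t_i}\phi_T(x)=u^{(i)}_T(\phi_T(x))$ for every $i$. Such a $\phi$ automatically gives path independence, even pointwise. Integrating along the coordinate directions in any of the $n!$ orders traces a staircase path in $[0,1]^n$. Along each segment the one-parameter ODE in \eqref{eq:gen_ODE} has a unique solution by standard ODE theory, and $\phi$ restricted to that segment is such a solution. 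Hence every ordering ends at $\phi_T(x)$, and pushing forward $p_{0,\dots,0}$ gives the same measure. So the work is existence and uniqueness of $\phi$.

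Uniqueness is immediate from this same segment-wise ODE uniqueness. For existence, I would build $\phi$ explicitly by the fixed ordering $t_1\to t_2\to\cdots\to t_n$. Define $\phi_{t_1,0,\dots,0}$ by integrating $u^{(1)}$. Then, for $k\ge2$, define $\phi_{t_1,\dots,t_k,0,\dots,0}$ by integrating $u^{(k)}$ in $t_k$, starting from $\phi_{t_1,\dots,t_{k-1},0,\dots,0}$. By construction $\partial_{t_k}\phi=u^{(k)}(\phi)$ holds at the stage where $t_k$ is the variable of integration. It remains to show that the earlier equations $\partial_{t_i}\phi=u^{(i)}(\phi)$, $i<k$, survive the later integrations in $t_k$.

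To do this, fix $i<k$ and set $w(t_k):=\partial_{t_i}\phi-u^{(i)}(\phi)$, viewed as a function of $t_k$ with the other parameters fixed. At $t_k=0$ we have $w=0$ by the inductive hypothesis. Differentiating in $t_k$, commuting mixed partials (the fields are $C^1$; I would assume $C^2$ if needed, or argue via difference quotients), and using $\partial_{t_k}\phi=u^{(k)}(\phi)$ gives
\begin{equation*}
\partial_{t_k}w=\nabla_x u^{(k)}(\phi)\,\partial_{t_i}\phi+\partial_{t_i}u^{(k)}(\phi)-\partial_{t_k}u^{(i)}(\phi)-\nabla_x u^{(i)}(\phi)\,u^{(k)}(\phi).
\end{equation*}
Now substitute $\partial_{t_i}\phi=w+u^{(i)}(\phi)$. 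The integrability condition \eqref{eq:gen_Lie} for the pair $(i,k)$, read with the index convention matching Proposition~\ref{prop:alternative_path}, makes all terms not involving $w$ cancel. This leaves the linear ODE $\partial_{t_k}w=\nabla_x u^{(k)}(\phi)\,w$ with $w(0)=0$, so $w\equiv0$ by Gr\"onwall or linear ODE uniqueness. Induction on $k$ then completes the construction.

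The main obstacle is this cancellation step. First, the indices and sign conventions in \eqref{eq:gen_Lie} must be matched carefully to the bracket definition, so that the remainder is exactly linear in $w$. Second, regularity is needed to justify exchanging $\partial_{t_i}$ and $\partial_{t_k}$ on $\phi$. A further, lesser issue is global existence of the flows on $[0,1]^n$, which I would assume as in the two-parameter discussion (no blow-up, e.g.\ via Lipschitz or linear-growth fields).
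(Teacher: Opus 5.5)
You take a genuinely different route from the paper. The paper's proof is a single appeal to Proposition~19.29 of \citep{lee2003introduction}. It takes for granted that \eqref{eq:gen_Lie} is that result's hypothesis, then observes that sequential transport in any order reduces to the joint flow $\Phi$. You instead reprove the Frobenius-type statement directly: staircase integration in the order $t_1\to\cdots\to t_n$, the linear ODE $\partial_{t_k}w=\nabla_x u^{(k)}(\phi)\,w$ for the defect $w=\partial_{t_i}\phi-u^{(i)}(\phi)$, and segment-wise ODE uniqueness for both uniqueness and pointwise path independence. Your argument is self-contained and works for $C^1$ fields. You do not need $C^2$: differentiable dependence on parameters already gives that $\partial_{t_i}\phi$ solves $\partial_{t_k}\partial_{t_i}\phi=\partial_{t_i}u^{(k)}(\phi)+\nabla_x u^{(k)}(\phi)\,\partial_{t_i}\phi$. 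It also yields a flow on all of $[0,1]^n\times\mathbb{R}^d$ under a no-blow-up assumption, whereas the cited result is local. Most importantly, it forces you to check exactly where the hypothesis enters, which the citation skips.

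That check is where the obstacle you flagged genuinely bites, and it does not resolve in the statement's favor. After substituting $\partial_{t_i}\phi=w+u^{(i)}(\phi)$, the leftover terms are
\begin{align*}
R&=\partial_{t_i}u^{(k)}-\partial_{t_k}u^{(i)}+(\nabla_x u^{(k)})u^{(i)}-(\nabla_x u^{(i)})u^{(k)}\\
&=-\bigl(\partial_{t_k}u^{(i)}-\partial_{t_i}u^{(k)}\bigr)-[u^{(i)},u^{(k)}],
\end{align*}
with the paper's bracket $[u,v]=(\nabla_x u)v-(\nabla_x v)u$. So $R=0$ needs $\partial_{t_k}u^{(i)}-\partial_{t_i}u^{(k)}=-[u^{(i)},u^{(k)}]$. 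By contrast, \eqref{eq:integrability} has $+[u^{(i)},u^{(k)}]$, and so does \eqref{eq:gen_Lie} once its index typo is corrected to $\partial_{t_j}u^{(i)}-\partial_{t_i}u^{(j)}$. Under the literal hypothesis you get $R=-2[u^{(i)},u^{(k)}]$, not $0$.

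No choice of index reading rescues this, because the statement as written is false. Take $d=1$ (so no Jacobian convention is involved), $u_{t,s}(x)=x$ and $v_{t,s}(x)=e^{-t}$. Then $\partial_s u-\partial_t v=e^{-t}=[u,v]$, yet integrating $t$ then $s$ sends $x$ to $xe^{t}+se^{-t}$, while $s$ then $t$ sends it to $(x+s)e^{t}$. So for $s,t>0$ even the pushforwards of any $p_{0,0}$ differ by the nonzero translation $s(e^{t}-e^{-t})$.

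Your argument is correct once the hypothesis is the true compatibility condition $\partial_{t_k}u^{(i)}-\partial_{t_i}u^{(k)}=(\nabla_x u^{(k)})u^{(i)}-(\nabla_x u^{(i)})u^{(k)}$, which is what equality of mixed partials of a joint solution forces. State that explicitly instead of asserting that \eqref{eq:gen_Lie} produces the cancellation.
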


\begin{proof}
    Suppose that the vector fields $u_{T}^{(i)}$ satisfy the integrability condition
\begin{equation*}
    \partial_{t_i}u_{t_1,...,t_n}^{(i)}-\partial_{t_j}u_{t_1,...,t_n}^{(j)}=[u_{t_1,...,t_n}^{(i)},u_{t_1,...,t_n}^{(j)}]
\end{equation*}
    By Proposition~19.29 in \citep{lee2003introduction}, there exists a unique solution $\phi_{t_1,\dots,t_n}$ to the system
\begin{equation*}
    \frac{\partial \phi_{t_1,\dots,t_n}(x)}{\partial t_i}
    = u_{t_1,\dots,t_n}^{(i)}\big(\phi_{t_1,\dots,t_n}(x)\big),
    \qquad \forall i = 1, \dots, n,
\end{equation*}
In this case, the map $\Phi : [0,1]^n \times \mathbb{R}^d \to \mathbb{R}^d$, defined by
\[
\Phi_{t_1,\dots,t_n}(x) = \phi_{t_1,\dots,t_n}(x),
\]
is the unique flow associated with the family of vector fields $u_{T}^{(i)}$. It follows that transporting along the flows $\Phi^{(i)}$, which in this case is $\Phi$, in any order yields the same result.
\end{proof}

\begin{theorem}[General form of Theorem \ref{thm:gen}]
    Let the conditional vector fields $u_{t_1,...,t_n}^{(i)}(x|z)$ generate the unique conditional probability density path  $p_{t_1,...,t_n}(x|z)$ satisfying the analogous equations \eqref{eq:gen_continuity}. Then, the vector fields $u_{t_1,...,t_n}^{(i)}(x)$ in \eqref{eq:gen_VF} generate the unique probability density path $p_{t_1,...,t_n}(x)$ satisfying \eqref{eq:gen_continuity}.
\end{theorem}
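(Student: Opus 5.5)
The argument follows the standard conditional-to-marginal computation of \citep{lipman2023}, carried out once for each parameter direction $t_i$. The $n$ continuity equations in \eqref{eq:gen_continuity} are decoupled: the $i$-th one involves only $\partial_{t_i}$ and the field $u^{(i)}$, with all other $t_j$ frozen. It therefore suffices to fix $i$ and the remaining coordinates $(t_j)_{j\neq i}$ and verify a single one-parameter continuity equation. Throughout I will assume the regularity conditions implicit in CFM: $p_T(x)>0$ wherever the fields are evaluated, and enough integrability of $p_T(x\mid z)$, $\partial_{t_i}p_T(x\mid z)$ and $p_T(x\mid z)u^{(i)}_T(x\mid z)$ in $z$ that derivatives in $t_i$ and $x$ can be exchanged with $\int\cdot\, q(z)\,dz$ (dominated convergence / Leibniz rule).

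\textbf{Key computation.} Starting from the marginalization \eqref{eq:gen_marginal}, differentiate under the integral:
\[
\partial_{t_i} p_T(x) = \int \partial_{t_i} p_T(x\mid z)\, q(z)\,dz = -\int \nabla\cdot\big(p_T(x\mid z)\,u^{(i)}_T(x\mid z)\big)\, q(z)\,dz .
\]
The second equality uses the hypothesis that the conditional fields generate $p_T(\cdot\mid z)$, i.e.\ satisfy the conditional version of \eqref{eq:gen_continuity}. Next, pull the divergence out of the $z$-integral and rewrite the integrand using the definition \eqref{eq:gen_VF}:
\[
\int p_T(x\mid z)\,u^{(i)}_T(x\mid z)\,q(z)\,dz = p_T(x)\,u^{(i)}_T(x).
\]
This gives $\partial_{t_i}p_T + \nabla\cdot(p_T u^{(i)}_T)=0$ for every $i$, and hence for all values of the frozen coordinates. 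The boundary values are inherited by marginalizing the conditional boundary data, so $p_{0,\dots,0}=q(a)$, and similarly at the other corners.

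\textbf{Uniqueness.} Here the statement claims that the marginal path is \emph{the unique} path generated by the $u^{(i)}_T$. My approach is to reduce to one-parameter uniqueness of the continuity equation. Assume each marginal field $u^{(i)}_T$ is $C^1$, or at least locally Lipschitz in $x$ with linear growth. A solution of the $i$-th continuity equation is then determined by its value on a slice $t_i=\text{const}$, because it must equal the pushforward along the flow of $u^{(i)}$ (the standard uniqueness for transport equations with Lipschitz fields). Now take two solutions agreeing at the origin. Propagate along $t_1$, then $t_2$, and so on: this shows they agree on each successive coordinate face, and therefore everywhere. This is essentially the same sequential argument used in the proof of the general form of Lemma~\ref{lem:path}.

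\textbf{Main obstacle.} I expect the main difficulty to be this uniqueness step together with justifying the regularity of the marginal fields. The ratio in \eqref{eq:gen_VF} is Lipschitz only when $p_T$ stays positive and controlled. In the Gaussian affine case \eqref{eq:conditionals} with $\sigma>0$ this holds if $q$ has compact support, and I would first verify the assumptions there and then state them as standing hypotheses in general. The existence part is the routine calculation above.
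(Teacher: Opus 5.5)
Your key computation is the paper's proof essentially line for line. You substitute \eqref{eq:gen_VF} to get $p_T u^{(i)}_T=\int p_T(\cdot\mid z)\,u^{(i)}_T(\cdot\mid z)\,q(z)\,dz$, pass the divergence through the $z$-integral, apply the conditional continuity equation, and pull $\partial_{t_i}$ back out of the integral.

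The paper stops at that point. It never argues the word ``unique'' and never justifies the interchanges of derivatives and integrals. Your sequential-propagation step therefore goes beyond the paper: one-parameter uniqueness for the continuity equation with Lipschitz fields, applied along $t_1$ from the origin, then along $t_2$ on each resulting slice, and so on. That step is sound under your stated hypotheses. Your check that those hypotheses hold in the compactly supported Gaussian affine case is also right, since there $\nabla_x u^{(i)}_T(x)=\sigma^{-2}\,\mathrm{Cov}\bigl(b_i-a,\ \mu_T(z)\mid x\bigr)$ is bounded.

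One side remark is slightly off, though nothing in the theorem depends on it. With Gaussian conditional paths of width $\sigma>0$, the marginal at the origin is $q(a)$ convolved with $\mathcal{N}(0,\sigma^2)$, not $q(a)$ itself.
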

\label{sec:Proofs_of_theorems}
\begin{proof}
    Suppose that the conditional vector fields $u_{t_1,...,t_n}^{(i)}(x|z)$ generate $p_{t_1,...,t_n}(x|z)$. That is,
    \begin{equation*}
        \frac{\partial p_{t_1,...,t_n}(x|z)}{\partial t_i}+\nabla \cdot (p_{t_1,...,t_n}(x|z)u_{t_1,...,t_n}^{(i)}(x|z))=0 \hspace{1em},\forall i=1,...,n.
    \end{equation*}
    Then,
    \begin{align*}
        -\nabla \cdot (p_{t_1,...,t_n}(x)u_{t_1,...,t_n}^{(i)}(x))
        &=-\nabla \cdot \left(p_{t_1,...,t_n}(x)\int u_{t_1,...,t_n}^{(i)}(x|z)\frac{p_{t_1,...,t_n}(x|z)q(z)}{p_{t_1,...,t_n}(x)}dz\right)\\
        &=-\nabla \cdot \left(\int u_{t_1,...,t_n}^{(i)}(x|z)p_{t_1,...,t_n}(x|z)q(z)dz\right)\\
        &=-\int \nabla \cdot \left(u_{t_1,...,t_n}^{(i)}(x|z)p_{t_1,...,t_n}(x|z)q(z)\right)dz\\
        &=-\int \nabla \cdot \left(u_{t_1,...,t_n}^{(i)}(x|z)p_{t_1,...,t_n}(x|z)\right)q(z)dz\\
        &=\int \frac{\partial p_{t_1,...,t_n}(x|z)}{\partial t_i}q(z)dz\\
        &=\frac{\partial }{\partial t_i}\int p_{t_1,...,t_n}(x|z)q(z)dz\\
        &=\frac{\partial p_{t_1,...,t_n}(x)}{\partial t_i}.
    \end{align*}
    Therefore, the vector fields $u_{t_1,...,t_n}^{(i)}$ generate $p_{t_1,...,t_n}(x)$.
\end{proof}

\begin{corollary}[General form of Corollary \ref{thm:path_independence}]
    Suppose that the vector fields $u_{t_1,...,t_n}(x|z)$ generate the unique conditional probability density path  $p_{t_1,...,t_n}(x|z)$ satisfying the analogous equations \eqref{eq:gen_continuity}. Then, the associated flows $\Phi_{t_1,...,t_n}^{(i)}$ of the vector fields $u_{t_1,...,t_n}^{(i)}(x)$ given in \eqref{eq:gen_VF} satisfy the path-independence property.
\end{corollary}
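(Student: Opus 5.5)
The corollary follows by chaining the two general-form results just established: the general form of Theorem~\ref{thm:gen} and the general form of Lemma~\ref{lem:path}. The theorem supplies the hypothesis the lemma needs, namely a single density path jointly generated by all $n$ marginal fields. The lemma then converts that into the $n!$ equalities defining path independence.

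\textbf{Step 1 (marginal generation).} By hypothesis the conditional fields $u^{(i)}_{t_1,\dots,t_n}(x\mid z)$ jointly generate the unique conditional path $p_{t_1,\dots,t_n}(x\mid z)$. The general form of Theorem~\ref{thm:gen} then says that the marginal fields $u^{(i)}_{t_1,\dots,t_n}(x)$ defined in \eqref{eq:gen_VF} satisfy all $n$ continuity equations \eqref{eq:gen_continuity} with the marginal $p_{t_1,\dots,t_n}(x)$ from \eqref{eq:gen_marginal}. This holds for every $i$ simultaneously and with the same density $p$.

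\textbf{Step 2 (one path for all orderings).} The general lemma needs that this path be the unique one generated by the fields from the given boundary data $p_{0,\dots,0}=q(a)$. I would state this uniqueness as coming from the theorem's conclusion. If a justification is wanted, note that each continuity equation, read along a single coordinate direction with the others frozen, is a linear transport equation with a $C^1$ field. By the method of characteristics its solution is the pushforward under the corresponding flow, so it is determined by its value at the start of that segment.

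\textbf{Step 3 (apply the lemma).} Fix any ordering of the coordinates and transport $p_{0,\dots,0}$ one coordinate at a time. On each segment the pushforward by $\Phi^{(i)}$ solves the $i$-th continuity equation. It starts from the value of $p$ at the segment's starting point, so by the uniqueness above it coincides with $p$ along that segment. Inducting over the $n$ segments, every ordering ends at $p_{t_1,\dots,t_n}$, so all $n!$ composed pushforwards agree. This is exactly the path-independence property.

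The main obstacle is Step~2's uniqueness. The theorem's proof only verifies that the marginal fields satisfy \eqref{eq:gen_continuity}; it does not show these equations have no other solution. Within this paper I would treat uniqueness as an assumption inherited from the hypothesis and the theorem's statement, supported by the characteristics argument, which requires the marginal fields to be regular enough (e.g.\ $C^1$ and with well-defined global flows, which holds when $p>0$, as for the Gaussian paths considered).
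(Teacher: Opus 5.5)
Your proposal is correct and follows the paper's proof: it applies the general form of Theorem~\ref{thm:gen} to get that the marginal fields $u^{(i)}_{t_1,\dots,t_n}(x)$ jointly generate $p_{t_1,\dots,t_n}(x)$, and then invokes the general form of Lemma~\ref{lem:path} to conclude that all $n!$ orderings agree. Your Step~2 characteristics argument adds something the paper lacks: the paper only asserts uniqueness in the theorem's statement and uses it tacitly in the lemma's proof, whereas you justify it via uniqueness for the one-parameter continuity equation along each coordinate segment.
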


\begin{proof}
    Suppose that the conditional vector fields $u_{t_1,...,t_n}^{(i)}(x \mid z)$ generate $p_{t_1,...,t_n}(x \mid z)$. By Theorem~\ref{thm:gen}, it follows that the vector fields $u_{t_1,...,t_n}^{(i)}(x)$ generate $p_{t_1,...,t_n}(x)$. Finally, by Lemma~\ref{lem:path}, transporting along the flows $\Phi_{t_1,...,t_n}^{(i)}$ in any order yields the same result.
\end{proof}

\subsection{Discussion about the relation between PiFM and Wasserstein Barycenter}
\label{sec:appendix_wb_relation}
We hereafter discuss Theorem \ref{theorem:WB_exact_matching} and the connection between PiFM and Wasserstein Barycenter.

\begin{lemma}[Lemma 3.1 of \cite{boissard2013distributionstemplateestimatewasserstein}]\label{Lemma:3.1-App}

Let  $\Pi(\mu_1,\ldots,\mu_K )$ be the set of probability measures on $(\mathbb{R}^d)^K$ with marginals $\mu_1,\ldots,\mu_K$, respectively, and $$T (x_1,\ldots,x_K )= \underset{1\leq j\leq K}{\sum} \lambda_jx_j$$ with  $\lambda_j \geq 0$ such that $\underset{1\leq j\leq K}{\sum} \lambda_j = 1$. A probability measure $\nu$ is a barycenter of  $\mu_1,\ldots, \mu_K $ with weights $(\lambda_j)_{1\leq j\leq K}$ if and only if $\nu = T\#\gamma$ where $\gamma \in \Pi(\mu_1, \ldots, \mu_K)$ minimizes the multi marginal Optimal Transport:  
\begin{align} \label{eq:J-gam}
J(\gamma, \lambda)= \int \sum_{j=1}^{K}\lambda_j \left\| T(x_1, \ldots, x_K) - x_j\right \|^2d\gamma(x_1, \ldots, x_K) \end{align}
\end{lemma}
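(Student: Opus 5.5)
We must prove Lemma 3.1 of Boissard et al.: $\nu$ is a barycenter iff $\nu=T_\#\gamma$ with $\gamma$ optimal for the multi-marginal problem \eqref{eq:J-gam}. The plan is to compare two minimization problems and show they have the same value, with optimizers corresponding under $T_\#$.

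The first ingredient is the pointwise identity
\[
\sum_j\lambda_j\|y-x_j\|^2=\|y-T(x)\|^2+\sum_j\lambda_j\|T(x)-x_j\|^2,
\]
valid for all $y$ and $x=(x_1,\ldots,x_K)$. It follows by expanding the square and using $\sum_j\lambda_j=1$ and $\sum_j\lambda_j x_j=T(x)$. In particular $y=T(x)$ minimizes $y\mapsto\sum_j\lambda_j\|y-x_j\|^2$, and its minimum value is the integrand of $J$.

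\textbf{Lower bound.} Let $\nu\in\mathcal P_2$ be arbitrary, and fix optimal couplings $\pi_j\in\Pi(\nu,\mu_j)$. Glue them along their common first marginal $\nu$ via the gluing lemma (disintegrate each $\pi_j$ with respect to $\nu$ and take the product of the conditional kernels). This gives a measure $\eta$ on $\mathbb R^d\times(\mathbb R^d)^K$ with marginals $\nu,\mu_1,\ldots,\mu_K$. Let $\gamma$ be its projection onto the last $K$ coordinates. Then
\[
\sum_j\lambda_jW_2^2(\nu,\mu_j)=\int\sum_j\lambda_j\|y-x_j\|^2\,d\eta\ \ge\ \int\sum_j\lambda_j\|T(x)-x_j\|^2\,d\gamma=J(\gamma,\lambda)\ge J^*,
\]
where $J^*=\inf_\gamma J$.

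\textbf{Upper bound.} For any $\gamma\in\Pi(\mu_1,\ldots,\mu_K)$, the measure $(T,\mathrm{pr}_j)_\#\gamma$ couples $T_\#\gamma$ with $\mu_j$. Hence
\[
\sum_j\lambda_jW_2^2(T_\#\gamma,\mu_j)\le J(\gamma,\lambda).
\]

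Combining the two bounds, the barycenter value equals $J^*$. Existence of an optimal $\gamma$ comes from tightness of $\Pi$ and lower semicontinuity of the continuous nonnegative integrand, with finiteness guaranteed by second moments.

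\textbf{Forward direction.} If $\gamma$ is optimal, the upper bound shows $T_\#\gamma$ attains $J^*$, so $T_\#\gamma$ is a barycenter.

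\textbf{Converse.} If $\nu$ is a barycenter, then every inequality in the lower-bound chain is an equality. Equality in the pointwise-identity step forces $\int\|y-T(x)\|^2\,d\eta=0$, so $y=T(x)$ holds $\eta$-a.s. Therefore $\nu=\mathrm{pr}_{0\#}\eta=T_\#\gamma$, and equality $J(\gamma,\lambda)=J^*$ shows this $\gamma$ is optimal.

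The main technical point is the gluing/disintegration construction of $\eta$ and the measurability needed there. I would cite the standard gluing lemma for it rather than reprove it. The rest is the algebraic identity plus bookkeeping.
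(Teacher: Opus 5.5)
The paper does not prove this lemma. It quotes it from Boissard et al.\ \citep{boissard2013distributionstemplateestimatewasserstein}, where it is the standard multi-marginal characterization of barycenters going back to Agueh and Carlier \citep{agueh2011barycenters}, and uses it only as a black box in the proof of Theorem~\ref{theorem:WB_exact_matching}.

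Your proposal is a correct, self-contained proof along the standard lines. Each step is sound:
\begin{itemize}
\item the identity $\sum_j\lambda_j\|y-x_j\|^2=\|y-T(x)\|^2+\sum_j\lambda_j\|T(x)-x_j\|^2$, which uses only $\sum_j\lambda_j=1$;
\item the gluing of optimal plans $\pi_j\in\Pi(\nu,\mu_j)$ along $\nu$ for the lower bound;
\item the coupling $(T,\mathrm{pr}_j)_\#\gamma$ for the upper bound;
\item the equality case $\int\|y-T(x)\|^2\,d\eta=0$ for the converse.
\end{itemize}

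Compared with the bare citation, your version makes explicit two hypotheses that the paper's statement leaves implicit. The first is $\mu_j\in\mathcal P_2(\mathbb R^d)$. This guarantees $J^*<\infty$, which you need to extract $\int\|y-T(x)\|^2\,d\eta=0$ from the chain of equalities. It also gives $T_\#\gamma\in\mathcal P_2$, so $T_\#\gamma$ is an admissible competitor. The second is that ``barycenter'' means a minimizer of $\nu\mapsto\sum_j\lambda_jW_2^2(\nu,\mu_j)$ over $\mathcal P_2$. It does not mean a minimizer of $J$, as the main-text definition loosely suggests.

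Your argument also shows that the glued $\gamma$ in the converse is automatically optimal. So your Prokhorov/lower-semicontinuity step is needed only for the existence of a barycenter, not for the equivalence itself.
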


%\phil{TODO: change numbering and calls to (1), (2)}
By choosing $T$ as $T(a,b,c)  = a + t(b-a) + s(c-a) = (1-t-s) a + tb + sc$ and  $ \lambda = (1-t-s, t,s)$ we have: \[
J(\gamma,\lambda)
= \int \Big[
(1-t-s)\|T(a,b,c)-a\|^2
+ t\|T(a,b,c)-b\|^2
+ s\|T(a,b,c)-c\|^2
\Big] \, d\rho_0(a)\rho_1(b)\rho_2(c)
\]
With Optimal Coupling such that $b = T_1(a)$ and $c=T_2(a)$, $T$ only depends on $a$ and we have $T(a,b,c) = z_{t,s}(a)$ and the associated multi marginal transport is defined as
\begin{align}\label{eq:J-PiFM2}
J^{\text{PiFM}}(\rho_0, \lambda)
:= \int \Big[
(1-t-s)\| z_{t,s}(a)-a \|^2
+ t\| z_{t,s}(a)-T_1(a) \|^2
+ s\| z_{t,s}(a)-T_2(a) \|^2
\Big]\, d\rho_0(a)
\end{align}
Without specific assumptions, $J^{\text{PiFM}}(\rho_0, \lambda)$ is not guaranteed to be a minimizer of \eqref{eq:J-gam}. In the following, we give sufficient conditions referred to as the admissible family of deformations under which $J^{\text{PiFM}}(\rho_0, \lambda)$  minimizes \eqref{eq:J-gam}.

\begin{theorem}[Theorem 4.1 of \cite{boissard2013distributionstemplateestimatewasserstein}]\label{Thm4.1-App}
Suppose that $(T_i)_{1\leq i \leq K}$ is an admissible family of deformations on a domain $\Omega \subset \mathbb{R}^n$, and let $\mu \in \mathcal{P}_2(\Omega)$ be absolutely continuous with respect to the Lebesgue measure. Let $\mu_j = (T_j)_{\#}\mu$,  $\mu = (\mu_j)_{1 \leq j \leq K}$ and $\lambda = (\lambda_j)_{1 \leq j \leq K}$. Then,
\begin{align}\label{eq:WB-3}
    WB(\mu, \lambda) = \left(\sum_{j=1}^{K} \lambda_j T_j\right)_{\#} \mu
\end{align}
\end{theorem}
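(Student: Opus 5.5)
The plan is to reduce the statement to the multi-marginal characterization in Lemma~\ref{Lemma:3.1-App}. I would exhibit an explicit coupling, show that it minimizes $J(\gamma,\lambda)$, and then push it forward by $T(x_1,\ldots,x_K)=\sum_j \lambda_j x_j$. The natural candidate is the deterministic coupling
\[
\gamma^\ast := (T_1,\ldots,T_K)_{\#}\mu .
\]
Its $j$-th marginal is $(T_j)_{\#}\mu=\mu_j$, so $\gamma^\ast\in\Pi(\mu_1,\ldots,\mu_K)$. It also satisfies $T_{\#}\gamma^\ast=\big(\sum_j\lambda_jT_j\big)_{\#}\mu$, so once $\gamma^\ast$ is shown to be optimal, Lemma~\ref{Lemma:3.1-App} gives exactly \eqref{eq:WB-3}.

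\textbf{Step 1: rewrite $J$.} Using $\sum_j\lambda_j=1$, expand the integrand pointwise:
\[
\sum_j\lambda_j\|T(x)-x_j\|^2=\sum_j\lambda_j\|x_j\|^2-\Big\|\sum_j\lambda_jx_j\Big\|^2 .
\]
The first term integrates to $\sum_j\lambda_j\int\|x\|^2d\mu_j$, which is the same for every coupling. Minimizing $J$ over $\Pi(\mu_1,\ldots,\mu_K)$ is therefore equivalent to maximizing
\[
\sum_{i,j}\lambda_i\lambda_j\int\langle x_i,x_j\rangle\,d\gamma .
\]
Each term depends on $\gamma$ only through its two-dimensional marginal $\gamma_{ij}\in\Pi(\mu_i,\mu_j)$. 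Since all weights $\lambda_i\lambda_j$ are nonnegative, it suffices to show that $\gamma^\ast_{ij}=(T_i,T_j)_{\#}\mu$ maximizes $\int\langle x,y\rangle\,d\pi$ over $\Pi(\mu_i,\mu_j)$ for every pair $(i,j)$ simultaneously. This is the same as showing that it is a quadratic-cost optimal coupling between $\mu_i$ and $\mu_j$.

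\textbf{Step 2: pairwise optimality.} This is where the admissibility hypothesis enters, and it is the main obstacle. In the sense of Boissard et al., admissibility means that each $T_j$ is a gradient of a convex function and is invertible, and that the compositions $T_j\circ T_i^{-1}$ are again gradients of convex functions. Given this, $(T_i,T_j)_{\#}\mu=(\mathrm{id},T_j\circ T_i^{-1})_{\#}\mu_i$ is supported on the graph of the gradient of a convex function. By the Knott--Smith/Brenier characterization (support in the subdifferential of a convex function, i.e.\ cyclical monotonicity), such a coupling is optimal for the quadratic cost. The absolute continuity of $\mu$, together with invertibility, is what I would use to make $T_i^{-1}$ and the composition well defined $\mu_i$-a.e.

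Care is needed here: one must check the measurability and invertibility conditions in the definition, and that the diagonal terms $i=j$ are trivial. If admissibility is instead phrased through a common convex structure, say $T_j=\nabla\varphi_j$ with all $\varphi_j$ convex and ``compatible'', I would first prove the pairwise statement as a lemma before returning to this step.

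\textbf{Step 3: conclude.} Summing the pairwise maximality over $(i,j)$ shows that $\gamma^\ast$ maximizes the reduced objective, and hence minimizes $J(\gamma,\lambda)$. Lemma~\ref{Lemma:3.1-App} then identifies $T_{\#}\gamma^\ast=\big(\sum_j\lambda_jT_j\big)_{\#}\mu$ as a barycenter. Uniqueness, so that one may write $WB(\mu,\lambda)$, I would obtain from the absolute continuity of $\mu_1=\mu$ (taking $T_1=\mathrm{id}$ if needed) and the standard uniqueness of barycenters when one marginal is absolutely continuous (Agueh--Carlier).
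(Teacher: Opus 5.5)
The paper does not prove this statement. It quotes the result from Boissard et al.\ and uses it only as a black box in the proof of Theorem~\ref{theorem:WB_exact_matching}, so there is no internal argument to compare yours against. Your proof is correct and self-contained, and it is the standard ``compatible measures'' argument behind the cited result. You use Lemma~\ref{Lemma:3.1-App} to reduce the claim to multi-marginal optimality of $\gamma^\ast=(T_1,\ldots,T_K)_\#\mu$. The identity $\sum_j\lambda_j\|\bar x-x_j\|^2=\sum_j\lambda_j\|x_j\|^2-\|\bar x\|^2$ then splits $J$ into pairwise terms with nonnegative weights, and each term is optimized by $\gamma^\ast$ because all its two-dimensional marginals are optimal.

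A few points to tighten:
\begin{itemize}
\item \emph{Step 2 is not the obstacle you fear.} With the definition quoted in the paper, condition (3) says exactly that $T_j\circ T_i^{-1}$ is optimal. Since $(T_i,T_j)_\#\mu=(\mathrm{id},T_j\circ T_i^{-1})_\#\mu_i$, pairwise optimality is immediate. You need Knott--Smith only if you read ``optimal'' as ``gradient of a convex function''.
\item \emph{Absolute continuity is not what makes $T_i^{-1}$ well defined.} Each $T_i$ is a bijection, so $T_i^{-1}$ is defined everywhere. Absolute continuity of $\mu$ plays no role in showing that $(\sum_j\lambda_jT_j)_\#\mu$ is \emph{a} barycenter; its real role is uniqueness.
\item \emph{Uniqueness.} Do not ``take $T_1=\mathrm{id}$ if needed'', since appending a map changes the problem. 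Instead use condition (1), $T_{j_0}=\mathrm{Id}$, so that $\mu_{j_0}=\mu$ is absolutely continuous. Uniqueness then requires $\lambda_{j_0}>0$. In that case you can avoid citing Agueh--Carlier:
  \begin{itemize}
  \item $\gamma^\ast$ attains every pairwise optimum simultaneously, so any optimal $\gamma$ must attain each pairwise optimum with $\lambda_i\lambda_j>0$.
  \item In particular, for every $i$ with $\lambda_i>0$ its $(j_0,i)$-marginal is optimal, and by Brenier it equals $(\mathrm{id},T_i)_\#\mu$.
  \item This forces $T_\#\gamma=(\sum_j\lambda_jT_j)_\#\mu$.
  \end{itemize}
  If $\lambda_{j_0}=0$ (in the paper's application, the edge $t+s=1$), your argument still yields a barycenter, but uniqueness needs another absolutely continuous marginal with positive weight.
\item \emph{Scope.} Step 1 relies essentially on $\lambda_i\lambda_j\ge 0$. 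So neither your proof nor the theorem says anything about the extrapolation outside the simplex that the paper discusses empirically.
\end{itemize}
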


The notion of an admissible family of deformations is defined in \cite{boissard2013distributionstemplateestimatewasserstein} (Definition 4.2) as follows:
\begin{enumerate}
    \item there exists $j_0$ such that $T_{j_0} = \text{Id}$
    \item for each $1 \leq j \leq K$:  $T_j$  is one-to-one and onto,
    \item the composition $T_i \circ T_j^{-1}$ of two maps in this class remains an optimal one for all $1 \leq i, j \leq  K$
\end{enumerate} 

We now prove Theorem  \ref{theorem:WB_exact_matching}.

\begin{proof}[Proof of Theorem \ref{theorem:WB_exact_matching}]
Let's first recall that with optimal coupling such that $b = T_1(a)$ and $c=T_2(a)$ and in the limit of $\sigma \rightarrow 0$, we have $T(a,b,c) = z_{t,s}(a)  = a +(1-t)T_1(a)  + (1-s) T_2(b)$.  Therefore, by construction: 
\begin{align}
\hat{\rho}_{t,s} :=& \Gamma_{t,s}(\rho_0) \\    
:=& \left((1-t-s)\text{Id} + tT_1 +sT_2\right))_{\#}\rho_0  \label{eq:thing}  
\end{align}
where we recall that $\Gamma_{t,s}(\rho_0)=(\Phi_{t,s})_{\#}\circ (\Psi_{0,s})_{\#}\rho_0=(\Psi_{t,s})_{\#}\circ (\Phi_{t,0})_{\#}\rho_0$. Equation \eqref{eq:thing} can be rewritten as
$$ 
\hat{\rho}_{t,s} = \left(\sum_{j=1}^{K} \lambda_j T_j\right)_{\#} \rho_0 \\ 
$$ 
Assuming that $T_1$ and $T_2$ belong to the admissible family of deformation, using Theorem \ref{Thm4.1-App} we get that $\hat{\rho}_{t,s} $  is exactly the Wasserstein Barycenter, i.e PiFM generates the Wasserstein barycenter of $\rho_0, \rho_1, \rho_2$ with $\lambda = (1-t-s,t,s)$, this concludes the proof. 

Additionally, using \eqref{eq:J-PiFM2} and applying Lemma \ref{Lemma:3.1-App}, we get:
\begin{align*}
J^{\text{PiFM}}(\rho_0, \lambda) =& \int \left[(1-t-s) \left|| z_{t,s}(a) - a|\right |^2 + t\left|| z_{t,s}(a) - T_1(a)|\right |^2 + s \left|| z_{t,s}(a) - T_2(a)|\right |^2 \right]d\rho_0(a)  \\
=& \underset{\gamma}{argmin}\ J(\gamma,\lambda)
\end{align*}

\end{proof}

\subsection{Additional results relating PiFM to the Wasserstein Barycenter}

We note that not only the model predicts the Wasserstein barycenter within the simplex defined by $\rho_0$, $\rho_1$ and $\rho_2$, but we also empirically observe that for toy distributions outside of this simplex, $\rho^{\text{PiFM}}_{t, s}$ coincides with the horizontal barycenter $\rho_s^U(t) $ (resp. vertical barycenter $\rho_t^V(s) $) after applying $\Gamma_{0,s}$ (resp. $\Gamma_{t,0}$) to the initial and target distributions as shown in Supplementary Figure~\ref{fig:PV_vs_PU_vs_WB} and in Supplementary Figure~\ref{fig:WB_U_V_explained}. Below, we discuss the Gaussian case with proof related to horizontal and vertical Wasserstein barycenter. 

% \iffalse
% \sina{Conditions under which this statement holds for general complex distributions remain an open problem. Remove this: For Gaussian distributions and empirically in the toy settings considered here, the observed coincidence between the global, PiFM, horizontal, and vertical barycenters using $\lambda = (1-t-s, t, s) $ can be summarized as
% $WB_{\lambda}(\rho_0, \rho_1, \rho_2) = \rho_{t, s}^{\text{PiFM}} = \rho_s^U(t) = \rho_t^V(s)$
% with
% \begin{align*}
% \rho_s^U(t)
% &= \underset{\rho}{\arg\min}\;
% (1-t)\,\mathcal{W}_2^2\!\big(\phi_s^V(\rho_0), \rho\big)\nonumber
% + t\,\mathcal{W}_2^2\!\big(\phi_s^V(\rho_1), \rho\big), \\
% \rho_t^V(s)
% &= \underset{\rho}{\arg\min}\;
% (1-s)\,\mathcal{W}_2^2\!\big(\phi_t^U(\rho_0), \rho\big)\nonumber
%  +s\,\mathcal{W}_2^2\!\big(\phi_t^U(\rho_2), \rho\big).
% \end{align*}}
% \sina{shouldn't $\phi_t^U(\rho_0)$ be $(\Phi_{t,0})_{\#}\rho_0$?}

% \documentclass{article}
% \usepackage{amsmath, amssymb}
% \usepackage{amsthm}
% \usepackage{tikz}
%\newtheorem{definition}{Definition}

% \newtheorem{theorem}{Theorem}
% \newtheorem{assumption}{Assumption}
% \newtheorem{prop}{Proposition}
% \newtheorem{cor}{Corollary}
% \newtheorem{remark}{Remark}
%\begin{document}

\begin{theorem}\label{thm:gaussian-barycenter}
Let $m_0,m_1,m_2\in\mathbb R^d$. Suppose that there exists a symmetric positive semidefinite matrix \(\Sigma\in\mathbb R^{d\times d}\) such that
\[
\rho(0)=\mathcal N(m_0,\Sigma),\qquad
\rho(1)=\mathcal N(m_1,\Sigma),\qquad
\rho(2)=\mathcal N(m_2,\Sigma).
\]
Define the conditional vector fields
\[
\hat{u}_{t,s}(x):=u_{t,s}(x|z):=m_1-m_0,
\qquad
\hat{v}_{t,s}(x):= v_{t,s}(x|z):=m_2-m_0,
\]
and, for \((t,s)\in\Delta^2:=\{(t,s)\in[0,1]^2:\ t+s\le 1\}\), define
\[
\hat Q_{t,s}
:=
\mathcal N\!\Bigl((1-t-s)m_0+t\,m_1+s\,m_2,\Sigma\Bigr).
\]
Let \(\mathcal Q_{t,s}\) denote the probability measure with density \(\hat Q_{t,s}\).

Then:

\begin{enumerate}
\item [(i)] The family \(\hat Q_{t,s}\) satisfies the two continuity equations
\[
\partial_t \hat Q_{t,s}+\nabla\cdot(\hat Q_{t,s}\,\hat u)=0,
\qquad
\partial_s \hat Q_{t,s}+\nabla\cdot(\hat Q_{t,s}\,\hat v)=0,
\]
with boundary conditions
\[
\hat Q_{0,0}=\rho(0),\qquad
\hat Q_{1,0}=\rho(1),\qquad
\hat Q_{0,1}=\rho(2).
\]

\item [(ii)] For every \((t,s)\in\Delta^2\), \(\mathcal Q_{t,s}\) is the Wasserstein barycenter of
\(\rho(0),\rho(1),\rho(2)\) with weights \((1-t-s,t,s)\), i.e.,
\[
\mathcal Q_{t,s}
=
\operatorname*{arg\,min}_{\nu\in\mathcal P_2(\mathbb R^d)}
\Bigl\{
(1-t-s)\mathcal{W}_2^2(\nu,\rho(0))
+t\,\mathcal{W}_2^2(\nu,\rho(1))
+s\,\mathcal{W}_2^2(\nu,\rho(2))
\Bigr\}.
\]

\item [(iii)] The horizontal and vertical slice barycenter identities also hold:

\begin{itemize}
    \item [(a)] For each fixed \(s\in[0,1]\), define
\[
A_s:=(\Psi_s^{0})_\#\rho(0),\qquad
B_s:=(\Psi_s^{1})_\#\rho(1).
\]
Then
\[
A_s=\mathcal N(m_0+s(m_2-m_0),\Sigma),
\qquad
B_s=\mathcal N(m_1+s(m_2-m_0),\Sigma),
\]
and for every \(t\in[0,1]\),
\[
\mathcal Q_{t,s}
=
\operatorname*{arg\,min}_{\nu\in\mathcal P_2(\mathbb R^d)}
\Bigl\{
(1-t)\mathcal{W}_2^2(\nu,A_s)+t\,\mathcal{W}_2^2(\nu,B_s)
\Bigr\}.
\]

\item [(b)] For each fixed \(t\in[0,1]\), define
\[
C_t:=(\Phi_t^{0})_\#\rho(0),\qquad
D_t:=(\Phi_t^{1})_\#\rho(2).
\]
Then
\[
C_t=\mathcal N(m_0+t(m_1-m_0),\Sigma),
\qquad
D_t=\mathcal N(m_2+t(m_1-m_0),\Sigma),
\]
and for every \(s\in[0,1]\),
\[
\mathcal Q_{t,s}
=
\operatorname*{arg\,min}_{\nu\in\mathcal P_2(\mathbb R^d)}
\Bigl\{
(1-s)\mathcal{W}_2^2(\nu,C_t)+s\,\mathcal{W}_2^2(\nu,D_t)
\Bigr\}.
\]
\end{itemize}

\end{enumerate}
\end{theorem}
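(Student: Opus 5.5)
The plan is to reduce everything to the fact that all measures involved are translates of one fixed Gaussian $\mathcal N(0,\Sigma)$, so all optimal transport maps between them are translations.

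For (i), write $\hat Q_{t,s}(x)=g(x-\mu_{t,s})$, where $g$ is the $\mathcal N(0,\Sigma)$ density and $\mu_{t,s}=(1-t-s)m_0+tm_1+sm_2$. Two facts give the $t$-equation:
\[
\partial_t\hat Q_{t,s}=-\nabla g(x-\mu_{t,s})\cdot(m_1-m_0),
\qquad
\nabla\cdot(\hat Q_{t,s}\hat u)=\nabla g(x-\mu_{t,s})\cdot(m_1-m_0),
\]
the second because $\hat u$ is constant. The $s$-equation is identical with $m_2-m_0$. The boundary conditions follow by evaluating $\mu_{t,s}$ at $(0,0)$, $(1,0)$ and $(0,1)$.

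If $\Sigma$ is singular, $g$ is not a density. In that case I would verify the continuity equations weakly: test against $\varphi\in C_c^\infty$ and use that $\mathcal Q_{t,s}$ is the pushforward of $\mathcal N(0,\Sigma)$ under $y\mapsto y+\mu_{t,s}$. This is the one technical point I would flag.

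For (ii), I would avoid Theorem~\ref{Thm4.1-App} and argue directly. For any $\nu\in\mathcal P_2$, let $\bar\nu$ be its mean. The standard bound
\[
\mathcal W_2^2(\nu,\mathcal N(m_i,\Sigma))=\|\bar\nu-m_i\|^2+\mathcal W_2^2(\nu-\text{centered},\mathcal N(0,\Sigma))
\]
holds because the mean and centered parts decouple in the quadratic cost. Write $\tilde\nu$ for the centered $\nu$. The objective then becomes
\[
\sum_i\lambda_i\|\bar\nu-m_i\|^2+\mathcal W_2^2(\tilde\nu,\mathcal N(0,\Sigma)),
\]
since the weights sum to $1$. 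This is minimized uniquely by $\bar\nu=\sum_i\lambda_i m_i=\mu_{t,s}$ and $\tilde\nu=\mathcal N(0,\Sigma)$, which gives $\nu=\mathcal Q_{t,s}$. Nonnegativity of $(1-t-s,t,s)$ on $\Delta^2$ is used here, for strict convexity of the mean term.

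For (iii), I would interpret $\Psi^{0}_s,\Psi^{1}_s$ (resp. $\Phi^0_t,\Phi^1_t$) as the flows of the constant fields $\hat v$ (resp. $\hat u$). Hence they are the translations by $s(m_2-m_0)$ (resp. $t(m_1-m_0)$), which gives the stated Gaussians $A_s,B_s,C_t,D_t$ directly. Applying the two-measure version of the argument in (ii), the minimizer in (a) is $\mathcal N$ with mean
\[
(1-t)(m_0+s(m_2-m_0))+t(m_1+s(m_2-m_0))=\mu_{t,s},
\]
and (b) is symmetric.

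The main obstacle is making the mean/centered decomposition of $\mathcal W_2^2$ rigorous. This is a routine expansion of $\|x-y\|^2$ under a coupling together with a centering shift, which I would state and prove first as a small lemma. A secondary point is that in (iii) $t+s\le1$ is not needed, but the weights $(1-t,t)$ are still nonnegative.
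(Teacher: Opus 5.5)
Your proposal is correct and follows essentially the same route as the paper: the flows and slice measures are translations of $\mathcal N(0,\Sigma)$, and (ii)--(iii) rest on the same mean/centered splitting of $\mathcal W_2^2$ (the paper states it as an inequality, you as an exact identity, which also holds). Your explicit differentiation in (i) and your weak formulation for singular $\Sigma$ add care the paper omits; one small correction is that strict convexity of $m\mapsto\sum_i\lambda_i\|m-m_i\|^2$ follows from $\sum_i\lambda_i=1$ alone, so with your exact identity the nonnegativity of the weights is not actually needed.
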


\begin{proof}[Proof of Theorem \ref{thm:gaussian-barycenter}]

Set
\[
\gamma:=\mathcal N(0,\Sigma),
\qquad
\tau_m(x):=x+m.
\]
Then
\[
\rho(i)=(\tau_{m_i})_\#\gamma,
\qquad i=0,1,2.
\]

\medskip
\noindent
\textbf{Step 1: Explicit form of the flow and continuity equations.}

Since \(\hat u\) and \(\hat v\) are constant vector fields, their flows are translations:
\[
\Phi_{t,s}(x)=x+t(m_1-m_0),
\qquad
\Psi_{t,s}(x)=x+s(m_2-m_0).
\]
Hence
\[
(\Phi_{t,s})_\#\rho(0)
=
\mathcal N(m_0+t(m_1-m_0),\Sigma),
\]
and
\[
(\Psi_{t,s})_\#\rho(0)
=
\mathcal N(m_0+s(m_2-m_0),\Sigma).
\]
Hence,
\[
\mathcal Q_{t,s}
=
\mathcal N\!\Bigl(m_0+t(m_1-m_0)+s(m_2-m_0),\Sigma\Bigr)
=
\hat Q_{t,s}.
\]
Since \(\hat Q_{t,s}\) is a translated Gaussian with constant covariance and
its mean evolves according to the constant velocities \(\hat u,\hat v\),
it satisfies
\[
\partial_t \hat Q_{t,s}+\nabla\cdot(\hat Q_{t,s}\,\hat u)=0,
\qquad
\partial_s \hat Q_{t,s}+\nabla\cdot(\hat Q_{t,s}\,\hat v)=0.
\]
The boundary conditions are immediate from the definition.

\medskip
\noindent
\textbf{Step 2: Global barycenter identity.}

Let
\[
\lambda_0:=1-t-s,\qquad \lambda_1:=t,\qquad \lambda_2:=s,
\qquad (\lambda_0,\lambda_1,\lambda_2)\in\Delta^2.
\]
Set
\[
\bar m_{t,s}:=\lambda_0 m_0+\lambda_1 m_1+\lambda_2 m_2.
\]
Then
\[
\mathcal Q_{t,s}
=
\mathcal N(\bar m_{t,s},\Sigma)
=
(\tau_{\bar m_{t,s}})_\#\gamma.
\]

We claim that \(\mathcal Q_{t,s}\) minimizes
\[
J(\nu, \lambda):=\sum_{i=0}^2 \lambda_i \mathcal{W}_2^2(\nu,\rho(i)).
\]
Take any \(\nu\in\mathcal P_2(\mathbb R^d)\), and let \(m_\nu\) denote its mean.
Write \(\tilde \nu:=(\tau_{-m_\nu})_\#\nu\), so that \(\tilde \nu\) has mean zero.

For each \(i\), since \(\rho(i)=(\tau_{m_i})_\#\gamma\), any coupling of \(\nu\) and \(\rho(i)\) may be written in terms of random variables \(Y\sim \nu\), \(X\sim \gamma\). Expanding the square gives (because the cross term vanishes): 
\[
\mathbb E|Y-(X+m_i)|^2
=
\mathbb E|(Y-m_\nu)-X|^2 + |m_\nu-m_i|^2,
\]
Taking the infimum over couplings yields
\[
\mathcal{W}_2^2(\nu,\rho(i))
\ge
\mathcal{W}_2^2(\tilde\nu,\gamma)+|m_\nu-m_i|^2.
\]
Therefore
\[
J(\nu, \lambda)
\ge
\mathcal{W}_2^2(\tilde\nu,\gamma)+\sum_{i=0}^2 \lambda_i |m_\nu-m_i|^2.
\]
Since \(\sum_i\lambda_i=1\), the quadratic function
\[
m\longmapsto \sum_{i=0}^2 \lambda_i |m-m_i|^2
\]
is uniquely minimized at
\[
m=\sum_{i=0}^2 \lambda_i m_i=\bar m_{t,s}.
\]
Also,
\[
\mathcal{W}_2^2(\tilde\nu,\gamma)\ge 0,
\]
with equality if and only if \(\tilde\nu=\gamma\), i.e.
\[
\nu=(\tau_{\bar m_{t,s}})_\#\gamma
=\mathcal Q_{t,s}.
\]
Thus \(\mathcal Q_{t,s}\) is the unique minimizer of \(F\).

This step could also be simply demonstrated using Theorem \ref{theorem:WB_exact_matching} as $\Phi_t^{s}$ and $\Psi_s^{t}$ belong to an admissible family of deformation.

\medskip
\noindent
\textbf{Step 3: Horizontal slice barycenter identity.}

Fix \(s\in[0,1]\). Since \(\Psi_{t,s}\) is translation by \(s(m_2-m_0)\),
\[
A_s=(\Psi_{0,s})_\#\rho(0)
=\mathcal N(m_0+s(m_2-m_0),\Sigma),
\]
and
\[
B_s=(\Psi_{1,s})_\#\rho(1)
=\mathcal N(m_1+s(m_2-m_0),\Sigma).
\]
Their weighted \((1-t,t)\)-barycenter is therefore the Gaussian with the same
covariance \(\Sigma\) and mean
\[
(1-t)\bigl(m_0+s(m_2-m_0)\bigr)
+t\bigl(m_1+s(m_2-m_0)\bigr)
=
m_0+t(m_1-m_0)+s(m_2-m_0),
\]
which is exactly the mean of \(\mathcal Q_{t,s}\).
Hence \(\mathcal Q_{t,s}\) is the horizontal barycenter.

\medskip
\noindent
\textbf{Step 4: Vertical slice barycenter identity.}

Fix \(t\in[0,1]\). Since \(\Phi_{t,s}\) is translation by \(t(m_1-m_0)\),
\[
C_t=(\Phi_{t,0})_\#\rho(0)
=\mathcal N(m_0+t(m_1-m_0),\Sigma),
\]
and
\[
D_t=(\Phi_{t,0})_\#\rho(2)
=\mathcal N(m_2+t(m_1-m_0),\Sigma).
\]
Their weighted \((1-s,s)\)-barycenter is the Gaussian with covariance
\(\Sigma\) and mean
\[
(1-s)\bigl(m_0+t(m_1-m_0)\bigr)
+s\bigl(m_2+t(m_1-m_0)\bigr)
=
m_0+t(m_1-m_0)+s(m_2-m_0),
\]
again equal to the mean of \(\mathcal Q_{t,s}\).
Thus \(\mathcal Q_{t,s}\) is also the vertical barycenter. This proves the theorem.
\end{proof}
%\end{document}

\section{Additional Experimental Details and Results}
\label{sec:additional_results}

%\subsection{Additional Wasserstein Barycenter Results on Complex Distributions}
%\label{sec:appendix_wb_complex}

\begin{figure}[h]
\centering
\includegraphics[width=0.95\columnwidth]{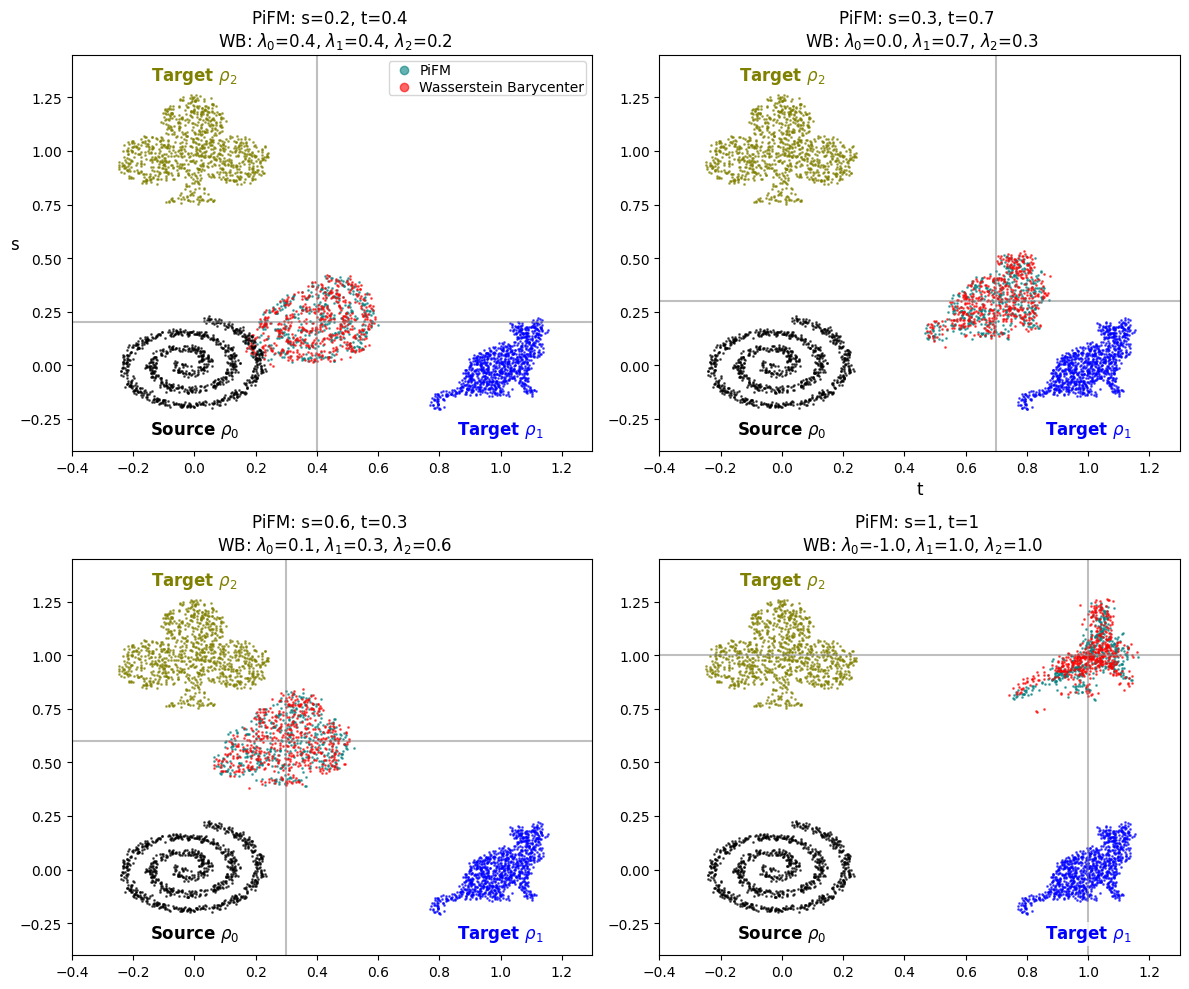}
\caption{\small \textit{Comparison between output distributions from PiFM model (ours) and Wasserstein Barycenters for a) within and  b) outside  the probability simplex for spiral, clover and cat distributions . The predicted distributions from the PiFM model correspond to the Wasserstein Barycenters.}}
\label{fig:PFM_vs_WB_csc}
\end{figure}

\subsection{Comparison of execution times for PiFM and Free Support Algorithm (PyOT)}\label{sec:comp_WPOT}
it is worth noting that computing Wasserstein barycenters is very expensive, even with the fast free-support computation implementation in the POT package. We compare execution time between the former and show that our method considerably reduces the time required to compute the barycenter (Supplementary Figure~\ref{fig:PFM_vs_WB_time}) and state that PiFM can be leveraged for fast inference of Wasserstein barycenters, with most of the computational cost incurred during model training. 

\begin{figure}[H]
\centering
\includegraphics[width=0.8\columnwidth]{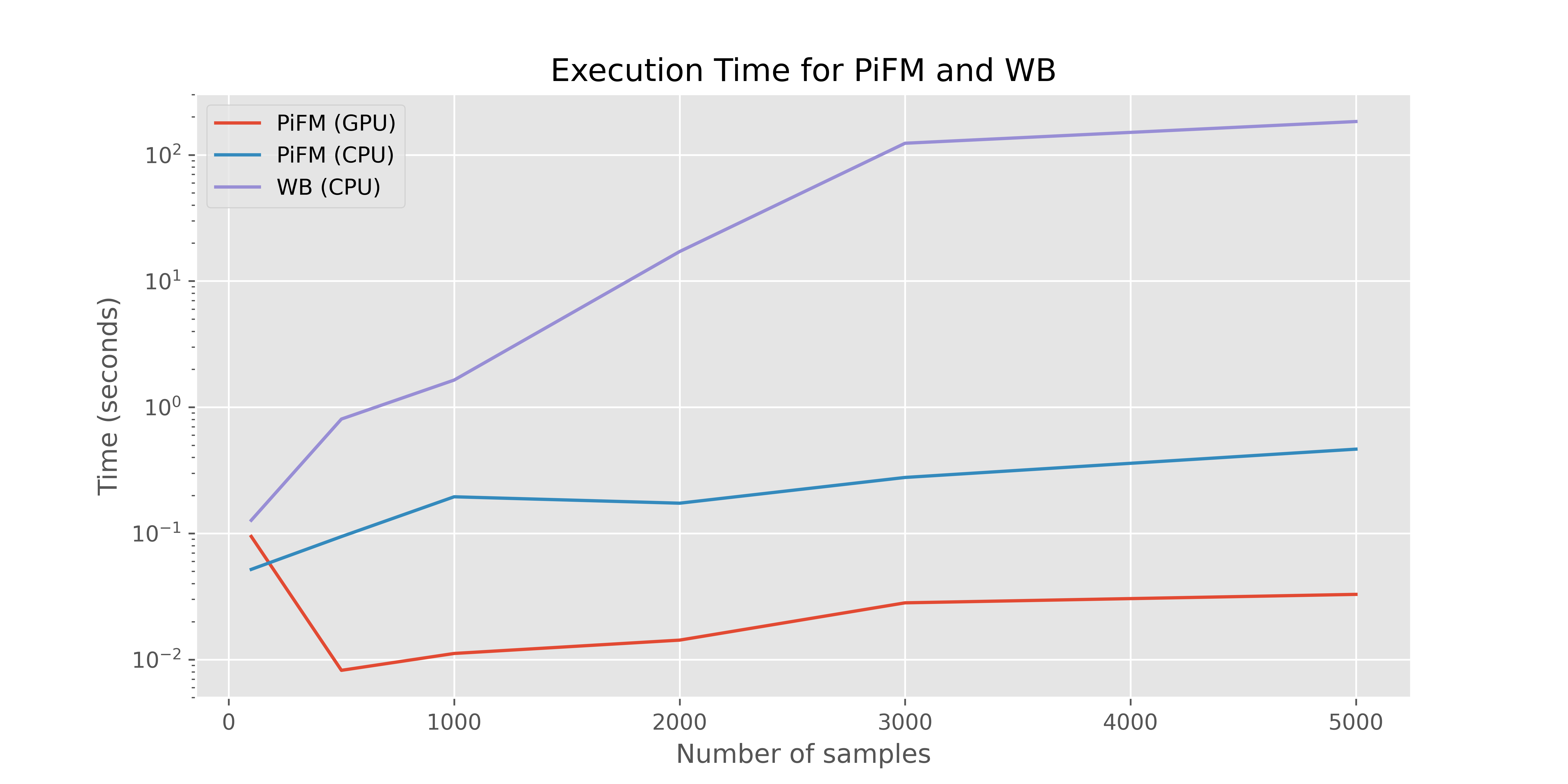}
\caption{Comparison of execution times between free support barycenter (POT package) and PiFM (ours).}
\label{fig:PFM_vs_WB_time}
\end{figure}

\subsection{Supplementary Details and Results for Unsupervised Image-to-Image Translation}

\subsubsection{Dataset, Preprocessing and Model architecture for Image-to-Image Translation}\label{sec:Model_architecture}

We train and evaluate on the CelebA dataset \citep{CelebA}. From the official split we extract images and attribute labels, and use two attributes in the joint experiment: \texttt{Smiling} (attribute~A) and \texttt{Black\_Hair} (attribute~B). Each image is center-cropped to 178, resized to $64\times64$, converted to a tensor and normalized to $[-1,1]$ (per-channel mean $0.5$, std $0.5$). For training we partition the dataset into four subsets by attribute presence: \emph{sources} (neither A nor B), \emph{A-only} (has A, not B), \emph{B-only} (has B, not A), and \emph{both}.

For model architecture, We use a shared UNet backbone with two per-pixel MLP heads implemented as $1\times1$ convolutions (equivalent to applying a small MLP independently at every spatial location) with $SiLU$ non-linear activation function. Each of these two MLPs outputs the vector field corresponding to the flow or progression along each of the attributes mentioned above. Details of the model architecture used in this experiment are summarized in Table~\ref{tab:model_architecture_pfm} in Appendix~\ref{sec:additional_results}.

\begin{table}[H]
\caption{Summary of the model architecture used in the unsupervised image-to-image translation experiment (Section~\ref{sec:Training_PFM_ImageTranslation}).}
\centering
\scriptsize
\setlength{\tabcolsep}{3pt}
\renewcommand{\arraystretch}{1.2}

\begin{tabular}{@{} l p{3.2cm} c p{2.2cm} p{3.2cm} @{}}
\toprule
\textbf{Subnetwork} & \textbf{Component} & \textbf{\# PWE} & \textbf{Input / Output} & \textbf{Details} \\
\midrule

UNet backbone
& Encoder (downpath): [128 $\to$ 256 $\to$ 256 $\to$ 256], 2 ResBlocks per level
& 2\textsuperscript{*}
& 3 / 128
& Downsampling encoder with residual blocks per stage \\

& Bottleneck (mid): 256 channels with attention
& 2\textsuperscript{*}
& 128 / 128
& Residual blocks + attention at mid-resolution \\

& Decoder (uppath): [256 $\to$ 256 $\to$ 128 $\to$ 128]
& 2\textsuperscript{*}
& 128 / 128
& Symmetric upsampling with skip connections \\

\midrule

Per-pixel MLP (Head\_h)
& Conv MLP: 128 $\to$ 128 $\to$ 3 (Conv1x1 + SiLU + Conv1x1)
& 2
& 128 / 3
& Predicts per-pixel field $v(t,x)$ \\

Per-pixel MLP (Head\_v)
& Conv MLP: 128 $\to$ 128 $\to$ 3 (Conv1x1 + SiLU + Conv1x1)
& 2
& 128 / 3
& Predicts per-pixel field $v(t,x)$ \\

\midrule

Prediction semantics
& --
& --
& (B,3,H,W)
& Output is per-pixel displacement field $v(t,x)$ \\
\bottomrule
\end{tabular}

\vspace{0.5em}
\begin{flushleft}
\footnotesize{\textsuperscript{*} ``\# PWE'' indicates number of residual blocks per level.}
\end{flushleft}
\label{tab:model_architecture_pfm}
\end{table}

\subsubsection{Meta Flow Matching (MFM) as a baseline for unsupervised image-to-image translation}
{
    The original meta flow matching (MFM) \citep{atanackovic2025meta} work was developed and evaluated on low-dimensional point-cloud tasks (letters, trellis) where each training environment is a compact 2D point distribution and the model architecture is a GNN + flow decoder tailored to that setting. To test whether the MFM method, the alternating meta-style optimization that jointly learns per-environment embeddings and a conditional flow decoder, can scale to high-dimensional image data, we adapted the MFM pipeline to the CelebA \citep{CelebA} image dataset. Our goal was to keep the MFM training dynamics intact while replacing dataset- and architecture-specific pieces so that the same meta-learning principle can operate on images and produce composed transformations (e.g., smiling + aging).
    
    Point-cloud environments and images are different in data dimensionality, spatial structure, and inductive biases. GNNs operate on unordered point sets and they are a suitable for letters, but they are not appropriate for dense image signals. To respect image structure and give the flow decoder expressive local features and to have a fair comparison with what we proposed in our PiFM pipeline, we replaced the original encoder/decoder backbone with a the same image backbone we used in our PiFM pipeline (architecture details are presented in Table~\ref{tab:model_architecture_pfm}: a U-Net-style feature extractor paired with small per-task MLP heads. This preserves the essential MFM decomposition: an encoder that produces a per-environment representation and a decoder that predicts flow fields conditioned on that representation, while providing the right inductive bias for images. Moreover, similar to MFM, we used two-optimizer alternating training: a flow optimizer that updates the decoder MLP heads to match conditional flow targets and an encoder optimizer that updates the backbone embeddings (the meta parameters). Alternating these updates encourages the encoder to produce embeddings that generalize across environments while the decoder learns to follow them. All other core MFM choices (the flow-matching objective, sampling of interpolation times, EMA for evaluation snapshots, and the alternating schedule itself) were left intact so that any differences in outcome can be attributed to the change in data domain and architecture rather than to changes in the meta-method. At training, a practical issue when moving to images is how to pair source images (neither attribute) with target images (A-only or B-only) for supervised flow targets. We used nearest-neighbor (OT-style pairing) on flattened image features (or simple L2 in pixel space for diagnostics) to create target displacements. At inference we generate composed A+B outputs by integrating the sum of the two learned vector fields (or by sequential integrations $t\rightarrow$ or $s\rightarrow$), which mirrors the composition strategy used in our PiFM experiments.
    
    We kept the same evaluation philosophy: visualize sample trajectories (frames along integration paths) and compute distribution distances between predicted distributions and target distributions using the same metrics used for point-cloud experiments. In Figure \ref{fig:MFM_Results}, we provide grids of intermediate frames for the three integration strategies (t-then-s, s-then-t, diagonal/combined) so readers can see whether composition produces plausible joint-attribute images. As shown, MFM fails to modify the source image along either one or both flow axes.

    \paragraph{Training and sampling.} We train two conditional vector fields simultaneously: (i) $v_h(t,x)$, vector field for attribute A (Smiling), and (ii) $v_v(s,x)$, vector field for attribute B (Black\_Hair).        
    On each training step we sample a minibatch of source images $\{x_a\}$ (from \emph{sources}), target-A images $\{x_b\}$ (from \emph{A-only}) and target-B images $\{x_c\}$ (from \emph{B-only}). Each source is paired to a semantically similar target in the corresponding target minibatch using mini-batch discrete optimal transport (EMD) on flattened pixel vectors; denote the aligned targets by $x_b^{\text{aligned}}$ and $x_c^{\text{aligned}}$. We optimize the network with Adam (base learning rate of $2\!\times\!10^{-4}$), gradient clipping of 1.0 and a linear warm-up schedule. An exponential moving average (EMA, decay $0.999$) of the model parameters is maintained and used for visualization similar to \citep{tong2023improving}.

    \vspace{-5pt}
        
    \paragraph{Inference.} Let $N_{\text{steps}}$ be the integer number of update steps chosen for the inference (we use Euler updates of step size $1/N_{\text{steps}}$). The three flow integration strategies implemented are: (i) \textit{Integrate $t$ then $s$ ($t\!\to\!s$)}: First integrate the A-field from $t=0$ to $1$, then integrate the B-field from $s=0$ to $1$. We split the total number of steps into two parts (approximately half for the first phase, half for the second) so the total frame count equals $N_{\text{steps}}+1$. At each sub-step we update:$        
    x \leftarrow x + \frac{1}{N_{\text{phase}}}\, v_h(t,x)
    \quad\text{or}\quad
    x \leftarrow x + \frac{1}{N_{\text{phase}}}\, v_v(s,x),
    $
    depending on the phase. (ii) \textit{Integrate $s$ then $t$ ($s\!\to\!t$)}: Same as above but apply the B-field first and the A-field second. Order-of-application is intentionally swapped to reveal commutativity. (ii) \textit{Integrate diagonally ($t=s$)}: Integrate along the diagonal $t=s=\tau$ from $0$ to $1$. At each step evaluate both fields and take their sum:
    $
    v(\tau,x) = v_h(\tau,x) + v_v(\tau,x),\qquad
    x \leftarrow x + \frac{1}{N_{\text{steps}}}\, v(\tau,x).
    $
    This approximates simultaneous application of both vector fields.

}

\newpage

\begin{figure}[H]        
    \centering    
    \centerline{\includegraphics[width=0.92\columnwidth]{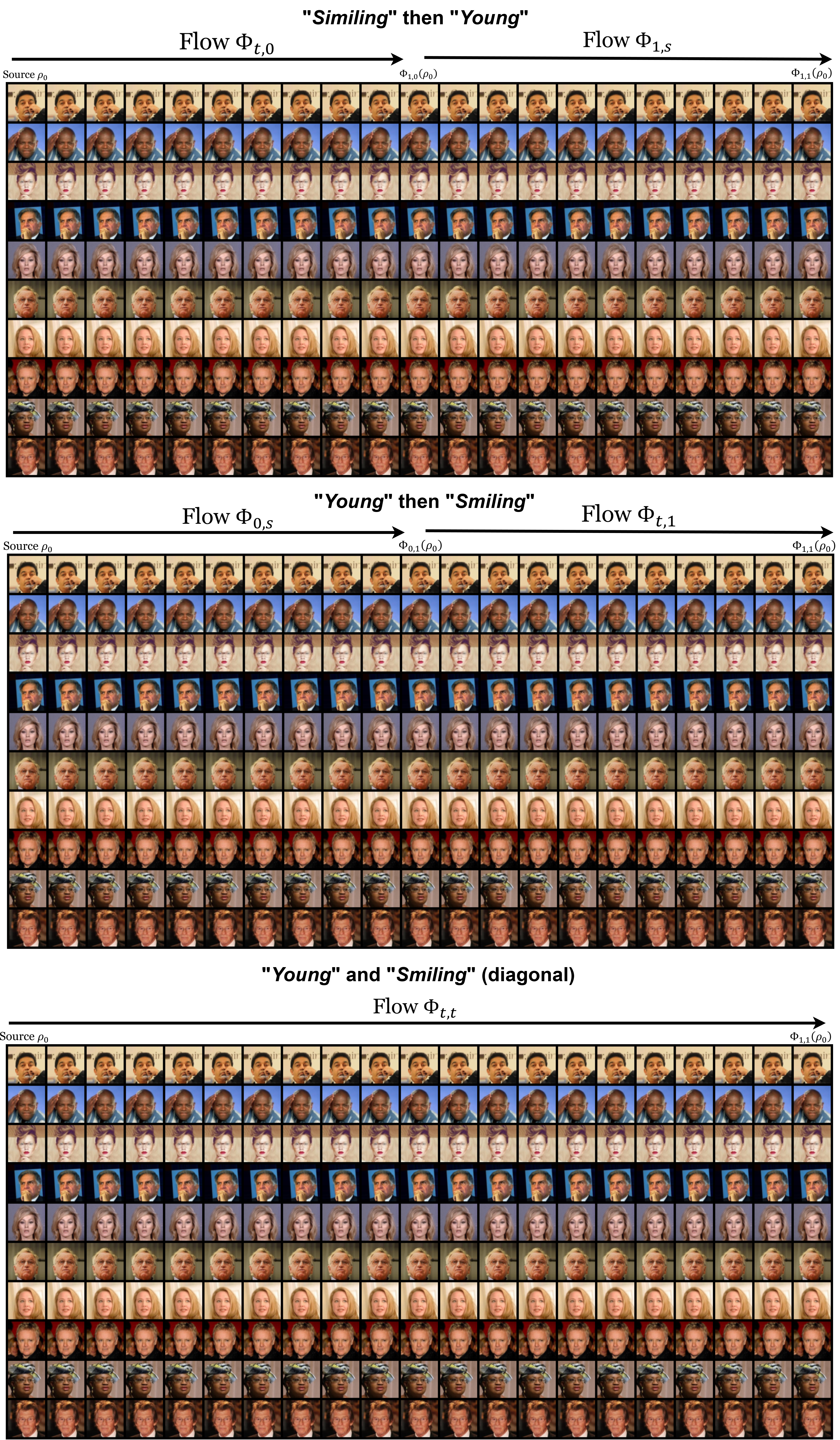}}
    \caption{\small Image-to-image translation results of Meta Flow Matching (MFM) on CelebA \citep{CelebA} dataset across three different flow integration strategies. The A-field (\texttt{Smiling}) produces localized mouth/cheek edits consistent with smiling, while the B-field (\texttt{Young}) darkens and re-textures the hair region. As shown, MFM fails to modify the source image along either one or both flow axes.}
    \label{fig:MFM_Results}
\end{figure}

\begin{figure}[H]        
    \centering    
    \centerline{\includegraphics[width=0.95\columnwidth]{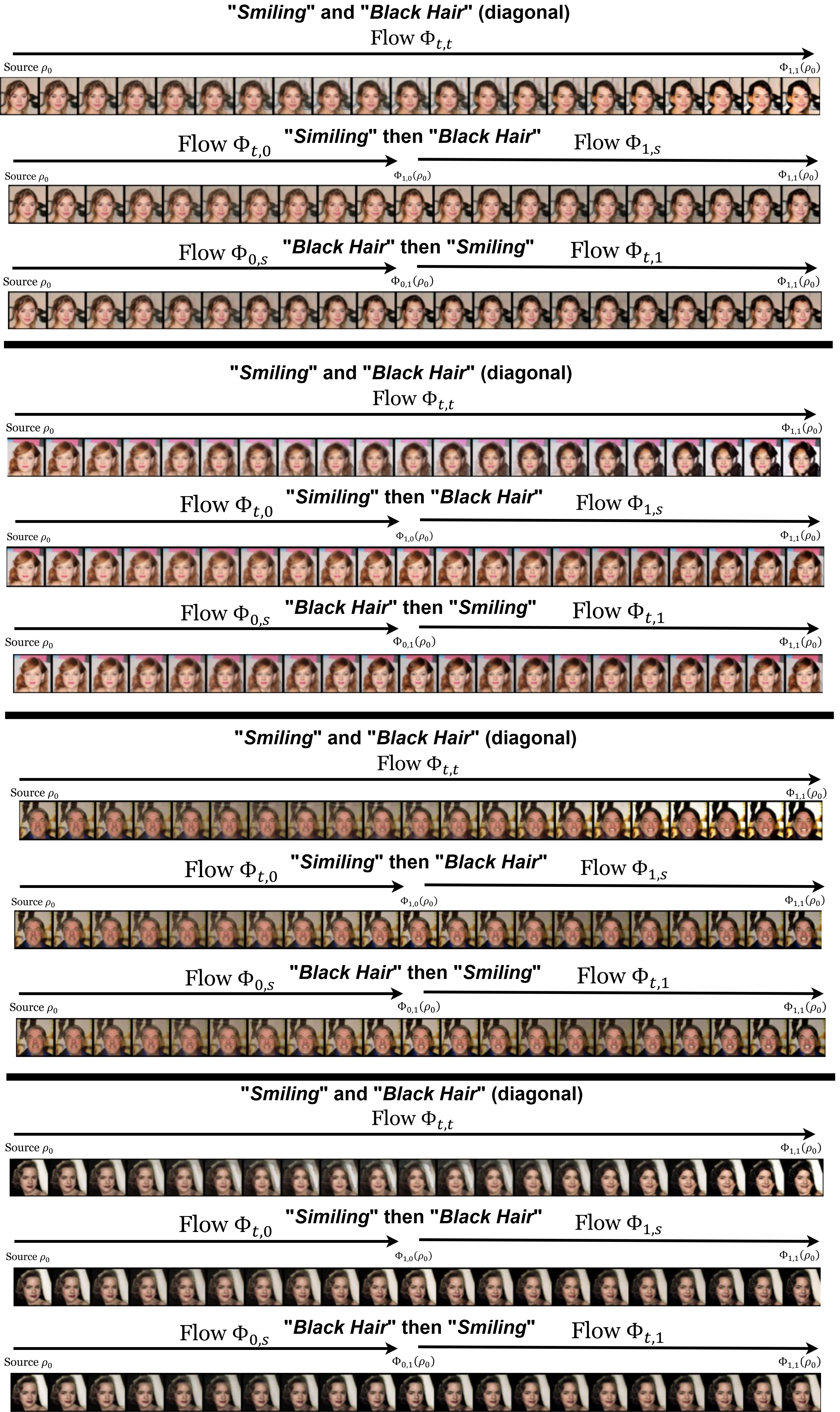}}
    \caption{\small {More results on the Image-to-image translation task for PiFM on CelebA \citep{CelebA} dataset across three different flow integration strategies. The A-field (\texttt{Smiling}) produces localized mouth/cheek edits consistent with smiling, while the B-field (\texttt{Black Hair}) darkens and re-textures the hair region.}}
    \label{fig:PFM_More_Results}
\end{figure}

\subsubsection{FID Evaluation of Commutativity}
\label{sec:appendix_fid}

Standard image fidelity metrics such as FID are not well suited for evaluating PiFM on the commutativity task (Figure~\ref{fig:PFM_CelebA_JointAttr_Smile_BlackHair}), as baseline methods often produce images that are close in pixel space without achieving the desired attribute changes. Figure~\ref{fig:FID_experiment} illustrates this limitation by applying FID to the experiment in Figure~\ref{fig:PFM_CelebA_JointAttr_Smile_BlackHair} in an attempt to quantify commutativity.

\begin{figure}[H]
    \centering
    \includegraphics[width=0.8\linewidth]{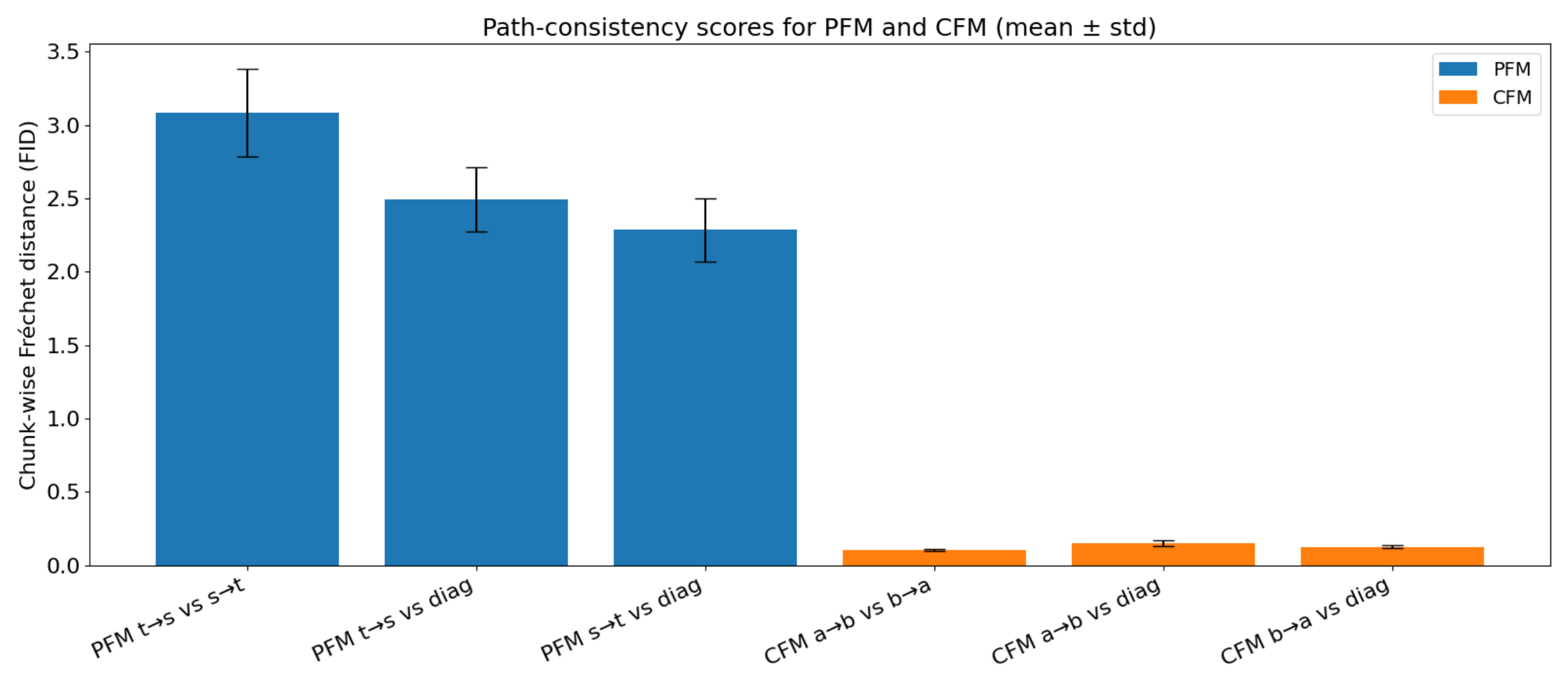}
    \caption{Quantitative experiment for commutativity of PiFM using FID, comparison with CFM. Based on the experiment in Section~\ref{sec:Training_PFM_ImageTranslation}.}
    \label{fig:FID_experiment}
\end{figure}

\subsection{Single-Cell Experiment}
\subsubsection{Data Source and Preprocessing}
\label{sec:single-cell-preprocessing}
We use the public MEF-to-iPSC reprogramming single-cell RNA-seq dataset~\citep{schiebinger2019optimal}, using the serum-condition H5AD file from Figshare article 20051492. The raw object is already normalized and log-transformed. We keep cells from the serum condition, downsample to at most 1,500 cells per experimental day, and focus on the early reprogramming window from day 0 to day 6 at half-day resolution. This yields 19,492 cells after filtering.

For the state representation, we select up to 3,000 highly variable genes, scale expression values with maximum value 10, and compute a 50-dimensional PCA embedding. We use this PCA representation as the input space for PiFM; PHATE embeddings are computed only for visualization. The two biological axes are experimental day and the provided \texttt{Pluripotency} signature score. Experimental day represents chronological reprogramming, while the pluripotency score measures acquisition of an iPSC-like identity. For balanced grouping and diagnostics, we additionally split the pluripotency score within each day into low, middle, and high tertiles.

\subsubsection{Evaluation of Prediction and Commutativity}
\label{sec:single-cell-evaluation}
We evaluate whether the two integrated composition orders produce consistent endpoint distributions, and whether these endpoints occupy the same region as the empirical target distribution. The source distribution is always the day 0, low-pluripotency group, and the comparison is performed over 39 endpoint groups, corresponding to 12 day intervals and 3 pluripotency-bin intervals. To make distances comparable across groups with different source-target scales, we report centroid distance and sliced Wasserstein distance normalized by the corresponding source-to-target distance.

As shown in Table~\ref{tab:singlecell-metrics}, the two composition orders produce endpoint distributions that are close to each other relative to the overall source-target scale. Comparing each endpoint to the empirical target further shows that both compositions land in the same broad target regional, though they are not exactly identical. This imperfect agreement is expected because each endpoint is obtained by composing learned vector fields over relatively long day and pluripotency intervals, so small local integration and approximation errors can accumulate along the trajectory.

\begin{table}
\caption{Target distribution consistency measured by normalized centroid distance and normalized sliced Wasserstein distance. The metrics evaluate whether the two composition orders produce similar endpoint distributions and whether each endpoint lies in the same target region as the empirical distribution.}
\centering
    \begin{tabular}{lrr}
    \toprule
    Comparison & Normalized centroid & Normalized $W_2$ \\
    \midrule
    day then Pluripotency vs true & 0.6979 $\pm$ 0.1263 & 0.7917 $\pm$ 0.1684 \\
    Pluripotency then day vs true & 0.5651 $\pm$ 0.1255 & 0.6601 $\pm$ 0.1691 \\
    day then Pluripotency vs Pluripotency then day & 0.3459 $\pm$ 0.0572 & 0.3465 $\pm$ 0.0560 \\
    \bottomrule
    
    \label{tab:singlecell-metrics}
    \end{tabular}
\end{table}

\subsection{Comparison of distributions obtained from PiFM (ours) and Wasserstein barycenters using Free Support Algorithm (PyOT) for different values t, s}

\begin{figure}[H]
%\vskip 0.2in
\centering
\centerline{\includegraphics[width=0.8\columnwidth]{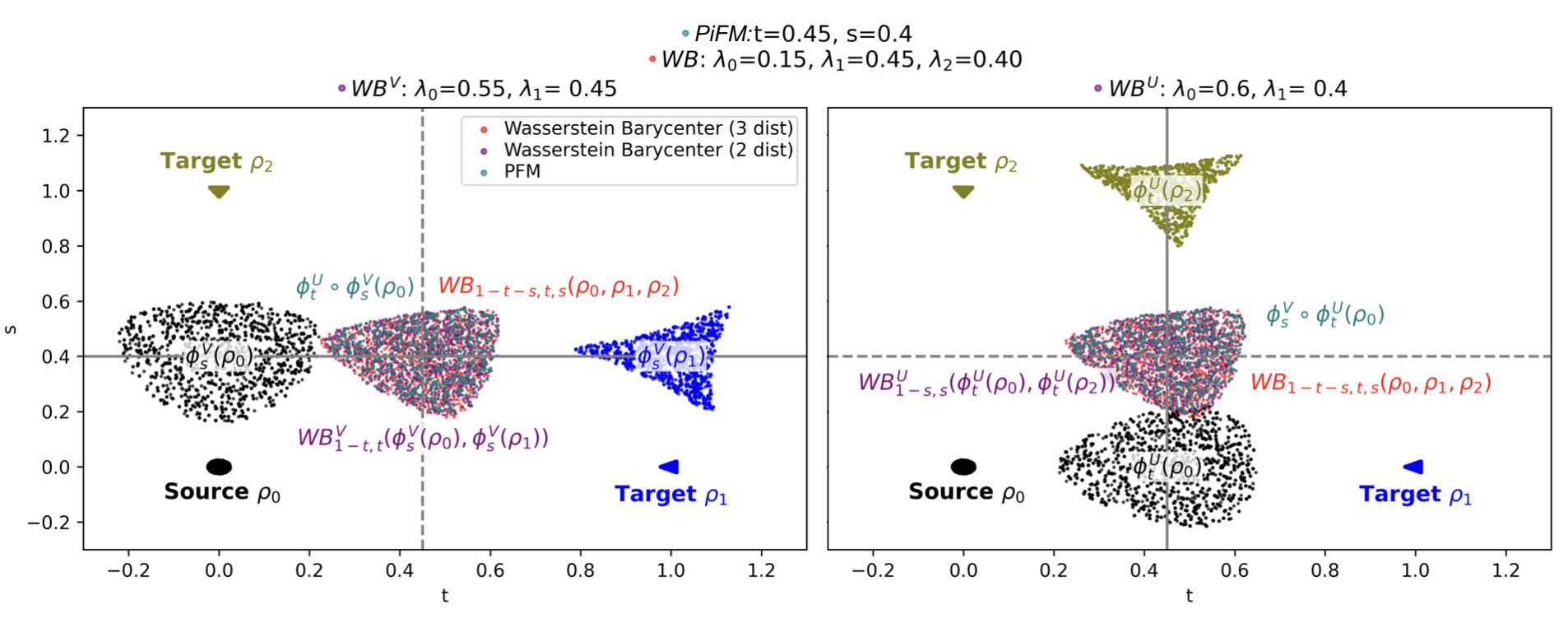}}
\caption{\small \textit{Comparison between output distributions from PiFM (ours), global Wasserstein barycenter and Wasserstein barycenters between $\phi_s^V$ (left) and $\phi_t^U$ (right) projections for t=0.45, s=0.4. Source and target distributions are rescaled for clarity.}}
\label{fig:PV_vs_PU_vs_WB}
\end{figure}

\subsection{Comparison of distributions obtained from PiFM (ours), Wasserstein barycenters using Free Support Algorithm (PyOT) between source and targets and vertical and horizontal Wasserstein barycenters}

In the following, we define $\phi_{t, s=0}(\rho) $ as the output distribution from the push forward operation $(\phi^U_t)_\#$ of $\rho$ with fixed $s=0$, i.e the horizontal integration of the vector field for $t' \in [0,t]$ and $s=0$. Similarly, we define $\phi_{t=1, s}(\rho) $ as the output distribution from the push forward operation $(\phi^V_s)_\#$ of $\rho$ with fixed $t=1$, i.e the integration of the vector field for $s' \in [0, s]$ and $t=1$. 

\begin{figure}[H]
    \centering
    \includegraphics[width=1\columnwidth]{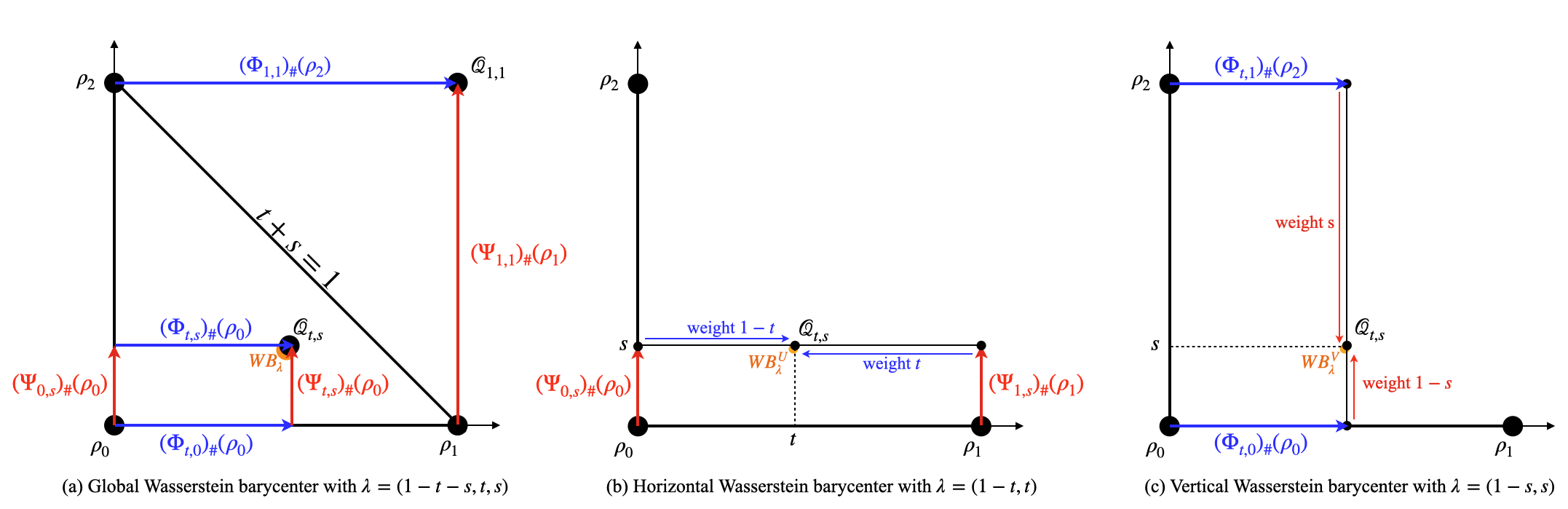}
    \caption{a) the global $(t, s)$-diagram with the simplex $\Delta = \{(t, s) \in [0,1]:  t+s \leq 1\} $ and commuting paths. b) the horizontal slice for fixed s. c) : the vertical slice for fixed t.}
    \label{fig:WB_U_V_explained}
\end{figure}

Supplementary Figures~\ref{fig:Horizontal_Verttical_WB_PiFM_csc} and~\ref{fig:Horizontal_Verttical_WB_PiFM_csc2} provide additional comparisons at different values of $(t,s)$, showing that the PiFM output agrees with the global Wasserstein barycenter and with the corresponding horizontal and vertical slice barycenters.

\begin{figure}[H]
\centering
\includegraphics[width=0.65\columnwidth]{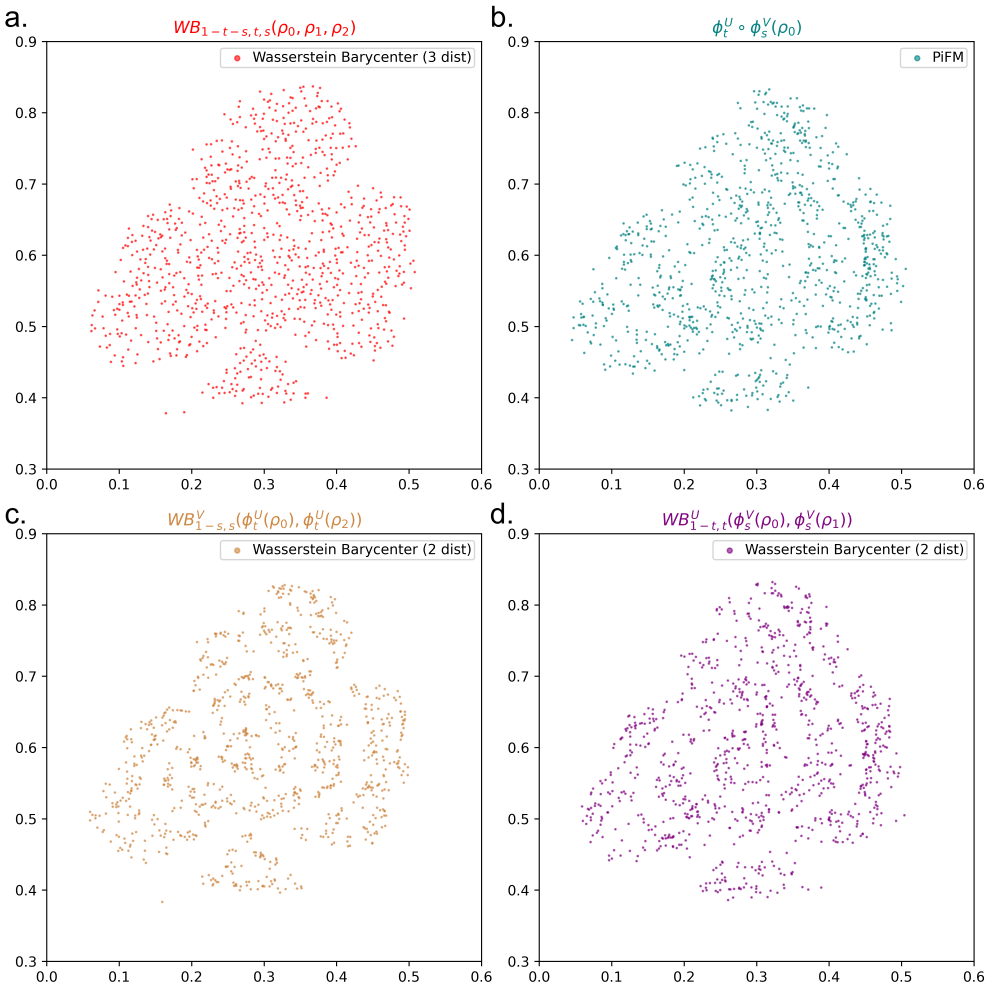}
\caption{\small \textit{Comparison between output distributions from a) Wasserstein Barycenters with weights $(1-t-s, t, s) = (0.1, 0.6, 0.3) $ (using Free Support Algorithm) b) PiFM model (ours) with $t=0.6, s=0.3$, c) vertical Wasserstein barycenter with  $\lambda = (1-t, t)  = (0.4, 0.6)$ computed using Free Support Algorithm of $\phi_{t, s=0}(\rho_0)$ and $\phi_{t, s=1}(\rho_2)$ with $t=0.6$, d) horizontal Wasserstein barycenter  $ \lambda= (1-s, s) = (0.7, 0.3) $ computed using Free Support Algorithm of $\phi_{t=0, s}(\rho_0)$ and $\phi_{t=1, s}(\rho_1)$ with $s=0.3$}}
\label{fig:Horizontal_Verttical_WB_PiFM_csc}
\end{figure}

\begin{figure}[H]
\centering
\includegraphics[width=0.65\columnwidth]{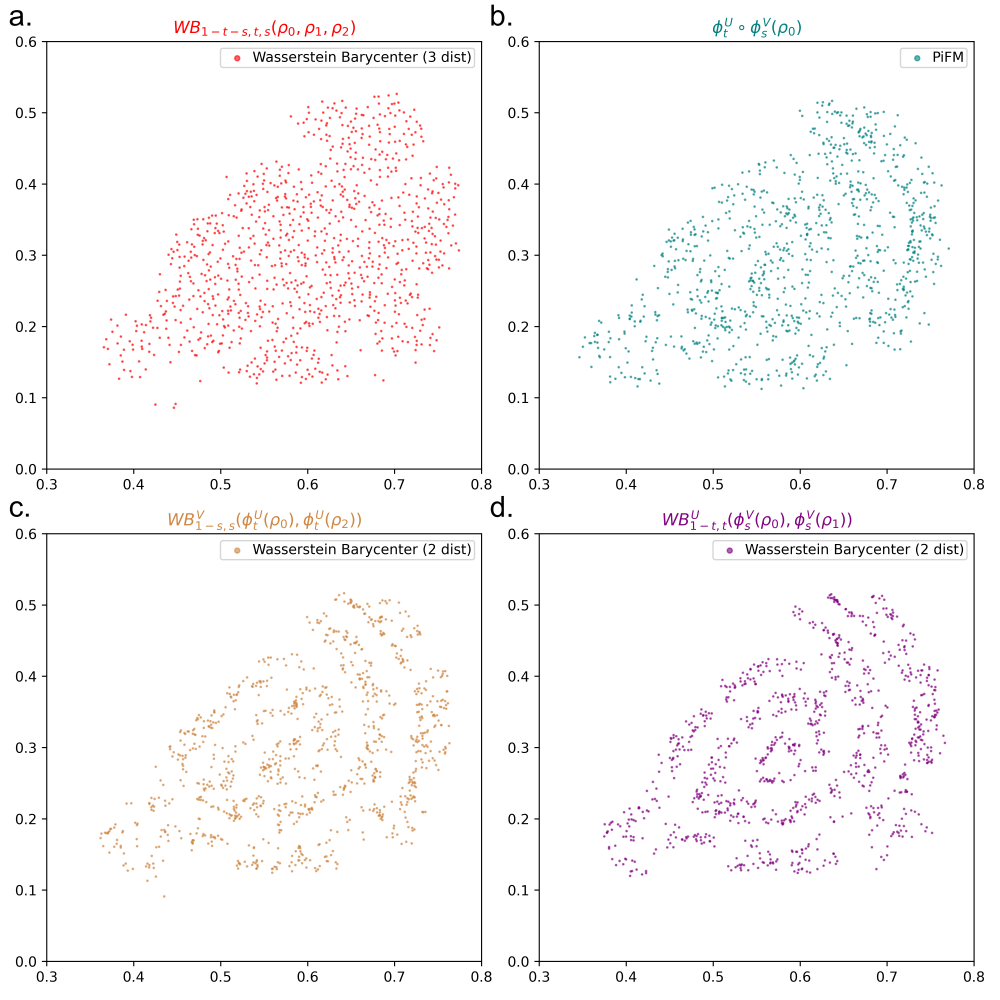}
\caption{\small \textit{Comparison between output distributions from a) Wasserstein Barycenters with weights $(1-t-s, t, s) = (0.1, 0.3, 0.6) $ (using Free Support Algorithm) b) PiFM model (ours) with $t=0.3, s=0.6$, c) vertical Wasserstein barycenter with $\lambda = (1-t, t) = (0.7, 0.3)$ computed using Free Support Algorithm of $\phi_{t, s=0}(\rho_0)$ and $\phi_{t, s=1}(\rho_2)$ with $t=0.3$, d) horizontal Wasserstein barycenter $\lambda= (1-s, s) = (0.4, 0.6)$ computed using Free Support Algorithm of $\phi_{t=0, s}(\rho_0)$ and $\phi_{t=1, s}(\rho_1)$ with $s=0.6$}}
\label{fig:Horizontal_Verttical_WB_PiFM_csc2}
\end{figure}

%%%%%%%%%%%%%%%%%%%%%%%%%%%%%%%%%%%%%%%%%%%%%%%%%%%%%%%%%%%%

\newpage

\end{document}